\documentclass[journal]{IEEEtran}
\usepackage{graphicx}
\usepackage{array}
\usepackage[caption=false,font=footnotesize,labelfont=footnotesize]{subfig}
\usepackage{textcomp}
\usepackage{stfloats}
\usepackage{url}
\usepackage{verbatim}
\usepackage{cite}
\usepackage{xcolor}
\usepackage{hyperref}

\hypersetup{
colorlinks=true,
linkcolor=blue,
urlcolor=blue,
citecolor=blue
}

\hyphenation{op-tical net-works semi-conduc-tor IEEE-Xplore}
\usepackage{multirow}
\usepackage{colortbl}
\usepackage{amsthm}
\newtheorem{theorem}{Theorem}

\usepackage{booktabs}
\usepackage{algorithm}
\usepackage{algpseudocode}
\usepackage{amsmath,amssymb,amsfonts}

\usepackage{pgfplots}
\newtheorem{definition}{Definition}
\pgfplotsset{compat=1.18}

\usepackage{xspace}
\newcommand{\ie}{\textit{i.e.}\xspace}
\newcommand{\eg}{\textit{e.g.}\xspace}

\usepackage{tabularx}
\usepackage{makecell}
\usepackage{arydshln}
\setlength\dashlinedash{2pt}
\setlength\dashlinegap{1pt}
\setlength\arrayrulewidth{0.4pt}
\usepackage{paralist}
\usepackage{enumitem}
\usepackage{diagbox}

\newcommand{\rev}[1]{{\color{black}#1}}
\newcommand{\revminor}[1]{{\color{black}#1}}

\makeatletter
\newcommand{\manualLabel}[2]{%
  \@namedef{r@#1}{{#2}{1}}%
}
\makeatother


\begin{document}

\title{Boosting Adversarial Transferability with Low-Cost Optimization\\ via Maximin Expected Flatness}

\author{Chunlin~Qiu,
        Ang~Li,
        Yiheng~Duan, 
        Shenyi~Zhang,~\IEEEmembership{Graduate Student Member,~IEEE}, 
        Yuanjie~Zhang, \\
        Lingchen~Zhao,
        Qian~Wang,~\IEEEmembership{Fellow,~IEEE}

\thanks{This work was supported in part by the National Science Foundation of China under Grant U2441240 (``Ye Qisun'' Science Foundation), Grant 62441238 and Grant 62302344. (Corresponding author: Qian Wang.)}


\thanks{Chunlin~Qiu, Ang~Li, Yiheng~Duan, Shenyi~Zhang, Yuanjie~Zhang, Lingchen~Zhao, Qian~Wang are with Key Laboratory of Aerospace Information Security and Trusted Computing, Ministry of Education, School of Cyber Science and Engineering, Wuhan University, Wuhan 430072, China (e-mail: \{chunlinqiu, anglii, yihengduan, shenyizhang, yuanjiezhang, lczhaocs, qianwang\}@whu.edu.cn).}

\thanks{This article has supplementary downloadable material available at http://ieeexplore.ieee.org, provided by the authors. The material includes a PDF file containing mathematical proofs and additional experimental results.}

}

\markboth{IEEE TRANSACTIONS ON INFORMATION FORENSICS AND SECURITY}%
{Qiu \MakeLowercase{\textit{et al.}}:Boosting Adversarial Transferability with Low-Cost Optimization via Maximin Expected Flatness}

\maketitle
\begin{abstract}

Transfer-based attacks craft adversarial examples on white-box surrogate models and directly deploy them against black-box target models, offering \rev{practical} query-free threat scenarios. While flatness-enhanced methods have recently emerged to improve transferability by enhancing the loss surface flatness of adversarial examples, their divergent flatness definitions and heuristic attack designs suffer from unexamined optimization limitations and missing theoretical foundation, thus constraining their effectiveness and efficiency. This work exposes the severely imbalanced exploitation-exploration dynamics in flatness optimization, establishing the first theoretical foundation for flatness-based transferability and proposing a principled framework to overcome these optimization pitfalls. Specifically, we systematically unify fragmented flatness definitions across existing methods, revealing their imbalanced optimization limitations in over-exploration of sensitivity peaks or over-exploitation of local plateaus. To resolve these issues, we rigorously formalize average-case flatness and transferability gaps, proving that enhancing zeroth-order average-case flatness minimizes cross-model discrepancies. Building on this theory, we design a Maximin Expected Flatness (MEF) attack that enhances zeroth-order average-case flatness while balancing flatness exploration and exploitation. Extensive evaluations across \rev{33} models and \rev{43} current transfer-based attacks demonstrate MEF's superiority: it surpasses the state-of-the-art PGN attack by 4\% in attack success rate at half the computational cost and achieves 8\% higher success rate under the same budget. When combined with input augmentation, MEF attains 15\% additional gains against defense-equipped models, establishing new robustness benchmarks. Our code is available at \url{https://github.com/SignedQiu/MEFAttack}.

\end{abstract}

\begin{IEEEkeywords}
Adversarial attack, black-box attack, adversarial transferability, loss surface flatness.
\end{IEEEkeywords}    
\begin{figure}[ht!]
  \centering
  \footnotesize
  \vspace{-20pt}
  \subfloat[VMI~\cite{vmi-vni}]{\includegraphics[width=0.33\linewidth]{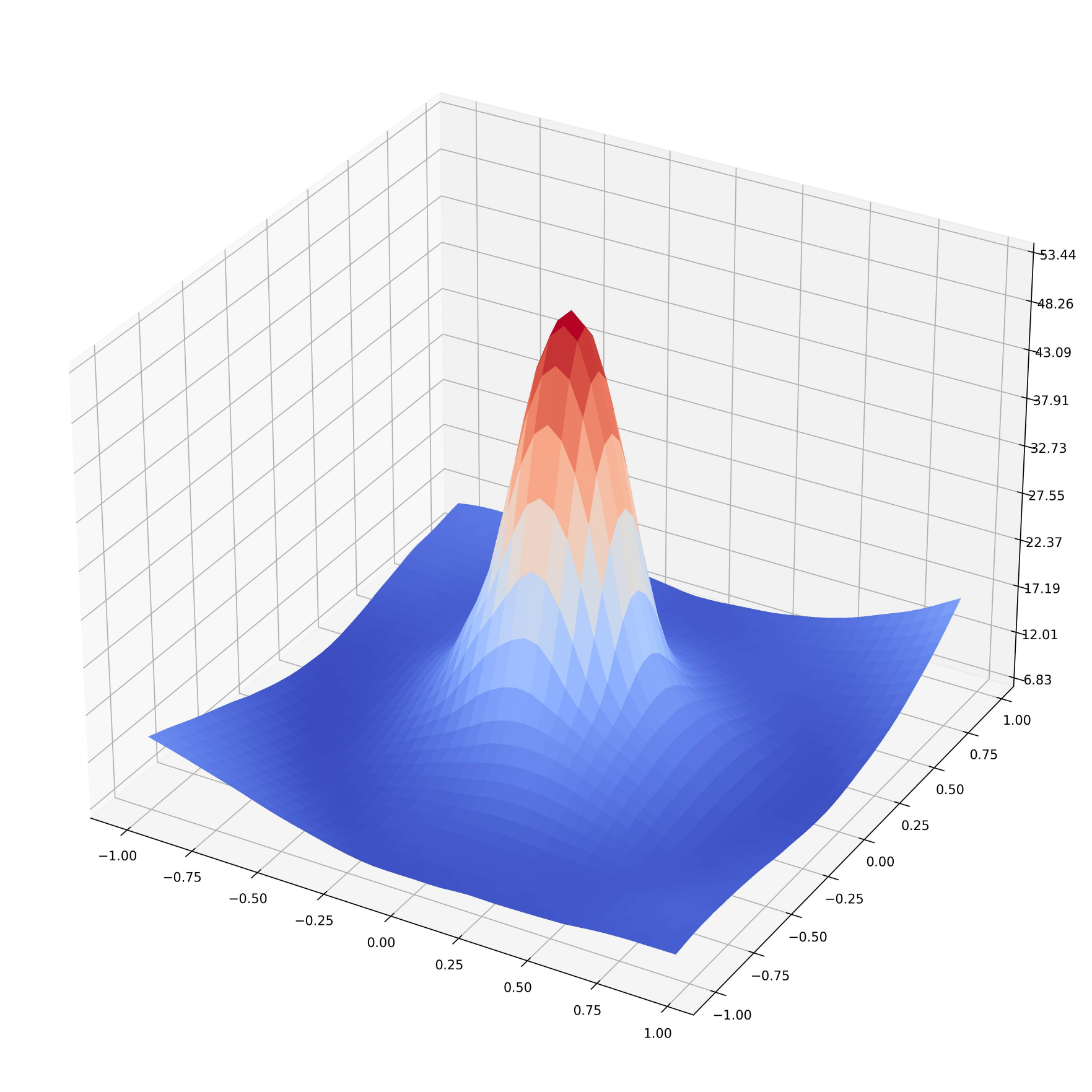}}
  \hfill
  \subfloat[TPA~\cite{tpa}]{\includegraphics[width=0.33\linewidth]{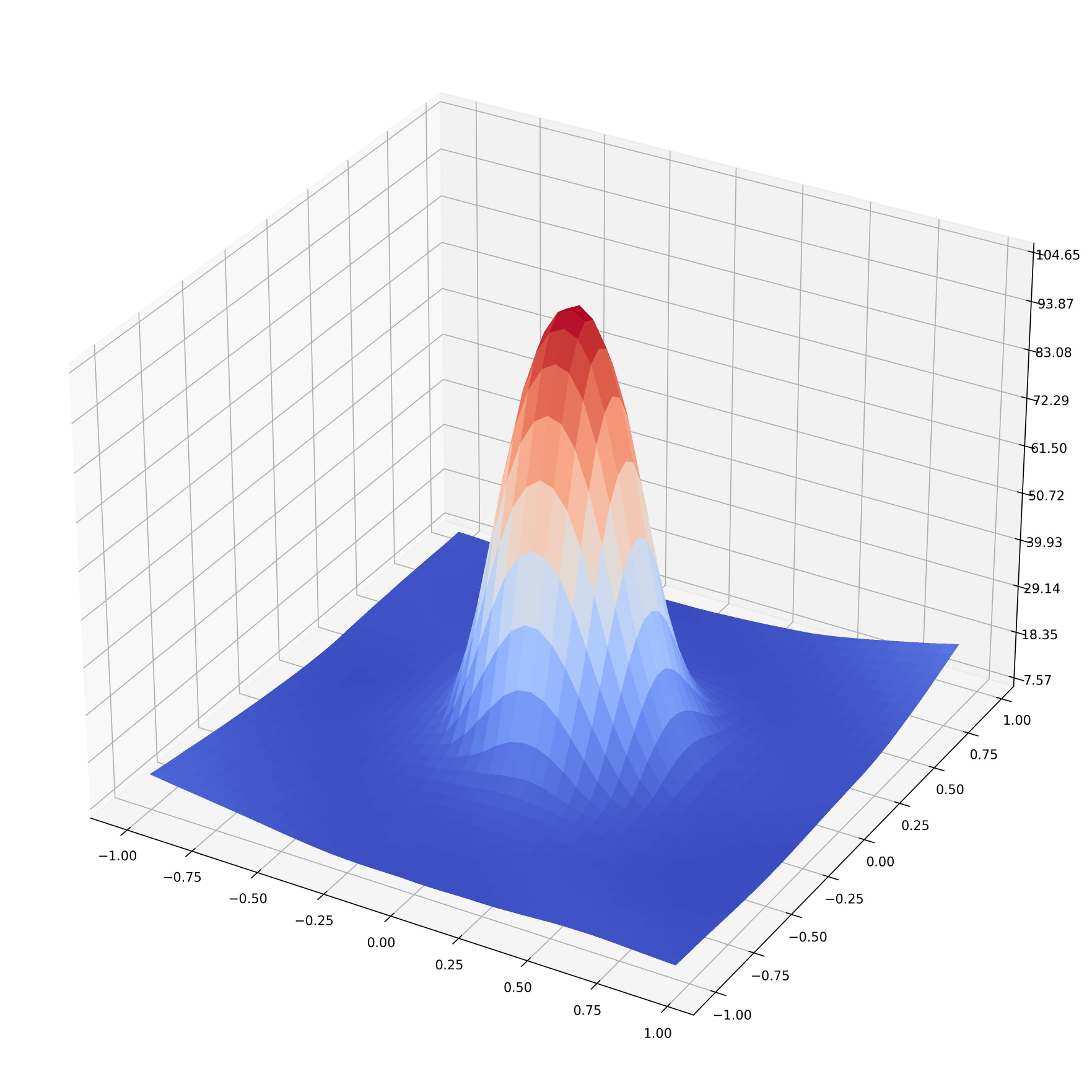}}
  \hfill
  \subfloat[FEM~\cite{femi}]{\includegraphics[width=0.33\linewidth]{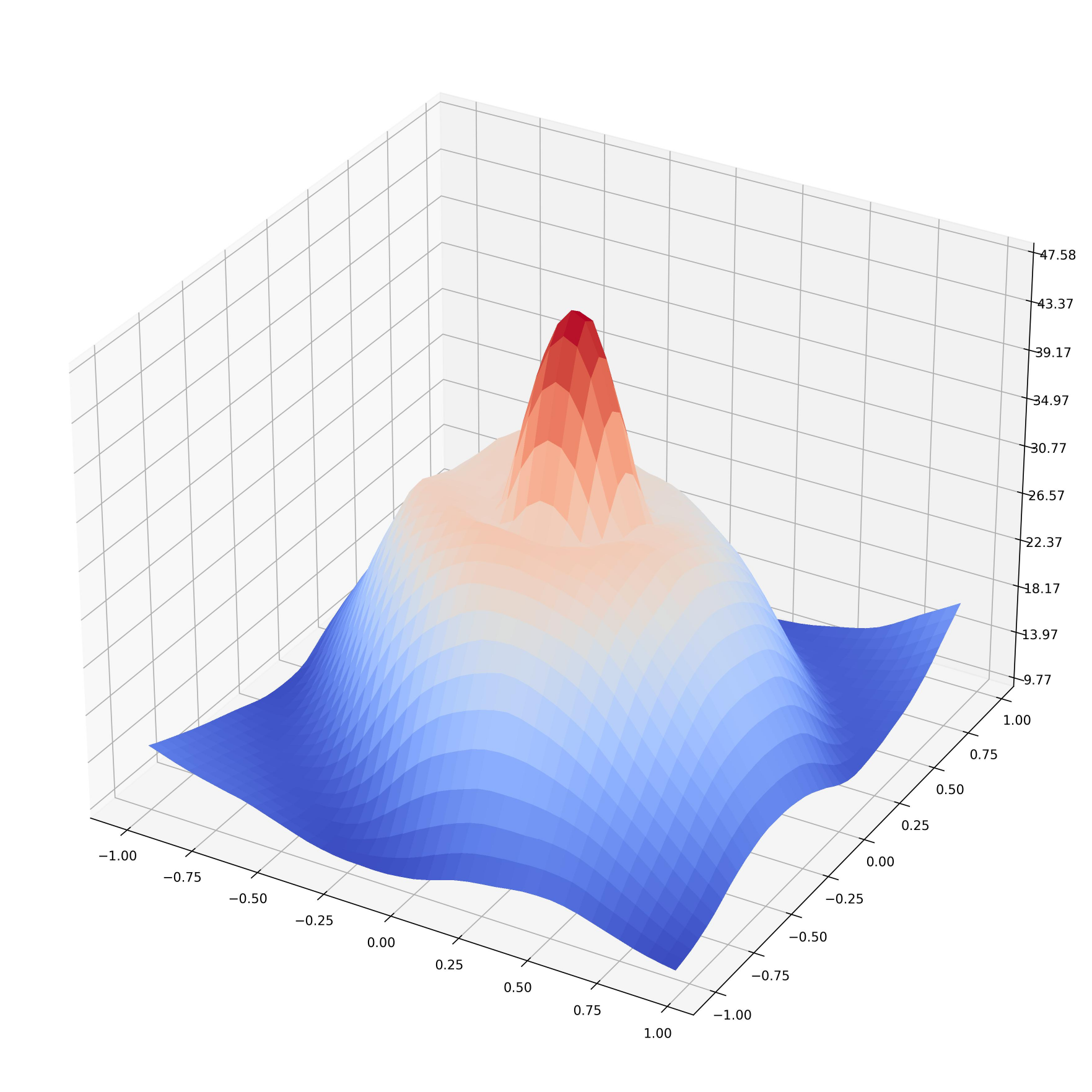}}

  \vspace{-10pt}
  
  \subfloat[EMI~\cite{pi-emi}]{\includegraphics[width=0.33\linewidth]{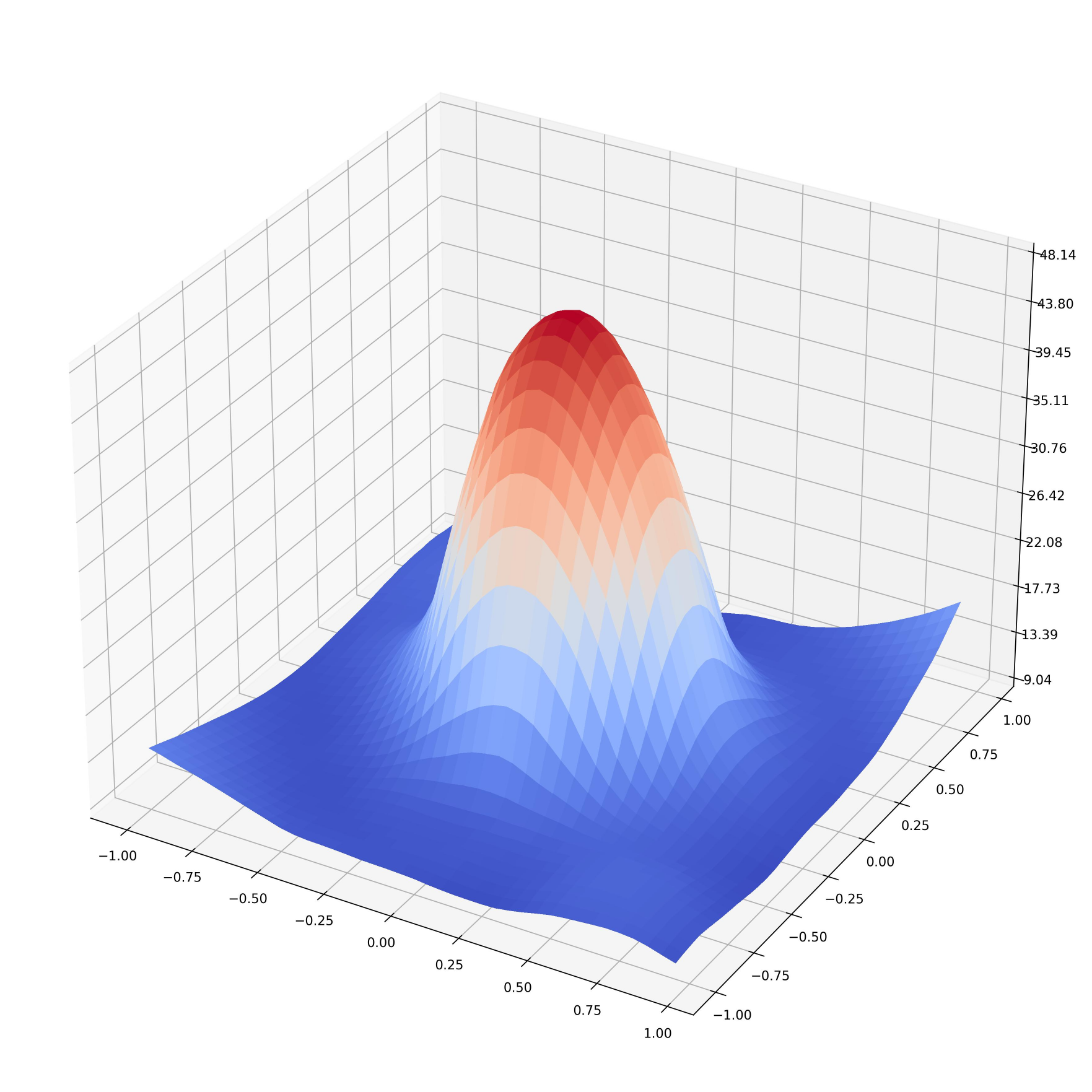}}
  \hfill
  \subfloat[GNP~\cite{gnp}]{\includegraphics[width=0.33\linewidth]{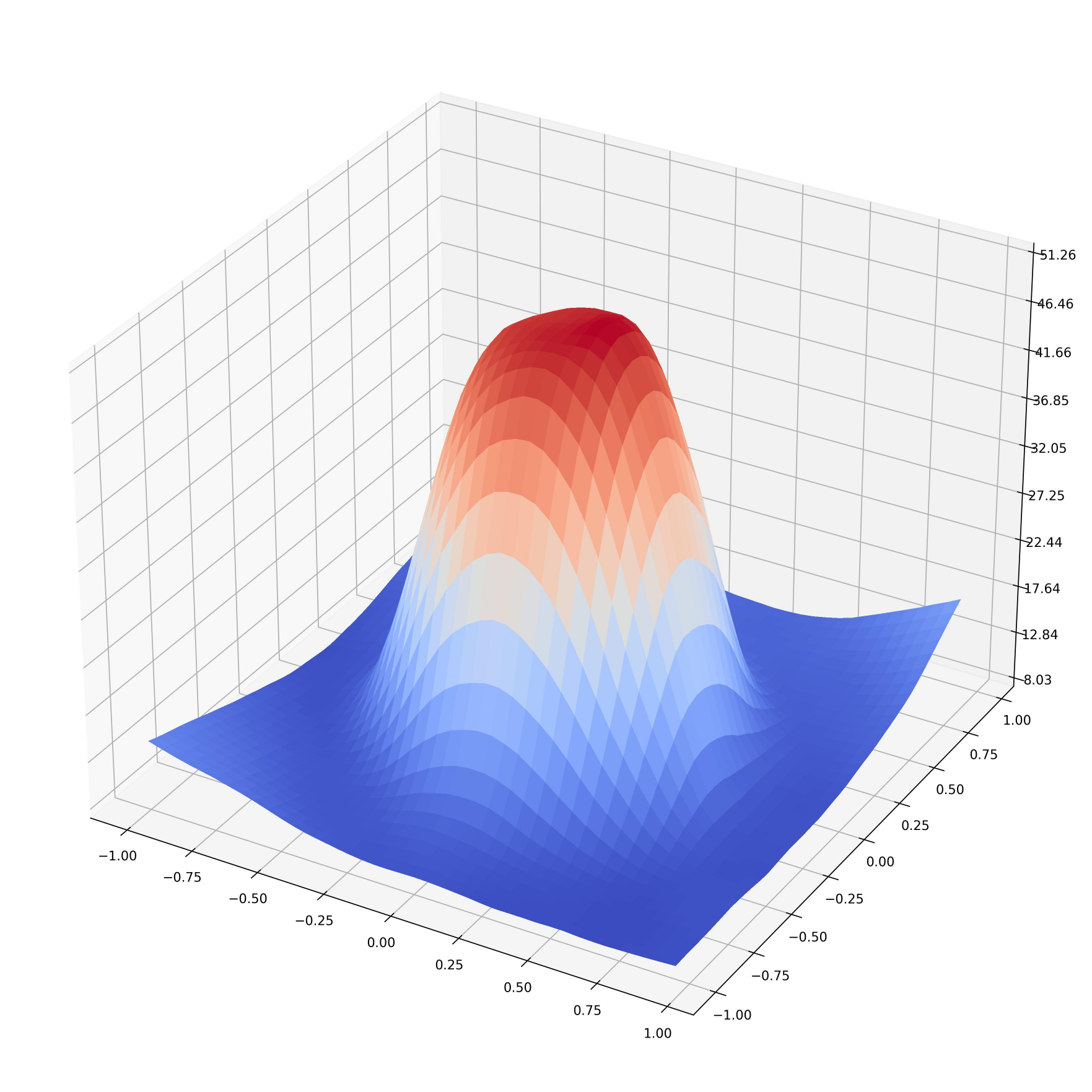}}
  \hfill
  \subfloat[APP~\cite{app}]{\includegraphics[width=0.33\linewidth]{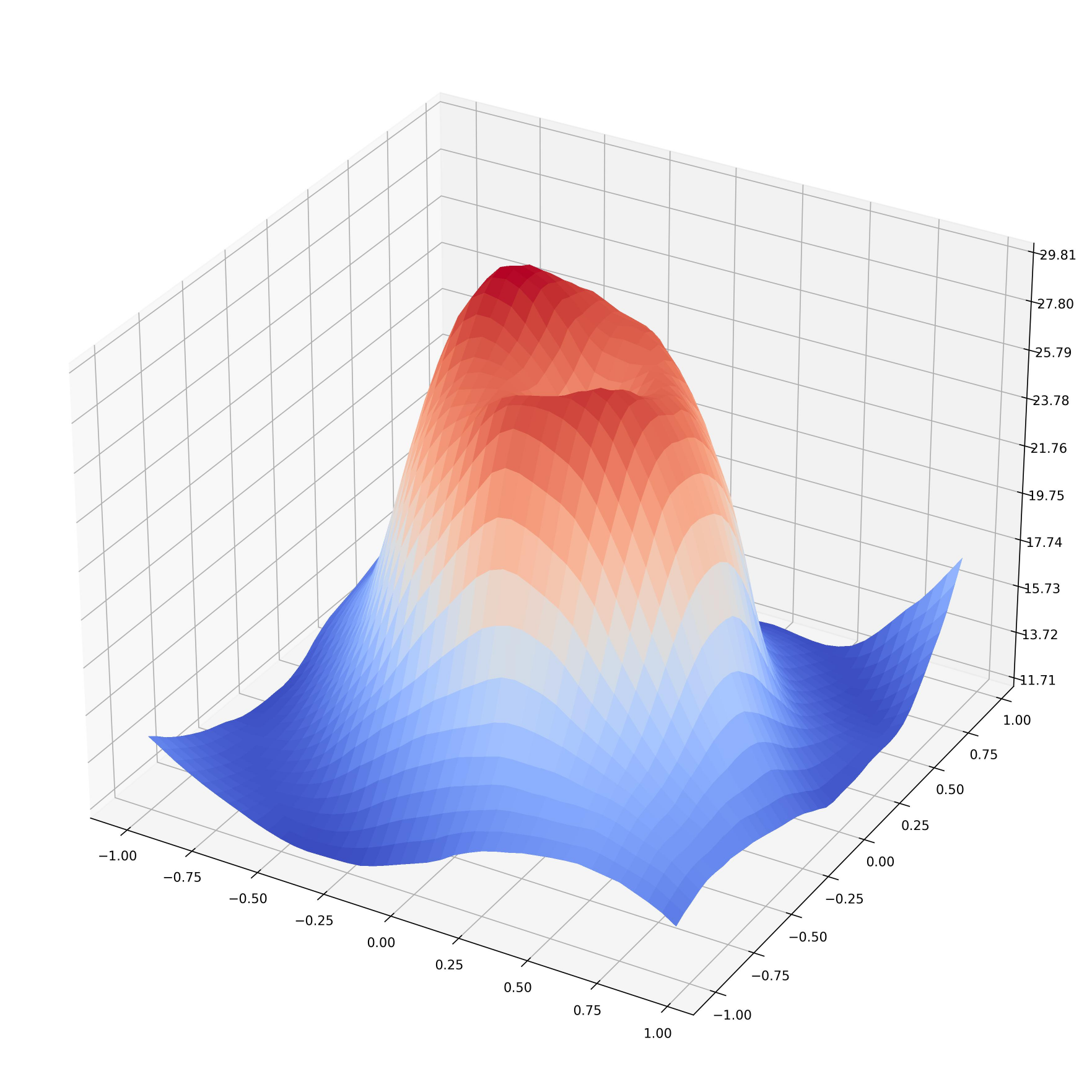}}

  \vspace{-10pt}
  
  \subfloat[RAP~\cite{rap}]{\includegraphics[width=0.33\linewidth]{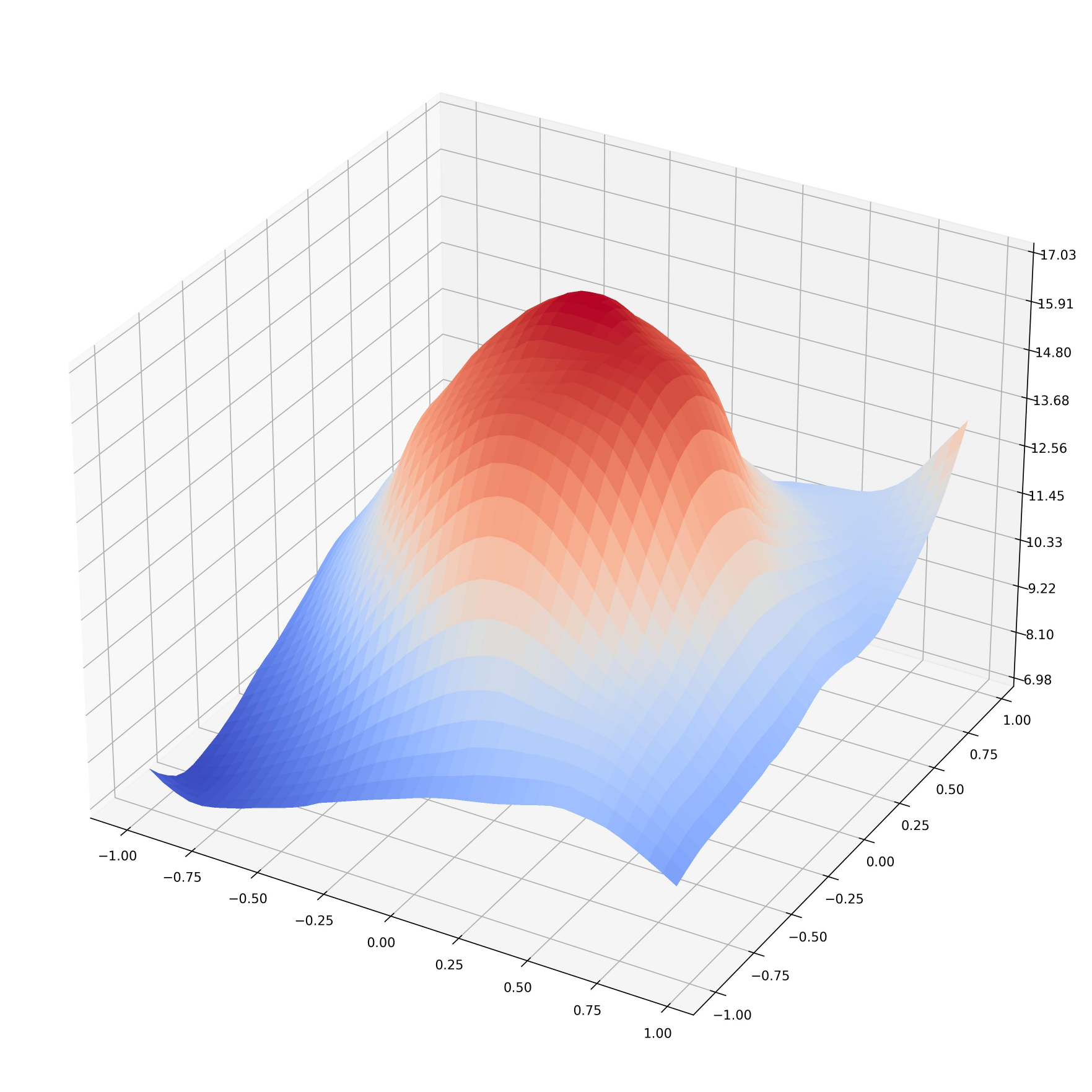}}
  \hfill
  \subfloat[PGN~\cite{pgn}]{\includegraphics[width=0.33\linewidth]{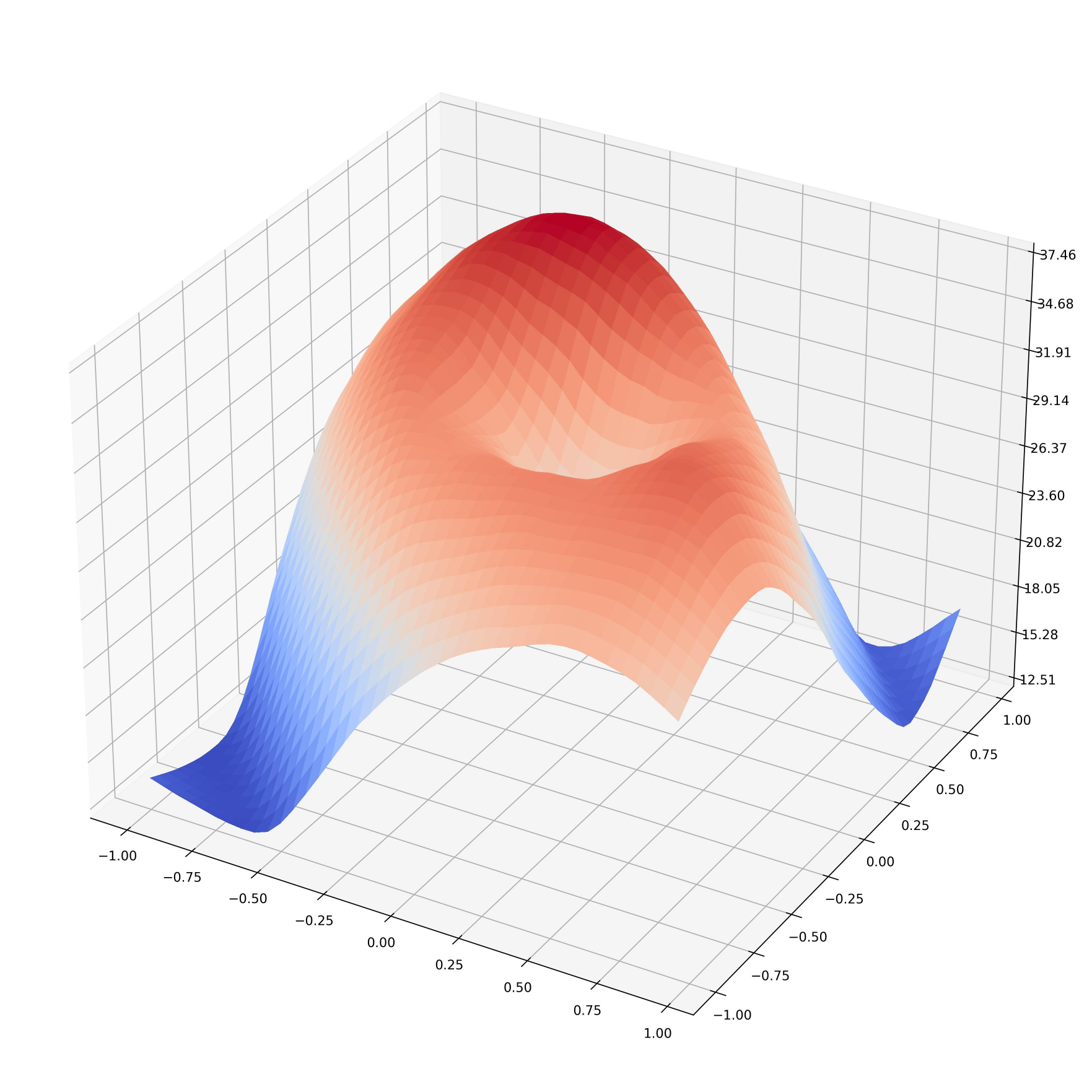}}
  \hfill
  \subfloat[MEF]{\includegraphics[width=0.33\linewidth]{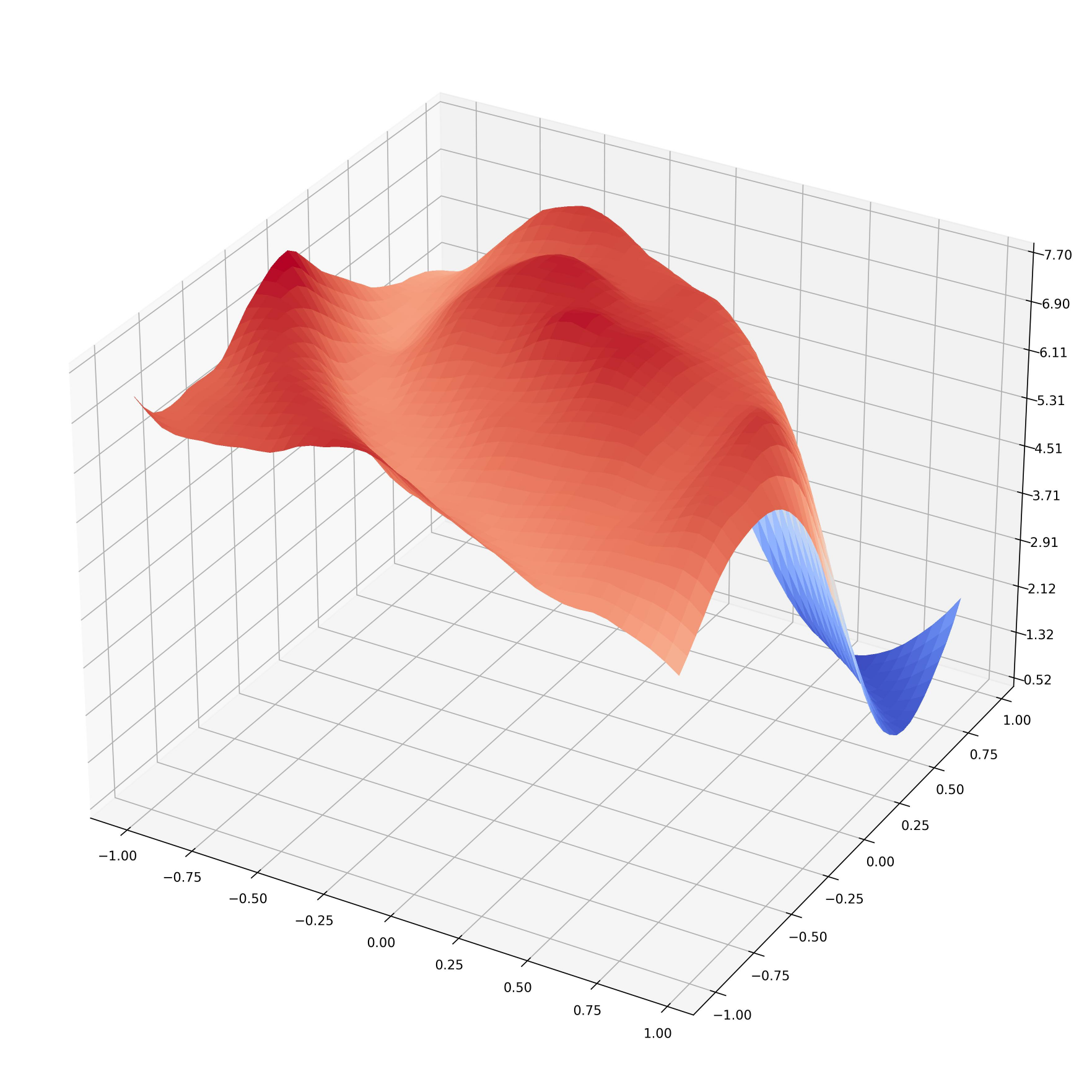}}
  \caption{Visualization of adversarial loss landscapes for nine attacks on Res-50~\cite{resnet}. The loss surfaces are constructed by perturbing adversarial examples along two random directions. Our MEF attack achieves the flattest loss landscape, indicating superior flatness-based transferability.}
  \vspace{-10pt}
  \label{fig:pdf_teaser}
\end{figure}
\section{Introduction}
\label{sec:intro}

\IEEEPARstart{A}{dversarial} examples~\cite{szegedy2013intriguing, pgd} pose a significant security threat to deep neural networks (DNNs). By introducing imperceptible perturbations to benign inputs, attackers can craft adversarial examples that mislead DNNs into producing erroneous predictions. Early adversarial attacks primarily operated under two paradigms: white-box attacks that require full access to the target model’s architecture and parameters, such as L-BFGS~\cite{szegedy2013intriguing} and PGD~\cite{pgd}, and query-based grey-box attacks that rely on iterative adversarial direction estimation through repeated model queries, as seen in Boundary Attack~\cite{boundaryattack} and SimBA~\cite{simba}. While effective, these methods face critical deployment barriers: white-box attacks assume unrealistic access to victim models, while query-based methods incur prohibitive query overhead, making them impractical in real-world scenarios.

Transfer-based attacks~\cite{di, tim, mi, vmi-vni, ni, tpa} overcome these limitations by crafting adversarial examples on surrogate models and directly transferring them to black-box targets. This paradigm eliminates both dependency on target model's internals and query interactions, making it a practical threat to real-world systems. Among these attacks, a recent class of flatness-enhanced methods~\cite{femi, rap, pgn, tpa} has emerged as a promising direction by linking the geometric flatness of the adversarial loss landscape to adversarial transferability. Flatness-enhanced methods craft adversarial examples with flatter neighborhood loss landscapes, enhancing cross-model robustness and achieving state-of-the-art transferability~\cite{pgn}.

Despite their empirical success, flatness-enhanced methods face two fundamental limitations. Firstly, they adopt fragmented flatness definitions without systematic comparison, failing to discern their inherent advantages and critical flaws. Secondly, they rely on the empirically assumed but unproven connection between flatness and transferability~\cite{rap,pgn,gnp,femi}, leading to non-rigorous and suboptimal optimization.

To address these limitations, we first unify fragmented flatness definitions within a multi-order framework that differentiates between worst-case and average-case flatness. Through experiments, we demonstrate that worst-case flatness optimization over-explores high-curvature regions and average-case methods over-exploit suboptimal plateaus, two critical flaws responsible for the performance bottleneck in existing attacks. We further establish the first theoretical proof linking flatness to transferability, formalizing that enhancing multi-order average-case flatness reduces cross-model discrepancies. Our theoretical analysis certifies zeroth-order flatness as the dominant transferability source, paving the way for high-transferability attacks with reduced computational costs.

Based on the empirical and theoretical analysis, we propose the Maximin Expected Flatness (MEF) attack to enhance adversarial transferability while reducing computational cost. Our key insight is to leverage the complementary strengths of average-case and worst-case flatness optimization, where average-case methods stabilize local exploitation while worst-case strategies preserve global exploration capability. MEF attack explicitly optimizes \textit{zeroth-order worst-neighborhood average flatness}, a unified metric that internally computes neighborhood-level average-case flatness to stabilize local exploitation and externally identifies worst-case regions to guide global exploration. MEF implements this via gradient-guided neighborhood conditional sampling (NCS) for efficient worst-region identification and gradient balancing optimization (GBO) that reuses gradients across stages, reducing computation by 50\% while enhancing transferability. By coupling these mechanisms, MEF achieves flattest loss landscape (as shown in Fig.~\ref{fig:pdf_teaser}) and enhanced adversarial transferability. Third-party evaluation on the widely-adopted \href{https://github.com/Trustworthy-AI-Group/TransferAttack}{TransferAttack} benchmark shows MEF dominates \rev{26} gradient-based methods as the new leader. Our main contributions are summarized as follows:

\begin{itemize}  
\item We unify fragmented flatness definitions across existing methods and establish the first theoretical foundation of flatness-based transferability, connecting multi-order average-case flatness with adversarial transferability.  

\item We propose Maximin Expected Flatness (MEF) attack, together with Neighborhood Conditional Sampling (NCS) and Gradient Balancing Optimization (GBO) to boost adversarial transferability with efficient computation.

\item Extensive experiments and third-party evaluation\footnote{Available in: https://github.com/Trustworthy-AI-Group/TransferAttack} validate the superior effectiveness of MEF attack, surpassing the SOTA PGN~\cite{pgn} attack by 4\% in attack success rate on average at half the computational cost and achieves 8\% higher success rate under the same budget.
\end{itemize}

\section{Related Work}
\label{sec:related_work}

This section systematically reviews adversarial machine learning literature through two lenses: (1) we survey transfer-based adversarial attacks, and (2) analyze various defense mechanisms, including both reactive and proactive defenses.

\subsection{Adversarial Attacks}
\label{sec:adversarial_attacks}
We first give the formal definition of adversarial attacks and then introduce various transfer-based attacks. Adversarial attacks craft perturbed examples $\mathbf{x}^{adv} \in B_\epsilon(\mathbf{x}) = \{\mathbf{x}' : \|\mathbf{x}' - \mathbf{x}\|_p \leq \epsilon\}$ to fool a neural network $\mathcal{F}: \mathcal{X} \rightarrow \mathcal{Y}$, where $\mathbf{x} \in \mathcal{X}$ is the clean input with label $y \in \mathcal{Y}$. The attack objective:
\revminor{
\begin{equation}
\underset{\mathbf{x}^{adv} \in B_{\epsilon}(\mathbf{x})}{\max} J(\mathbf{x}^{adv}, y; \mathcal{F}),
\label{eq:mp}
\end{equation}
}
where $J: \mathcal{X} \times \mathcal{Y} \rightarrow \mathbb{R}$ denotes the adversarial loss function. Following standard practice, we adopt the $\ell_\infty$-norm ($p=\infty$) for perturbation constraints. For notation simplicity, we may write $J(\mathbf{x}) \triangleq J(\mathbf{x}, y; \mathcal{F})$. An intriguing property of adversarial examples is transferability, where adversarial examples crafted on surrogate model $\mathcal{F}$ also deceive the unknown target model $\mathcal{F}'$, enabling black-box attacks. Transfer-based attacks can be roughly divided into \emph{\textbf{data-driven methods}}, \emph{\textbf{model-driven methods}} and \emph{\textbf{optimization-driven methods}}, as follows.

\textbf{Data-driven Methods.} Data-driven methods~\cite{di, tim, ni, admix} boost adversarial transferability by aggregating gradients over multiple augmented variants through data transformation strategies. The key distinction between these methods centers on the specific transformation operators. DI~\cite{di} optimizes over resized images, TI~\cite{tim} applies spatial shifting, and SI~\cite{ni} manipulates pixel scaling. Beyond these single-input spatial transformations, Admix~\cite{admix} enhances diversity by blending cross-category images, while SSA~\cite{ssa} perturbs frequency domains to break spatial locality. To further amplify input variety, composite strategies integrate multiple transformations~\cite{l2t,ops}. However, such strategies incur significant computational overheads and induce semantic distortions, ultimately compromising their effectiveness.

\textbf{Model-driven Methods.} Model-driven methods~\cite{sgm, linbp, svre, cwa} exploit geometric priors in surrogate models’ feature spaces to enhance adversarial transferability, primarily through two strategies: model ensemble aligns heterogeneous architectures via joint perturbation optimization to amplify spectral commonality with victims~\cite{svre,cwa}, while surrogate refinement adversarially tunes surrogates or aligns gradients to reshape decision boundaries~\cite{sgm,linbp}. However, model ensemble scales linearly with surrogates' computational costs, and refinement struggles on large datasets due to complex regularization, limiting both to small-scale applications.

\textbf{Optimization-driven Methods.} Optimization-driven methods~\cite{mi,rap,ila,naa}, enhance transferability by redesigning optimization objectives~\cite{ila,naa} or algorithms~\cite{cda,mi}. Advanced Objective strategies replace cross-entropy loss with feature discrepancy losses that align surrogate-victim model features~\cite{ila, naa}. Generative Modeling strategies leverage generative networks to synthesize perturbations without iterative updates~\cite{cda}. Gradient Stabilization strategies retain simple losses but refine gradient dynamics via momentum~\cite{mi,ni,pi-emi} or variance reduction~\cite{vmi-vni}. Compared to the former two paradigms requiring complex losses or architectures, gradient stabilization strategies achieve transferability with minimal complexity, avoiding auxiliary data and models.

Recent advances in gradient stabilization attacks have seen the emergence of flatness-enhanced methods, which propose that adversarial examples within flatter loss regions exhibit stronger transferability. These methods diverge into two strategies: those optimizing \textit{average-case flatness} by aggregating gradients over local neighborhoods (\eg, FEM~\cite{femi}, APP~\cite{app}), and those pursuing \textit{worst-case flatness} by minimizing gradient peaks (PGN~\cite{pgn}) or maximizing minimal loss (RAP~\cite{rap}). However, existing works adopt fragmented flatness definitions while relying on empirically assumed correlations between flatness and transferability. \rev{Crucially, these methods typically employ flatness as a heuristic regularizer without a rigorous theoretical foundation linking specific geometric properties to the transferability gap. This theoretical void leads to suboptimal optimization directions}. Our work bridges this gap by establishing unified flatness-transferability guarantees and adaptive optimization that harmonizes both flatness criteria.

\subsection{Adversarial Defenses}
\label{sec:adversarial_defenses}
Adversarial defenses can be categorized into two paradigms: \emph{\textbf{proactive defenses}} that strengthen model robustness through architectural or parametric adjustments, and \emph{\textbf{reactive defenses}} that deploy external mechanisms to sanitize or detect adversarial inputs without modifying model parameters.

\textbf{Proactive Defenses.} Proactive defenses aim to intrinsically harden model parameters against adversarial perturbations. The dominant approach, adversarial training (AT)~\cite{pgd,advincv3}, reformulates robust model optimization as a min-max optimization problem, where the inner maximization generates perturbations to craft challenging adversarial examples, while the outer minimization enforces the model to correctly classify these perturbed samples. Variants like ensemble adversarial training (EAT)~\cite{eat} augment robustness through multi-model perturbations during training, strengthening defense against cross-architecture transfer attacks. In contrast to AT’s empirical robustness, randomized smoothing~\cite{rs} provides certifiable robustness by injecting Gaussian noise into inputs and certifying predictions over perturbed samples. While this method theoretically guarantees robustness within a certified radius, such radii are often smaller than adversarial perturbations defended by AT, rendering it ineffective against practical adversarial threats despite its theoretical appeal.

\textbf{Reactive Defenses.} Reactive defenses mitigate adversarial attacks without modifying model parameters. Adversarial purification~\cite{hgd,nrp} projects inputs onto clean data manifolds using generative models~\cite{hgd,nrp}, while adversarial detection identifies adversarial examples via activation anomalies or confidence thresholds. These strategies offer plug-and-play deployment but suffer from two notable limitations: purification efficacy depends critically on generative model capacity~\cite{diffpure, densepure}, and detectors can be bypassed by sophisticated adaptive attacks that mimic clean data statistics.

\section{Limitations in Flatness Enhancement}
\label{sec:preliminary}

The heuristic adoption of flatness concepts in flatness-enhanced methods lacks rigorous theoretical grounding while suffering from absence of systematic comparisons between flatness metrics, which constrains their effectiveness and efficiency. In this section, we first formalize flatness as a measurable geometric property, resolving definitional ambiguities across existing methods. Then we identify two inherent limitations in current flatness-enhanced methods. Finally, we distill the critical yet understudied challenges in balancing local exploitation and global exploration for real-world deployment.

\subsection{K-th Order Flatness}
\begin{table*}[htb]
\centering
\caption{Comparative analysis of flatness-enhanced methods, summarizing their conference venues, optimization target, and flatness formulations. We systematically compare six representative methods' mathematical frameworks for achieving flatness.}
\label{tab:flatness_methods}
\footnotesize
\setlength{\tabcolsep}{0pt}
\renewcommand{\tabularxcolumn}[1]{m{#1}}
\newcolumntype{Y}{>{\centering\arraybackslash}X}
\newcolumntype{P}{>{\centering\arraybackslash}m{0.40\textwidth}}
\begin{tabularx}{\textwidth}{@{} l Y P Y @{}}
  \toprule
  Flatness-Enhanced Method & Conference (Year) & Optimization Target & Flatness Formulation \\
  \midrule
  Reverse Adversarial Perturbation (RAP~\cite{rap}) & NeurIPS 2022 & \makecell{$\underset{\mathbf{x}^{adv}\in{B}_{\epsilon}(\mathbf{x})}{max}\,\underset{\mathbf{x'}\in{B}_{\xi}(\mathbf{x}^{adv})}{min}\,J(\mathbf{x'})$} & $\widehat{R}^{(0)}_{\xi}(\mathbf{x})$ \\
  
  Gradient Norm Penalty (GNP~\cite{gnp}) & ICIP 2022 & \makecell{$\underset{\mathbf{x}^{adv}\in{B}_{\epsilon}(\mathbf{x})}{max}\,\underset{\mathbf{x'}\in{B}_{\xi}(\mathbf{x}^{adv})}{min}\,[J(\mathbf{x}^{adv}) - \lambda\cdot{\|\nabla_{\mathbf{x'}}J(\mathbf{x'})\|_{2}}]$} & $\widehat{R}^{(1)}_{\xi}(\mathbf{x})$ \\

  Penalizing Gradient Norm (PGN~\cite{pgn})  & NeurIPS 2023 & \makecell{$\underset{\mathbf{x}^{adv}\in{B}_{\epsilon}(\mathbf{x})}{max}\,\underset{\mathbf{x'}\in{B}_{\xi}(\mathbf{x}^{adv})}{min}\,[J(\mathbf{x}^{adv}) - \lambda\cdot{\|\nabla_{\mathbf{x'}}J(\mathbf{x'})\|_{2}}]$} & $\widehat{R}^{(1)}_{\xi}(\mathbf{x})$ \\

  Flatness Enhanced Method (FEM~\cite{femi}) & ICME 2023 & \makecell{$\underset{\mathbf{x}^{adv}\in{B}_{\epsilon}(\mathbf{x})}{max}\,\mathbb{E}_{\mathbf{x'}\sim \mathrm{Unif}{B}_{\xi}(\mathbf{x}^{adv})}\,\left[J(\mathbf{x'})\right]$} & $\overline{R}^{(0)}_{\xi}(\mathbf{x})$ \\

  Adversarial Pixel Perturbation (APP~\cite{app})  & ICIP 2023 & \makecell{$\underset{\mathbf{x}^{adv}\in{B}_{\epsilon}(\mathbf{x})}{max}\,\mathbb{E}_{\mathbf{x'}\sim \mathrm{Unif}{B}_{\xi}(\mathbf{x}^{adv})}\,\left[J(\mathbf{x'})\right]$} & $\overline{R}^{(0)}_{\xi}(\mathbf{x})$ \\

  Theoretically Provable Attack (TPA~\cite{tpa})  & NeurIPS 2024 & \makecell{$\underset{\mathbf{x}^{adv}\in{B}_{\epsilon}(\mathbf{x})}{max}\,\mathbb{E}_{\mathbf{x'} \in {B}_{\xi}(\mathbf{x}^{adv})}\,{\left[ \|\nabla_{\mathbf{x'}}J(\mathbf{x'})\|_{2}\right]}$} & $\overline{R}^{(1)}_{\xi}(\mathbf{x})$ \\
  \bottomrule
\end{tabularx}
\vspace{-15pt}
\end{table*}
\begin{figure*}[ht]
  \centering
  
  \hspace{-7.75pt}
  \subfloat[RAP~\cite{rap} ($\widehat{R}^{(0)}_{\xi}(\mathbf{x})$)]{\includegraphics[width=0.26\textwidth]{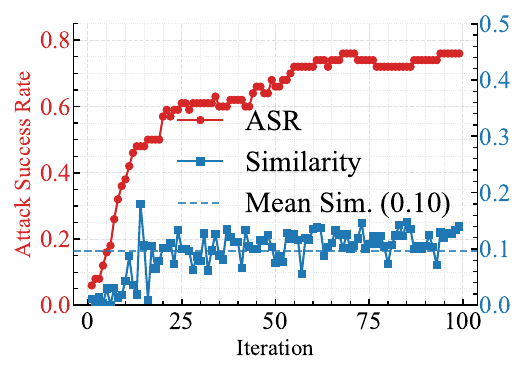}\label{fig:flat_rethink_rap}}
  \hspace{-7.75pt}
  \subfloat[FEM~\cite{femi} ($\overline{R}^{(0)}_{\xi}(\mathbf{x})$)]{\includegraphics[width=0.26\textwidth]{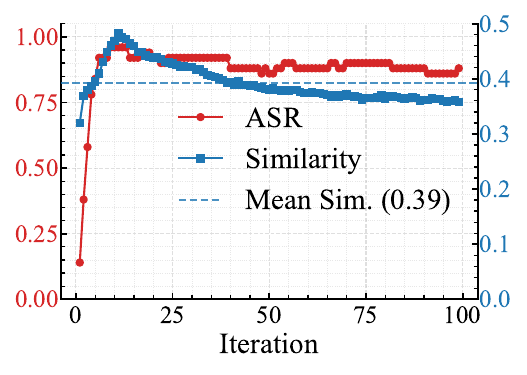}\label{fig:flat_rethink_fem}}
  \hspace{-7.75pt}
  \subfloat[PGN~\cite{pgn} ($\widehat{R}^{(1)}_{\xi}(\mathbf{x})$)]{\includegraphics[width=0.26\textwidth]{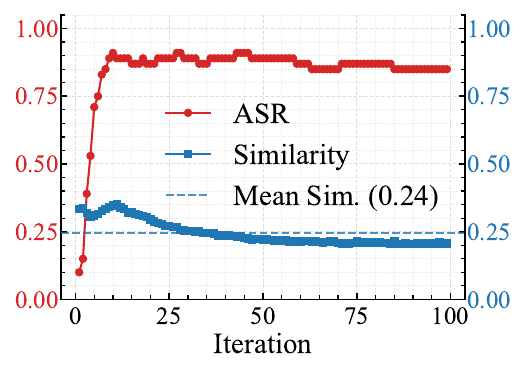}\label{fig:flat_rethink_pgn}}
  \hspace{-7.75pt}
  \subfloat[TPA~\cite{tpa} ($\overline{R}^{(1)}_{\xi}(\mathbf{x})$)]{\includegraphics[width=0.26\textwidth]{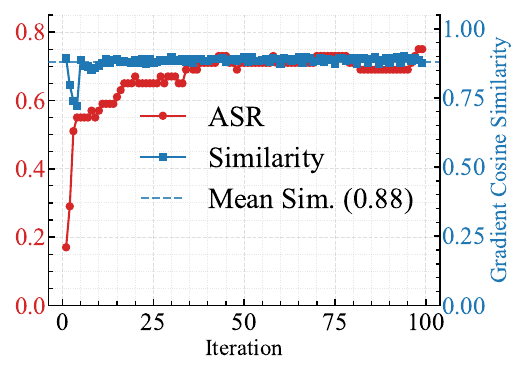}\label{fig:flat_rethink_tpa}}
  \hspace{-7.75pt}
  
  \vspace{-2pt}
  \caption{Optimization dynamics under flatness duality ($\overline{R}^{(n)}_\xi$ vs. $\widehat{R}^{(n)}_\xi$). Comparing zeroth/first-order methods on 100 ImageNet samples~\cite{imagenet}, we measure \textit{inter-update} gradient similarity and transfer attack success rates (Res-50~\cite{resnet}$\rightarrow$Inc-v3~\cite{incv3}). Post ASR convergence, worst-case variants sustain lower gradient similarity (0.10-0.25) than average-case ones (0.38-0.88), revealing exploration-exploitation trade-off governed by flatness formalism.}
  \vspace{-8pt}
  
  \label{fig:flat_rethink}
\end{figure*}
\label{sec:k_order_flatness}
In this section, we introduce the formal definitions of k-order worst- and average-case flatness metrics, establishing a unified taxonomy for flatness enhancement and enabling systematic classification of six flatness-enhanced methods.

The study of loss landscape flatness originated in model generalization theory, where flatter minima were shown to correlate with improved generalization performance~\cite{sam,largebatch,gam}. In model generalization, the most popular mathematical definition of flatness, proposed by SAM~\cite{sam}, considers the maximal loss value within a radius, termed the \textit{zeroth-order worst-case flatness}. Building upon this foundation, Gradient norm Aware Minimization (GAM)~\cite{gam} extends this framework to \textit{first-order worst-case flatness}, a stricter geometric criterion that constrains maximal gradient norms rather than loss values.

Based on SAM's framework~\cite{sam} for minimizing sharpness, RAP~\cite{rap} adapts the \textit{zeroth-order worst-case flatness} to construct transferable adversarial examples. Motivated by RAP~\cite{rap}, Penalizing Gradient Norm (PGN)~\cite{pgn} enhances adversarial transferability by enhancing the \textit{first-order worst-case flatness}. TPA~\cite{tpa} further correlates transferability with first-order average flatness (local mean gradient norms) and second-order gradient component. We give the definition of \textit{$\xi$-radius $n$-order worst-case flatness $\widehat{R}^{(n)}_{\xi}(\mathbf{x})$ and average-case flatness $\overline{R}^{(n)}_{\xi}(\mathbf{x})$} in Definition~\ref{def:k_order_flatness}. Unless otherwise specified, the zeroth‑order derivative is taken to be the loss function itself, \ie $\nabla^0 J(\mathbf{x}) = J(\mathbf{x})$.

\begin{definition}[$\xi$-radius $n$-order Flatness]
\label{def:k_order_flatness}
For any $\xi > 0$ and integer $k \in \mathbb{N}$, the worst- and average-case $\xi$-radius $n$-order flatness of the loss $J(\mathbf{x})$ at sample $\mathbf{x}$ are defined as follows:
\begin{itemize}
    \item \textbf{Worst-case flatness}: 
    \revminor{
    \small
    \setlength{\belowdisplayskip}{0pt}
    \begin{equation}
    \widehat{R}^{(n)}_{\xi}(\mathbf{x}) \triangleq \max_{\mathbf{x}' \in B_{\xi}(\mathbf{x})} \big\| \nabla^n J(\mathbf{x}') - \nabla^n J(\mathbf{x}) \big\|_2,
    \end{equation}
    }
    \item \textbf{Average-case flatness}:
    \revminor{
    \small
    \setlength{\belowdisplayskip}{0pt}
    \begin{equation}
    \overline{R}^{(n)}_{\xi}(\mathbf{x}) \triangleq \mathbb{E}_{\mathbf{x}' \sim \mathrm{Unif}(B_{\xi}(\mathbf{x}))} \big[ \big\| \nabla^n J(\mathbf{x}') - \nabla^n J(\mathbf{x}) \big\|_2 \big],
    \end{equation}
    }
\end{itemize}
\revminor{where $\mathbf{x}'$ denotes a point in $\ell_{\infty}$-ball $B_{\xi}(\mathbf{x}) = \big\{ \mathbf{x}' \in \mathbb{R}^d \,\big|\, \|\mathbf{x}' - \mathbf{x}\|_\infty \le \xi \big\}$ centered at $\mathbf{x}$ with radius $\xi$,} $\nabla^n J(\mathbf{x})$ denotes the $n$‑th order derivative of $J$ at $\mathbf{x}$ and \(\mathrm{Unif}(B_{\xi}(\mathbf{x}))\) denotes the uniform distribution on the set $B_{\xi}(\mathbf{x})$.
\end{definition}

In Table~\ref{tab:flatness_methods}, we summarize six typical flatness-enhanced methods through the lens of our formalized $n$-order flatness metrics. Current methods adopt four distinct flatness criteria: $\widehat{R}^{(0)}_{\xi}$, $\widehat{R}^{(1)}_{\xi}$, $\overline{R}^{(0)}_{\xi}$, and $\overline{R}^{(1)}_{\xi}$. While RAP~\cite{rap}, GNP~\cite{gnp}, and PGN~\cite{pgn} optimize worst-case flatness via maximal loss ($\widehat{R}^{(0)}_{\xi}$) or gradient norm ($\widehat{R}^{(1)}_{\xi}$) constraints, FEM~\cite{femi}, APP~\cite{app}, and TPA~\cite{tpa} focus on average-case objectives, either expected loss ($\overline{R}^{(0)}_{\xi}$) or gradient magnitudes ($\overline{R}^{(1)}_{\xi}$).

To our knowledge, this is the first work that unifies flatness definitions across flatness-enhanced methods and provides a taxonomy distinguishing worst/average-case flatness criteria. While prior work~\cite{robust_flat_minima} quantifies flatness in parameter space for model robustness, our focus centers on sample-wise loss landscape flatness, a distinct perspective aligned with adversarial transferability rather than robust generalization.

\subsection{Rethinking Flatness-Enhanced Methods}
\label{subsec:flatness_rethinking}
Despite the prevalent use of distinct flatness criteria, existing works neither analyze their geometric discrepancies nor characterize their inherent limitations, leading to obscured distinctions, limited performance and suboptimal efficiency. In this section, we analyze the optimization dynamics of different flatness formulation and validate the unrevealed over-exploitation and over-exploration issues in current methods.

The definition duality between average- ($\overline{R}^{(n)}_{\xi}$) and worst-case ($\widehat{R}^{(n)}_{\xi}$) flatness fundamentally governs their optimization behaviors. The expectation in $\overline{R}^{(n)}_{\xi}$ aggregates local gradient variations, prioritizing exploitation of dominant neighborhood. While encouraging smoother updates, this gradient averaging risks over-exploitation to suboptimal plateaus dominated by local gradients. In contrast, $\widehat{R}^{(n)}_{\xi}$'s maximization explicitly probes extreme sensitivity points, demanding exhaustive exploration to identify maximal $n$-th derivative discrepancies. Such over-exploration, however, often oscillates between sensitivity peaks, particularly under practical sampling constraints, due to the non-convexity of adversarial landscapes~\cite{pgd,c&w}.

To validate the over-exploitation and over-exploration issues, we generate adversarial examples on Res-50~\cite{resnet} using four flatness-enhanced methods (RAP~\cite{rap}, FEM~\cite{femi}, PGN~\cite{pgn}, TPA~\cite{tpa}) for 100 iterations. We measure inter-update gradient cosine similarity during optimization and evaluate transfer attack success rates against Inc-v3~\cite{incv3}. \revminor{The gradient similarity serves as an indicator of the optimization dynamics: high similarity signifies consistent update directions for steady exploitation, whereas low values reflect frequent shifts in search directions for broad exploration}.

The experimental results, illustrated in Fig.~\ref{fig:flat_rethink}, reveal a clear empirical divergence. Worst-case flatness optimization exhibit significantly lower inter-iteration gradient similarities (mean similarity: RAP 0.10, PGN 0.25) compared to average-case counterparts (FEM 0.38, TPA 0.88) after convergence. The persistently low gradient similarity in worst-case optimization reveals \revminor{an unstable oscillatory trajectory}, an indicator of over-exploration \revminor{driven by the hypersensitivity of the maximization objective to shifting local extrema in non-convex landscapes}. In contrast, high similarity in average-case optimization suggests over-exploitation of local gradient consensus, leading to premature convergence on suboptimal plateaus \revminor{as the expectation operator traps the search in dominant local directions}. \revminor{Ultimately, both pitfalls lead to premature stagnation of optimization, thereby strictly constraining the maximum achievable adversarial transferability.}

\subsection{Design Challenges}
\label{subsec:design_challenges}
Existing flatness-enhanced methods face two unresolved challenges: theoretical ambiguity and optimization imbalance. First, the heuristic adoption of flatness metrics lacks rigorous theoretical grounding to connect geometric flatness with adversarial transferability~\cite{rap,pgn}, with some work even questioning their inherent correlation~\cite{tpa}, leading to inconsistent metric selection and suboptimal performance. Second, as shown in Section~\ref{subsec:flatness_rethinking}, current methods suffer from a critical exploitation-exploration trade-off: average-case flatness optimization over-exploits local gradient consensus, causing premature convergence to suboptimal plateaus, while worst-case criteria induce over-exploration of non-informative sensitivity peaks, resulting in oscillatory updates. Addressing these challenges requires developing a unified framework that integrates theoretical insights into flatness-transferability relationships with adaptive mechanisms to harmonize local exploitation and global exploration, thereby maximizing adversarial transferability.

\section{Theoretical Analysis}
\label{sec:theory}

Having formalized flatness metrics (Section~\ref{sec:preliminary}), we now resolve the first key challenge: the ambiguous relationship between geometric flatness and adversarial transferability. We first formally define transferability via the adversarial loss gap, then prove that improved multi-order flatness enhances it. This theory not only addresses the heuristic limitations of prior work but also provides principled optimization criteria.

\subsection{Adversarial Transferability Formulation}
\label{subsec:formulation}

To formally characterize transferability, we propose the \emph{Adversarial Transferability Gap (ATG)}. \rev{We premise this definition on three explicit assumptions regarding the surrogate $\mathcal{F}$ and target $\mathcal{F'}$: (1) they share the same label space; (2) they employ the same loss function $J(\cdot)$; and (3) they exhibit aligned performance on clean samples (i.e., $J(\mathbf{x},y;\mathcal{F}) \approx J(\mathbf{x},y;\mathcal{F'})$).}

\begin{definition}[Adversarial Transferability Gap]
\label{def:ATG}
\rev{Given the aligned models $\mathcal{F}$ and $\mathcal{F'}$ (as assumed above), the ATG for an input–label pair \((\mathbf{x},y)\) and perturbation \(\boldsymbol{\delta}\) is defined as:
\revminor{
\small
\begin{equation}
\operatorname{ATG}\bigl((\mathbf{x},y),\boldsymbol{\delta};\mathcal{F},\mathcal{F'}\bigr)
\triangleq J(\mathbf{x}+\boldsymbol{\delta},y;\mathcal{F'}) - J(\mathbf{x}+\boldsymbol{\delta},y;\mathcal{F}).
\end{equation}
}
\noindent \textit{Remark} A detailed justification of this metric's reasonableness and a discussion of its limitations under assumption violations are provided in Section~\ref{sec:atg_justificaton} of the Supplementary Material.}
\end{definition}

In the next section, we will rigorously characterize the relationship between adversarial transferability and loss landscape flatness by leveraging the proposed \emph{adversarial transferability gap} (ATG) and \emph{average-case flatness} ($\overline{R}^{(n)}_{\xi}$).


\subsection{Flatness and Transferability}
\label{sec:flatness_transferability}

Based on the \rev{definition} established in Section~\ref{subsec:formulation}, \rev{particularly in the typical scenario where transferability degrades (i.e., $ATG<0$)}, achieving successful adversarial transfer requires simultaneously addressing two critical objectives: (1) maximizing the adversarial loss $J(\mathbf{x}+\boldsymbol{\delta},y;\mathcal{F})$ on the surrogate model $\mathcal{F}$ to strengthen the attack,, and (2) minimizing $|\operatorname{ATG}|$ to align adversarial effects between the surrogate $\mathcal{F}$ and the target $\mathcal{F'}$. While gradient-ascent attacks (\eg, PGD~\cite{madry2017towards}) readily achieve the first, systematically reducing $|\operatorname{ATG}|$ remains theoretically understudied and challenging. We therefore make this our theoretical focus.

The minimization of \(|\operatorname{ATG}|\) hinges on characterizing the geometric alignment of adversarial perturbations between divergent loss landscapes. We address this by establishing a multi-order flatness-dependent bound on \(|\operatorname{ATG}|\) through Theorem~\ref{thm:flatness-transferability}, which directly connects transferability gaps to three measurable geometric properties: surrogate model flatness \( \bar R^{(n)}_\xi(\mathbf{x};F) \), Target model flatness \( \bar R^{(n)}_\xi(\mathbf{x};F') \), and Cross-model gradient discrepancy \( C_n(\mathbf{x}) \).

\begin{theorem}[Flatness-based Bound on Transferability]
\label{thm:flatness-transferability}
Let $\mathcal{F}$ and $\mathcal{F'}$ be two models with loss $J(\mathbf x,y;\,\cdot\,)$. Assume $J$ is locally approximable by a Taylor expansion of order $N$. For $0 \le n \le N$, define
\revminor{
\small
\begin{equation}
C_n(\mathbf x)
=\mathbb{E}_{\mathbf{x'}\sim \mathrm{Unif}(B_{\xi}(\mathbf{x}))}
  \|\nabla^nJ(\mathbf{x'},y;F')-\nabla^nJ(\mathbf{x'},y;F)\|,
\end{equation}
}
and let $\bar R^{(n)}_{\xi}(\mathbf x;F)$ and $\bar R^{(n)}_{\xi}(\mathbf x;F')$ be the $n$-order flatness for $F$ and $F'$. Then for perturbation $\|\boldsymbol\delta\|\le\xi$, the adversarial transferability gap satisfies
\revminor{
\small
\begin{equation}
\begin{split}
\bigl|\mathrm{ATG}((\mathbf{x},y),\boldsymbol\delta;\mathcal{F},F')\bigr|
&\le \sum_{n=0}^N\frac{\|\boldsymbol\delta\|^n}{n!}
\Bigl[ 
  \bar R^{(n)}_\xi(\mathbf{x};F) \\
  &\quad+ C_n(\mathbf{x}) + \bar R^{(n)}_\xi(\mathbf{x};F') \Bigr] \\
  &\quad+ \Delta R_N(\mathbf{x}, \boldsymbol\delta),
\end{split}
\end{equation}
}
where $\Delta R_N$ bounds the difference in higher-order residuals.
\end{theorem}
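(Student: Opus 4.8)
The plan is to expand the surrogate and target losses around $\mathbf{x}$ with the order-$N$ Taylor formula, subtract the two expansions to isolate $\mathrm{ATG}$ as a weighted sum of \emph{per-order coefficient discrepancies} $\nabla^nJ(\mathbf{x};F')-\nabla^nJ(\mathbf{x};F)$, and then convert each center-point discrepancy into the three averaged geometric quantities via an averaging argument over $B_\xi(\mathbf{x})$.

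First I would write, for each model, the order-$N$ expansion
\[
J(\mathbf{x}+\boldsymbol\delta,y;F)=\sum_{n=0}^N\frac{1}{n!}\bigl\langle\nabla^nJ(\mathbf{x},y;F),\boldsymbol\delta^{\otimes n}\bigr\rangle+R_N(\mathbf{x},\boldsymbol\delta;F),
\]
and likewise for $F'$. Subtracting and collecting the residual difference into $\Delta R_N(\mathbf{x},\boldsymbol\delta)$ gives
\[
\mathrm{ATG}=\sum_{n=0}^N\frac{1}{n!}\bigl\langle\nabla^nJ(\mathbf{x};F')-\nabla^nJ(\mathbf{x};F),\boldsymbol\delta^{\otimes n}\bigr\rangle+\Delta R_N(\mathbf{x},\boldsymbol\delta).
\]
Applying the triangle inequality together with the multilinear bound $|\langle T,\boldsymbol\delta^{\otimes n}\rangle|\le\|T\|\,\|\boldsymbol\delta\|^n$ to each summand then yields
\[
\bigl|\mathrm{ATG}\bigr|\le\sum_{n=0}^N\frac{\|\boldsymbol\delta\|^n}{n!}\bigl\|\nabla^nJ(\mathbf{x};F')-\nabla^nJ(\mathbf{x};F)\bigr\|+\bigl|\Delta R_N(\mathbf{x},\boldsymbol\delta)\bigr|.
\]

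The crux is to bound the center-point discrepancy $\|\nabla^nJ(\mathbf{x};F')-\nabla^nJ(\mathbf{x};F)\|$ by $\bar R^{(n)}_\xi(\mathbf{x};F)+C_n(\mathbf{x})+\bar R^{(n)}_\xi(\mathbf{x};F')$. For an arbitrary $\mathbf{x}'\in B_\xi(\mathbf{x})$ I would insert the intermediate tensors $\nabla^nJ(\mathbf{x}';F')$ and $\nabla^nJ(\mathbf{x}';F)$ and split by the triangle inequality into a within-$F'$ deviation $\|\nabla^nJ(\mathbf{x}';F')-\nabla^nJ(\mathbf{x};F')\|$, a cross-model term $\|\nabla^nJ(\mathbf{x}';F')-\nabla^nJ(\mathbf{x}';F)\|$, and a within-$F$ deviation $\|\nabla^nJ(\mathbf{x}';F)-\nabla^nJ(\mathbf{x};F)\|$. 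Because the left-hand side does not depend on $\mathbf{x}'$, taking the expectation over $\mathbf{x}'\sim\mathrm{Unif}(B_\xi(\mathbf{x}))$ leaves it unchanged while the three right-hand terms become exactly $\bar R^{(n)}_\xi(\mathbf{x};F')$, $C_n(\mathbf{x})$, and $\bar R^{(n)}_\xi(\mathbf{x};F)$ by Definition~\ref{def:k_order_flatness} and the definition of $C_n$. Substituting this bound term-by-term into the last display completes the argument.

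The main obstacle I expect is the careful treatment of the higher-order terms. The multilinear bound requires the tensor norm to be compatible with the norm used in Definition~\ref{def:k_order_flatness}, so I would either fix $\|\cdot\|$ to be the operator norm throughout or dominate it by the Frobenius norm appearing in the flatness definitions. In parallel, the residual difference $\Delta R_N$ must be shown genuinely controllable under the stated local order-$N$ Taylor-approximability hypothesis; making precise what ``bounds the difference in higher-order residuals'' means (\eg a Lagrange-type remainder taken uniformly over $B_\xi(\mathbf{x})$) is the step most in need of additional regularity assumptions.
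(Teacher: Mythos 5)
Your proposal is correct and follows essentially the same route as the paper's proof: an order-$N$ Taylor expansion of both losses around $\mathbf{x}$, subtraction to isolate the per-order coefficient discrepancies $T_n(\mathbf{x})=\nabla^nJ(\mathbf{x};F')-\nabla^nJ(\mathbf{x};F)$, the multilinear bound $|T_n(\mathbf{x})[\boldsymbol\delta^{\otimes n}]|\le\|T_n(\mathbf{x})\|\,\|\boldsymbol\delta\|^n$, and the same insertion-plus-averaging argument over $B_\xi(\mathbf{x})$ that splits each $\|T_n(\mathbf{x})\|$ into $\bar R^{(n)}_\xi(\mathbf{x};F)+C_n(\mathbf{x})+\bar R^{(n)}_\xi(\mathbf{x};F')$. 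Your closing caveats about tensor-norm compatibility and the precise control of $\Delta R_N$ identify genuine gaps that the paper itself leaves implicit, so they strengthen rather than diverge from the argument.
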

The proof of Theorem~\ref{thm:flatness-transferability} \rev{(see Section~\ref{proof:flat_transfer} in Supplementary Material)} employs finite-order multivariate Taylor expansions to decompose \(\operatorname{ATG}\) into curvature alignment terms plus a residual bound. This framework reveals that suppressing multi-order flatness variations and gradient discrepancies (\( C_n\)) minimizes the transferability gap. \rev{Crucially, $C_n(\mathbf{x})$ quantifies intrinsic \textit{surrogate-target dissimilarity}; a larger $C_n$ loosens the bound, acting as a natural barrier to transfer attack success.}

In transfer-based attacks where only surrogate $\mathcal{F}$ is accessible, optimizing source flatness $\bar R^{(n)}_\xi(\mathbf{x};F)$ becomes the sole effective strategy, as target flatness and $C_n(\mathbf{x})$ are uncontrollable. While TPA~\cite{tpa} linked flatness to transferability, it was limited to first-order analysis ($n=1$). Our theory incorporates higher-order interactions ($n \ge 1$), formalizing why heuristics~\cite{rap,gnp,pgn,femi,app} succeed. Theorem~\ref{thm:flatness-transferability} also shows that high-order terms diminish rapidly due to the $\frac{\|\boldsymbol\delta\|^n}{n!}$ coefficient. For ReLU networks where high-order derivatives vanish ($N \approx 1$), the zeroth-order flatness $\bar R^{(0)}_\xi(\mathbf{x};F)$ becomes the dominant factor, establishing it as an efficient optimization target.

Theorem~\ref{thm:flatness-transferability} also demonstrates that the impact of higher-order terms ($n\geq1$) diminishes exponentially due to the $\frac{\|\boldsymbol\delta\|^n}{n!}$ coefficient under $\ell_p$-bounded perturbations ($\|\boldsymbol\delta\|\ll1$), rendering zeroth-order flatness $\bar R^{(0)}_\xi(\mathbf{x};F)$, which governs expected loss variations in $B_{\xi}(\mathbf{x})$, the dominant contributor to transferability. This establishes zeroth-order flatness optimization as an efficient mechanism for maximizing adversarial transferability with theoretical guarantees.

\section{Method}
\label{sec:method}
\begin{figure*}[t]
    \centering
    
    \begin{minipage}{0.35\textwidth}
        \centering
        \includegraphics[width=\linewidth]{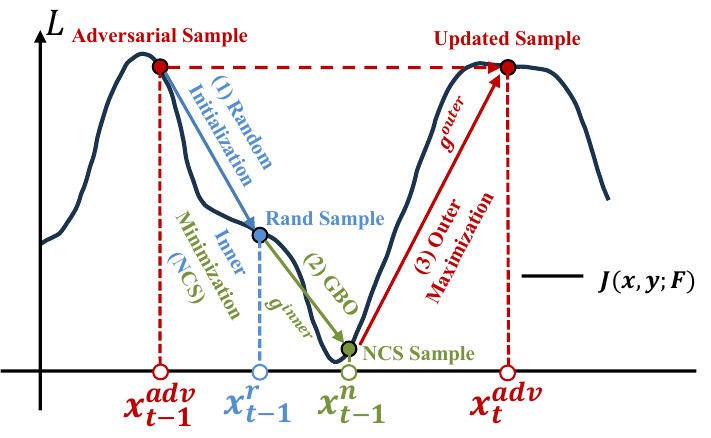}
        \centerline{\rev{(a) The Workflow of MEF Optimization}}
    \end{minipage}
    \hspace{30pt}
    \begin{minipage}{0.35\textwidth}
        \centering
        \includegraphics[width=\linewidth]{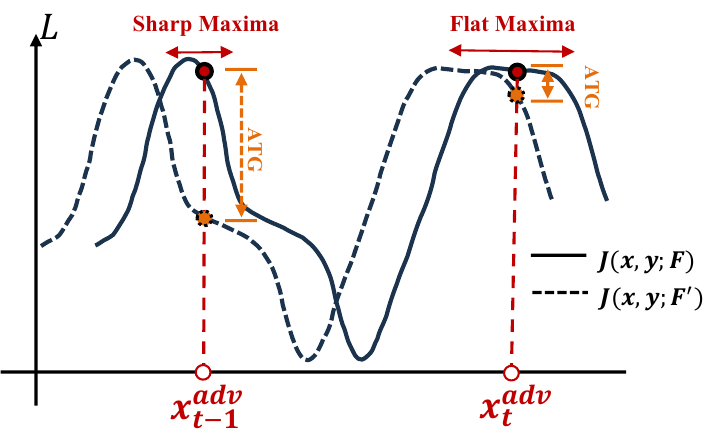}
        \centerline{\rev{(b) Geometric Interpretation}}
    \end{minipage}
    
    \caption{\rev{Overview of the proposed MEF attack. (a) Illustration of the iterative optimization process. At step $t-1$, Neighborhood Conditional Sampling (NCS) transforms random samples ($x^r$) into worst-case neighborhood samples ($x^n$) via inner minimization. Outer maximization then aggregates gradients from these regions to update the adversarial example $x_t^{adv}$. (b) Geometric interpretation showing how MEF seeks flat maxima to minimize the Adversarial Transferability Gap (ATG) between the source model $F$ (solid line) and the target model $F'$ (dashed line), as opposed to sharp maxima which lead to larger gaps.}}
    \label{fig:framework}
    \vspace{-5pt}
\end{figure*}


Building on the theoretical linkage between multi-order flatness and transferability established in Section~\ref{sec:theory}, we now address the second challenge outlined in Section~\ref{subsec:design_challenges}: designing a principled mechanism to balance local exploitation and global exploration. \rev{Instead of heuristically adopting existing metrics, our framework is strictly guided by Theorem~\ref{thm:flatness-transferability}: since the impact of higher-order terms ($n \ge 1$) diminishes exponentially due to the $\frac{\|\delta\|^n}{n!}$ coefficient, we prioritize the theoretically dominant zeroth-order flatness. Crucially, to resolve the exploitation-exploration dilemma inherent in standard definitions, we define a novel \textit{Zeroth-order Worst-Neighborhood Average Flatness} and optimize it
}through maximin expected flatness, neighborhood conditional sampling, and gradient balancing optimization.

\subsection{Maximin Expected Flatness}
\label{subsec:maximin}
Despite the computational efficiency and desirable theoretical properties of zeroth-order flatness metrics, existing zeroth-order flatness metrics exhibit polarized limitations: worst-case flatness $\widehat{R}^{(0)}_{\xi}(\mathbf{x})$ risks over-exploration by emphasizing local peaks, while average-case flatness $\overline{R}^{(0)}_{\xi}(\mathbf{x})$ tends toward over-exploitation by averaging out critical geometric features. To resolve the conflicting demands of local exploitation and global exploration in flatness optimization, we propose a novel zeroth-order worst-neighborhood average flatness metric that strategically integrates the stability of $\overline{R}^{(0)}_{\xi}(\mathbf{x})$ with the exploratory power of $\widehat{R}^{(0)}_{\xi}(\mathbf{x})$.

Our key insight lies in a hierarchical decomposition of flatness measurement that first identifies high-uneven sub-regions within an expanded neighborhood, then applies average-case smoothing to stabilize gradient estimation. Formally, we define the \emph{zeroth-order worst-neighborhood average flatness} as:
\revminor{
\small
\begin{equation}
\label{eq:wnac}
\widehat{\overline{R}}^{(0)}_{\gamma,\xi}(\mathbf{x}) 
\triangleq \max_{\mathbf{x}^{\rm c} \in B_{\gamma}(\mathbf{x})} \overline{R}^{(0)}_\xi(\mathbf{x}^{\rm c}),
\end{equation}
}
where $\gamma$ controls the exploration radius for identifying critical subregion centers $\mathbf{x}^{\rm c}$, and $\xi$ governs the neighborhood radius for zeroth-order average-case flatness computation. This hierarchical structure explicitly separates exploration scope ($\gamma$) from local smoothness ($\xi$), where larger $\gamma$ values permit discovery of distant high-loss regions while smaller $\xi$ values ensure stable gradient estimation within identified sub-regions.

With the proposed $\widehat{\overline{R}}^{(0)}_{\gamma,\xi}$, we establish the adversarial example generation framework guided by Theorem~\ref{thm:flatness-transferability}. As analyzed in Section~\ref{subsec:formulation}, successful transfer attacks require: 
\begin{inparaenum}[(i)]
\item minimizing $|\operatorname{ATG}|$ to align adversarial effects across models, and 
\item maximizing $J(\mathbf{x}+\boldsymbol{\delta},y;\mathcal{F})$ to elevate the upper bound of target model loss $J(\mathbf{x}+\boldsymbol{\delta},y;\mathcal{F'})$.
\end{inparaenum}
To unify these objectives, we formulate the attack generation as a multi-objective optimization problem:
\revminor{
\small
\begin{align}
\label{eq:original_opt}
\min_{\mathbf{x}^{\rm adv}\in B_{\epsilon}(\mathbf{x})}\Bigl[
  \underbrace{%
    \max_{\mathbf{x}^{\rm c}\in B_{\gamma}(\mathbf{x}^{\rm adv})}
      \widehat{\overline{R}}^{(0)}_{\gamma,\xi}(\mathbf{x}^{\rm c})
  }_{\text{Flatness Enhancement}}
  -
  \underbrace{%
    J(\mathbf{x}^{\rm adv}, y; \mathcal{F})
    \vphantom{%
      \max_{\mathbf{x}^{\rm c}\in B_{\gamma}(\mathbf{x}^{\rm adv})}
        \widehat{\overline{R}}^{(0)}_{\gamma,\xi}(\mathbf{x}^{\rm c})
    }
  }_{\text{Loss Maximization}}
\Bigr].
\end{align}
}
Substituting Definition~\ref{eq:wnac} into \eqref{eq:original_opt} yields:
\revminor{
\small
\begin{align}
\label{eq:expanded_opt}
\min_{\mathbf{x}^{\rm adv}\in B_{\epsilon}(\mathbf{x})}\Bigl\{ 
  &\max_{\mathbf{x}^{\rm c}\in B_{\gamma}(\mathbf{x}^{\rm adv})}
    \mathbb{E}_{\mathbf{x}'\sim \mathrm{Unif}(B_{\xi}(\mathbf{x}^{\rm c}))}\bigl[
      \bigl|J(\mathbf{x}') - J(\mathbf{x}^{\rm c})\bigr|
    \bigr] \nonumber \\
  &- J(\mathbf{x}^{\rm adv})
\Bigr\}.
\end{align}
}
Under the $\ell_\infty$-norm perturbation constraint $\|\mathbf{x}' - \mathbf{x}^{\rm c}\|_\infty \leq \xi$, $\|\mathbf{x}^{\rm adv} - \mathbf{x}^{\rm c}\|_\infty \leq \gamma$, we observe:
\revminor{
\small
\begin{equation}
\label{eq:approx_relation}
J(\mathbf{x}^{\rm adv}) - J(\mathbf{x}^{\rm c}) \approx 0, J(\mathbf{x}') - J(\mathbf{x}^{\rm c}) \leq 0.
\end{equation}
}
Substituting \eqref{eq:approx_relation} into \eqref{eq:expanded_opt}, we derive the final optimization objective:
\revminor{
\small
\begin{equation}
\label{eq:final_opt}
\max_{\mathbf{x}^{\rm adv}\in B_{\epsilon}(\mathbf{x})}
  \min_{\mathbf{x}^{\rm c}\in B_{\gamma}(\mathbf{x}^{\rm adv})}
    \mathbb{E}_{\mathbf{x}'\sim \mathrm{Unif}(B_{\xi}(\mathbf{x}^{\rm c}))} J(\mathbf{x}'),
\end{equation}
}
where the expectation term $\mathbb{E}\bigl[J(\mathbf{x}')\bigr]$ represents the \emph{expected flatness} over local neighborhoods. This maximin structure motivates our method's name: \textbf{Maximin Expected Flatness (MEF)}, which strategically explores worst-case neighborhoods while exploiting average loss landscapes.

\subsection{Neighborhood Conditional Sampling}
\label{subsec:sampling}
Directly solving the inner minimization in \eqref{eq:final_opt} is computationally infeasible due to the continuous loss landscape and infinite candidates in $B_{\gamma}(\mathbf{x}^{\rm adv})$. To efficiently approximate this intractable problem and obtain representative sampling points capturing worst-case flatness, we propose \textbf{Neighborhood Conditional Sampling (NCS)} method that leverages gradients for guided exploration.

The NCS uniformly samples $N$ points $\{\mathbf{x}_{0,i}\}_{i=1}^N$ within $B_{\xi}(\mathbf{x})$. At each iteration $t$, gradients $\nabla_{\mathbf{x}} J(\mathbf{x}_{t,i}, y; \mathcal{F})$ are computed for all points, followed by a sign-based update:
\revminor{
\small
\begin{equation}
\mathbf{x}_{t+1,i} = \mathbf{x}_{t,i} - \xi \cdot \mathrm{sign}\left(\nabla_{\mathbf{x}} J(\mathbf{x}_{t,i}, y; \mathcal{F})\right), 
\end{equation} 
}
which prioritizes directional consistency over gradient magnitudes. Updated points are then projected back to $B_{\xi}(\mathbf{x}_{0,i})$ and $B_{\gamma}(\mathbf{x})$ to enforce spatial constraints. After $T$ iterations, the final set $\{\mathbf{x}_{T,i}\}_{i=1}^N$ captures regions with minimal expected loss. This approach eliminates sampling overhead by leveraging gradients, reducing computation while maintaining efficiency.


\subsection{Gradient Balancing Optimization}
\label{subsec:grad_balance}
\begin{figure}[t]
    \centering
    \includegraphics[width=0.5\textwidth]{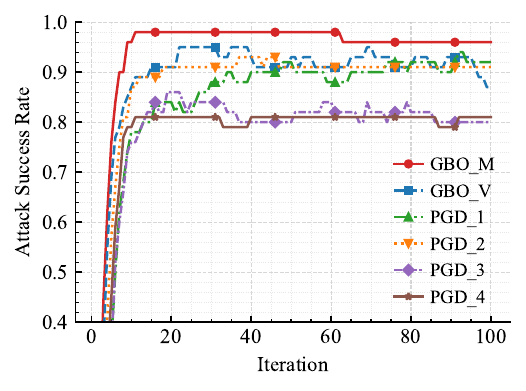}
    \vspace{-15pt}
    \caption{GBO's optimization superiority in transferability (Res-50~\cite{resnet}$\rightarrow$Inc-v3~\cite{incv3}). Achieving 90-98\% transfer attack success rates within 10 iterations, GBO outperforms PGD's 84-90\% at 100 steps with 6-14\% absolute improvement, while attaining 10× faster convergence on 100 ImageNet samples.}
    \vspace{-5pt}
    \label{fig:gbo}
\end{figure}
The nested maximin structure in \eqref{eq:final_opt} presents an optimization imbalance: the inner minimization requires fine-grained gradient descent across $N$ sampling points, while the outer maximization must update a single adversarial example $\mathbf{x}^{\rm adv}$ to elevate losses globally. This asymmetry arises because precise inner updates demand multiple iterations, whereas the outer optimization struggles to coordinate perturbations that simultaneously affect all minimized sub-regions. Experiments with gradient descent (1–4 inner steps) on 100 ImageNet images (Res-50→Inc-v3) confirm the issue: stricter inner optimization increases required convergence steps while reducing transfer attack success rates (Figure~\ref{fig:gbo}).

To address the imbalanced maximin optimization problem, we propose \textbf{Gradient Balancing Optimization (GBO)}, which strategically weakens the inner minimization's dominance by reusing outer maximization gradients. Let $\{g_{t,i}\}_{i=1}^N = \nabla_{\mathbf{x}}J({\{\mathbf{x'}_{t,i}\}}_{i=1}^N,y;\mathcal{F})$ denote the gradients from the outer maximization at iteration $t$. Instead of computing separate gradients for inner minimization, GBO approximates them as:
\revminor{
\small
\begin{equation}
\nabla_{\mathbf{x}_{t,i}^{\rm '}} J(\mathbf{x}_{t,i}^{\rm '}, y; \mathcal{F}) \approx g_{t-1,i},  
\end{equation}
}
leveraging the geometric proximity between consecutive adversarial examples. This shared gradient scheme entirely eliminates the computational overhead of inner minimization by reusing outer gradients for inner updates, accelerates convergence through aligned optimization directions, and improves transfer attack success rates.

To stabilize gradient directions, we further introduce momentum:
\revminor{
\small
\begin{equation}
\{g^{inner}_{t,i}\}_{i=1}^N = \frac{\{g_{t,i}\}_{i=1}^N}{\lvert\lvert\{g_{t,i}\}_{i=1}^N\rvert\rvert}_{1} - \mu_{inner}\cdot\{g^{inner}_{t,i}\}_{i=1}^N,
\end{equation}
}
where $\mu_{inner}$ controls historical gradient contributions. We compare GBO\_M (momentum-enhanced) and GBO\_V (vanilla) against PGD-based inner minimization with different iteration steps (1–4) on 100 ImageNet images. As shown in Fig.~\ref{fig:gbo}, GBO achieves faster convergence and higher transfer success rates than gradient-intensive PGD baselines, with GBO\_M (momentum-enhanced) outperforming GBO\_V (vanilla) in attack success rates. This demonstrates that gradient reuse with momentum stabilization not only resolves the optimization imbalance but also maximizes transferability, making GBO\_M our default solution.

\begin{algorithm}[t]
    \algnewcommand\algorithmicinput{\textbf{Input:}}
    \algnewcommand\Input{\item[\algorithmicinput]}
    \algnewcommand\algorithmicoutput{\textbf{Output:}}
    \algnewcommand\Output{\item[\algorithmicoutput]}

    \caption{Maximin Expected Flatness~(MEF) Attack}
    \label{alg:MEF}
    \begin{algorithmic}[1]
        \Input Surrogate model $\mathcal{F}$ and the loss function $J$; Raw example $\textbf{x}$ with ground-truth label $y$; The perturbation magnitude $\epsilon$; the maximum iterations $T$; the outer/inner decay factor $\mu_{outer}$/$\mu_{inner}$; the number of sampled examples, $N$; neighborhood radius $\xi$, exploration radius $\gamma$ and \rev{$U_{\gamma}(\cdot)$ is the uniform distribution over the $\ell_\infty$-ball $B_\gamma(\cdot)$}.
        \Output An adversarial example $\mathbf{x}^{adv}$
        
        \State $g^{outer}_{0} = 0$; $\{g^{inner}_{0,i}\}_{i=1}^N = 0$; $\mathbf{x}^{adv}_{0}=\mathbf{x}$; $\alpha={\epsilon/}{T}$
        \For {$t = 0, 1, \ldots, T-1$}
            \State \textcolor{orange}{\# Neighborhood Conditional Sampling}
            
            \State ${\{\mathbf{x}_{t,i}\}}_{i=1}^N \sim U_{\gamma}(\mathbf{x}^{adv}_{t})$
            
            \State ${\{\mathbf{x'}_{t,i}\}}_{i=1}^N = {\{\mathbf{x}_{t,i}\}}_{i=1}^N + \xi \cdot sign(\{g^{inner}_{t,i}\}_{i=1}^N)$
            \State \textcolor{orange}{\# Gradient Calculation}
            \State $\{g_{t,i}\}_{i=1}^N = \nabla_{\mathbf{x}}J({\{\mathbf{x'}_{t,i}\}}_{i=1}^N,y;\mathcal{F})$
            \State \textcolor{orange}{\# Gradient Balancing Optimization}
            \State $\{g^{inner}_{t,i}\}_{i=1}^N = \frac{\{g_{t,i}\}_{i=1}^N}{\lvert\lvert\{g_{t,i}\}_{i=1}^N\rvert\rvert}_{1} - \mu_{inner}\cdot\{g^{inner}_{t,i}\}_{i=1}^N$
            \State \textcolor{orange}{\# Outer Gradient Update}
            \State $g^{outer}_{t} = \mu_{outer}\cdot\ g^{outer}_{t} + \frac{1}{N}\sum_{i=1}^{N}{\frac{\{g_{t,i}\}_{i=1}^N}{\lvert\lvert\{g_{t,i}\}_{i=1}^N\rvert\rvert}_{1}}$
            \State \textcolor{orange}{\# Adversarial Example Update}
            \State $\mathbf{x}^{adv}_{t+1} = \Pi_{{B}_{\epsilon}(\mathbf{x})} [\mathbf{x}^{adv}_{t} + \alpha\cdot sign(g_{t}^{outer})]$;
        \EndFor
        \State\Return $\mathbf{x}^{adv}_{T}$.
    \end{algorithmic}
\end{algorithm}
\subsection{Algorithm}
\label{subsec:algorithm}
The proposed framework integrates the objective of \textbf{Maximin Expected Flatness}, \textbf{Neighborhood Conditional Sampling}, and \textbf{Gradient Balancing Optimization} through a unified max-min bi-level algorithm summarized in Algorithm~\ref{alg:MEF}. \rev{To clarify the interaction between these components, the overall workflow is visualized in Fig.~\ref{fig:framework}.} By strategically sharing normalized gradients between inner minimization and outer maximization, the algorithm achieves an optimal balance between computational efficiency and attack effectiveness.

\section{Experiments}
\label{sec:exp}

\begin{table*}[tb]
\caption{Transfer attack success rate ($\%$) comparison of gradient-stabilized attacks on normally trained models. The best results are \textbf{bold} and the second best are \underline{underlined}.}
\vspace{-10pt}
\label{tab:eval_normal_model}
\begin{center}
\begin{small}
\setlength{\tabcolsep}{4pt}
\setlength{\extrarowheight}{0.2pt}
\scalebox{0.75}{
\begin{tabular}{c|cccccccc|cccccccc}
\hline
\multirow{2}{*}{Attack} & \multicolumn{8}{c|}{\textbf{Res-50} $\Longrightarrow$} & \multicolumn{8}{c}{\textbf{Res-101} $\Longrightarrow$}  \\
 & Res-101 & Inc-v3 & Inc-v4 & IncRes-v2 & VGG-19 & Dense-121 & Xcept & \rev{AVG} & Res-50 & Inc-v3 & Inc-v4 & IncRes-v2 & VGG-19 & Dense-121& Xcept  & \rev{AVG} \\
\hline
EMI~\cite{pi-emi}
& 99.3 & 75.0 & 71.2 & 54.3 & 94.8 & 97.2 & 74.3 & \rev{80.9} & 99.8 & 76.8 & 72.6 & 58.5 & 92.9 & 97.2 & 74.0 & \rev{81.7} \\

APP~\cite{app}
& 99.3 & 83.1 & 79.6 & 73.1 & 95.3 & 97.8 & 83.3 & \rev{87.4} & 99.4 & 83.2 & 80.9 & 72.8 & 93.4 & 96.6 & 82.0 & \rev{86.9} \\

FEM~\cite{femi}
& \underline{99.8} & 89.4 & 86.3 & 78.8 & 97.4 & 98.7 & 87.7 & \rev{91.2} & \underline{99.8} & 89.1 & 86.4 & 77.6 & 95.4 & 98.0 & 87.3 & \rev{90.5} \\

PGN~\cite{pgn}
& 99.2 & 86.3 & 83.9 & 79.0 & 95.6 & 97.8 & 86.9 & \rev{89.8} & 99.5 & 86.8 & 84.5 & 80.0 & 94.5 & 97.1 & 88.1 & \rev{90.1} \\

\rev{ANDA~\cite{anda}} & \rev{99.4} & \rev{81.5} & \rev{80.1} & \rev{74.1} & \rev{90.7} & \rev{94.7} & \rev{82.2} & \rev{86.1} & \rev{99.5} & \rev{82.6} & \rev{81.2} & \rev{76.4} & \rev{91.0} & \rev{92.8} & \rev{83.7} & \rev{86.7} \\

\rev{FGSRA~\cite{fgsra}} & \rev{98.4} & \rev{85.3} & \rev{83.2} & \rev{78.0} & \rev{94.6} & \rev{97.3} & \rev{86.0} & \rev{89.0} & \rev{98.6} & \rev{86.0} & \rev{83.9} & \rev{79.4} & \rev{93.9} & \rev{96.3} & \rev{87.3} & \rev{89.3} \\

\rev{GI~\cite{gifgsm}} & \rev{99.5} & \rev{82.4} & \rev{80.5} & \rev{75.1} & \rev{91.6} & \rev{94.8} & \rev{83.1} & \rev{86.7} & \rev{99.4} & \rev{83.2} & \rev{81.4} & \rev{76.7} & \rev{91.2} & \rev{93.5} & \rev{84.4} & \rev{87.1} \\

\rev{MUMODIG~\cite{mumodig}} & \rev{99.3} & \rev{84.4} & \rev{82.5} & \rev{77.1} & \rev{93.6} & \rev{96.8} & \rev{85.1} & \rev{88.4} & \rev{99.4} & \rev{85.2} & \rev{83.4} & \rev{78.7} & \rev{93.2} & \rev{95.5} & \rev{86.4} & \rev{88.8} \\

\rev{GAA~\cite{gaa}} & \rev{98.5} & \rev{82.5} & \rev{81.1} & \rev{75.1} & \rev{91.7} & \rev{95.7} & \rev{83.2} & \rev{86.8} & \rev{98.9} & \rev{83.6} & \rev{82.2} & \rev{77.4} & \rev{92.0} & \rev{93.8} & \rev{84.7} & \rev{87.5} \\

\rev{FoolMix~\cite{foolmix}} & \rev{99.1} & \rev{80.4} & \rev{78.5} & \rev{73.1} & \rev{89.6} & \rev{92.8} & \rev{81.1} & \rev{84.9} & \rev{99.4 } & \rev{81.2} & \rev{79.4} & \rev{74.7} & \rev{89.2} & \rev{91.5} & \rev{82.4} & \rev{85.4} \\

MEF\textsubscript{H}
& 99.7 & \underline{90.6} & \underline{88.7} & \underline{82.7} & \underline{98.3} & \underline{99.4} & \underline{90.4} & \rev{\underline{92.8}} & 99.7 & \underline{91.2} & \underline{88.2} & \underline{82.2} & \underline{97.0} & \underline{98.4} & \underline{90.0} & \rev{\underline{92.4}} \\

MEF\textsubscript{F}
& \textbf{99.9} & \textbf{95.1} & \textbf{94.2} & \textbf{91.3} & \textbf{99.2} & \textbf{99.8} & \textbf{94.7} & \rev{\textbf{96.3}} & \textbf{99.9} & \textbf{95.8} & \textbf{95.2} & \textbf{91.8} & \textbf{98.9} & \textbf{99.4} & \textbf{95.7} & \rev{\textbf{96.7}} \\

\hline
\multirow{2}{*}{Attack} & \multicolumn{8}{c|}{\textbf{Inc-v3} $\Longrightarrow$} & \multicolumn{8}{c}{\textbf{Inc-v4} $\Longrightarrow$}  \\
 & Res-50 & Res-101 & Inc-v4 & IncRes-v2 & VGG-19 & Dense-121 & Xcept& \rev{AVG} & Res-50 & Res-101& Inc-v3 & IncRes-v2 & VGG-19 & Dense-121 & Xcept  & \rev{AVG} \\
\hline
EMI~\cite{pi-emi}
& 78.8 & 74.3 & 78.5 & 72.2 & 79.5 & 80.5 & 78.6 & \rev{77.5} & 79.0 & 75.3 & 86.5 & 71.9 & 85.4 & 82.1 & 82.7 & \rev{80.4} \\

APP~\cite{app}
& 73.5 & 68.0 & 79.2 & 73.1 & 74.7 & 75.7 & 78.5 & \rev{74.7} & 69.0 & 64.7 & 77.1 & 73.5 & 74.7 & 72.5 & 80.7 & \rev{73.2} \\

FEM~\cite{femi}
& 82.1 & 77.1 & 87.2 & 82.1 & 81.8 & 82.1 & 84.5 & \rev{82.4} & 78.3 & 73.9 & 86.0 & 78.9 & 81.6 & 80.4 & 86.9 & \rev{80.9} \\

PGN~\cite{pgn}
& 84.6 & 79.9 & 88.4 & 84.9 & 84.6 & 86.9 & 89.1 & \rev{85.5} & 84.9 & 82.3 & 91.1 & 85.8 & 89.0 & 87.6 & 90.8 & \rev{87.4} \\

\rev{ANDA~\cite{anda}} & \rev{81.1} & \rev{76.6} & \rev{85.2} & \rev{80.6} & \rev{80.8} & \rev{81.9} & \rev{85.2} & \rev{81.6} & \rev{80.9} & \rev{77.7} & \rev{86.3} & \rev{81.0} & \rev{84.2} & \rev{83.8} & \rev{87.5} & \rev{83.1} \\

\rev{FGSRA~\cite{fgsra}} & \rev{84.0} & \rev{79.3} & \rev{87.8} & \rev{84.1} & \rev{83.9} & \rev{85.9} & \rev{88.4} & \rev{84.8} & \rev{84.1} & \rev{81.4} & \rev{90.1} & \rev{84.8} & \rev{88.0} & \rev{86.9} & \rev{90.2} & \rev{86.5} \\

\rev{GI~\cite{gifgsm}} & \rev{81.3} & \rev{76.7} & \rev{85.3} & \rev{81.3} & \rev{81.2} & \rev{82.9} & \rev{85.7} & \rev{82.1} & \rev{81.4} & \rev{78.5} & \rev{87.2} & \rev{81.9} & \rev{85.1} & \rev{84.2} & \rev{87.6} & \rev{83.7} \\

\rev{MUMODIG~\cite{mumodig}} & \rev{83.3} & \rev{78.7} & \rev{87.3} & \rev{83.3} & \rev{83.2} & \rev{84.9} & \rev{87.7} & \rev{84.1} & \rev{83.4} & \rev{80.5} & \rev{89.2} & \rev{83.9} & \rev{87.1} & \rev{86.2} & \rev{89.6} & \rev{85.7} \\

\rev{GAA~\cite{gaa}} & \rev{82.1} & \rev{77.6} & \rev{86.2} & \rev{81.6} & \rev{81.8} & \rev{82.9} & \rev{86.2} & \rev{82.6} & \rev{81.9} & \rev{78.7} & \rev{87.3} & \rev{82.0} & \rev{85.2} & \rev{84.8} & \rev{88.5} & \rev{84.1} \\

\rev{FoolMix~\cite{foolmix}} & \rev{79.3} & \rev{74.7} & \rev{83.3} & \rev{79.3} & \rev{79.2} & \rev{80.9} & \rev{83.7} & \rev{80.1} & \rev{79.4} & \rev{76.5} & \rev{85.2} & \rev{79.9} & \rev{83.1} & \rev{82.2} & \rev{85.6} & \rev{81.7} \\

MEF\textsubscript{H}
& \underline{85.8} & \underline{80.8} & \underline{89.4} & \underline{86.1} & \underline{85.2} & \underline{87.4} & \underline{89.9} & \rev{\underline{86.4}} & \underline{86.2} & \underline{82.4} & \underline{92.8} & \underline{86.1} & \underline{90.1} & \underline{88.5} & \underline{90.8} & \rev{\underline{88.1}} \\

MEF\textsubscript{F}
& \textbf{92.6} & \textbf{91.5} & \textbf{96.5} & \textbf{94.3} & \textbf{92.8} & \textbf{93.4} & \textbf{95.8} & \rev{\textbf{93.8}} & \textbf{91.4} & \textbf{90.3} & \textbf{94.5} & \textbf{91.2} & \textbf{92.2} & \textbf{91.9} & \textbf{95.1} & \rev{\textbf{92.4}} \\
\hline
\end{tabular}
}
\end{small}
\end{center}
\vspace{-20pt}
\end{table*}

\subsection{Experimental Setup}

\noindent\textbf{Dataset.} We conduct experiments on the ImageNet-compatible dataset consisting of 1,000 images (299×299 resolution with annotated labels), which serves as the standard benchmark in previous works\cite{tap,trip,svre,logit,rap,pgn}.

\noindent\textbf{Models.} We select Inception-v3 (Inc-v3)~\cite{incv3}, Inception-v4 (Inc-v4)~\cite{incv4}, ResNet-50 (Res-50)~\cite{resnet} and ResNet-101 (Res-101)~\cite{resnet} with TorchVision-pretrained weights~\cite{torchvision} for generating adversarial examples. Transferability evaluation covers: (1)~\emph{Standard CNNs}: Inc-v3~\cite{incv3}, Inc-v4~\cite{incv4}, Res-50~\cite{resnet}, Res-101~\cite{resnet}, Inception-ResNetv2 (IncRes-v2)~\cite{incv4}, DenseNet-121 (Dense-121)~\cite{dense}, VGG-19bn (VGG-19)~\cite{vgg} and Xception (xcept)~\cite{xception}; (2)~\emph{Cross-architectures}: MobileNet-v2 (MobileNet)~\cite{mobilenet}, PNASNet-5-Large (PNASNet-L)~\cite{pnasnet}, ViT-Base/16 (ViT-B/16)~\cite{vit}, ViT-Large/32 (ViT-L/32)~\cite{vit}, PiT-S~\cite{pit}, MLP-Mixer~\cite{mlp} and ResMLP~\cite{resmlp}; (3)~\emph{Defended models}: Adversarially trained Inc-v3$_{adv}$~\cite{advincv3}, Inc-v3$_{ens3}$~\cite{eat}, Inc-v3$_{ens4}$~\cite{eat} and IncRes-v2$_{ens}$~\cite{eat}; adversarial purification defenses HGD~\cite{hgd} and NRP~\cite{nrp}; certifiable defense RS~\cite{rs}; \rev{diffusion-based purifier DiffPure~\cite{diffpure} and DensePure~\cite{densepure} and state-of-the-art adversarially robust models from RobustBench~\cite{robustbench}: MIMIR~\cite{mimir}, ARES~\cite{ares} and MeanSparse~\cite{meansparse}}. \rev{(4)~\emph{Multimodal Architectures (Vision-Language Models)}: To investigate adversarial transferability towards emerging multimodal paradigms, we evaluate VLMs including LLaVA~\cite{llava}, Unidiffuser~\cite{unidiffuser}, MiniGPT-4~\cite{minigpt-4} and Qwen2.5-VL~\cite{qwen2.5-vl}.}

\noindent\textbf{Compared Methods.} We evaluate against \rev{43} transfer-based adversarial attacks spanning three categories: (1) \rev{11} \emph{data-driven methods} covering input augmentation (DI~\cite{di}, TI~\cite{tim}, SI~\cite{ni}, Admix~\cite{admix}, SSA~\cite{ssa}, \rev{STM~\cite{stm}, USMM~\cite{USMM}, DeCoWA~\cite{decowa}, L2T~\cite{l2t}, BSR~\cite{bsr} and OPS~\cite{ops}}); (2) \rev{8} \emph{model-driven methods} comprising model ensemble (SVRE~\cite{svre}, CWA~\cite{cwa}) and surrogate refinement (SGM~\cite{sgm}, LinBP~\cite{linbp}, \rev{AWT~\cite{awt}, FAUG~\cite{faug}, ANA~\cite{ana}, LL2S~\cite{ll2s}}) strategies; (3) \rev{24} \emph{optimization-driven methods} including feature-based (ILA~\cite{ila}, NAA~\cite{naa}, \rev{BFA~\cite{bfa}, P2FA~\cite{p2fa}}), generative (CDA~\cite{cda}, \rev{DiffAttack~\cite{diffattack}}), gradient stabilization (MI~\cite{mi}, NI~\cite{ni}, PI/EMI~\cite{pi-emi}, VMI/VNI~\cite{vmi-vni}, \rev{ANDA~\cite{anda}, FGSRA~\cite{fgsra}, GI~\cite{gifgsm}, MUMODIG~\cite{mumodig}, GAA~\cite{gaa}, FoolMix~\cite{foolmix}
}), and flatness-enhanced (RAP~\cite{rap}, PGN~\cite{pgn}, GNP~\cite{gnp}, FEM~\cite{femi}, APP~\cite{app}, TPA~\cite{tpa}) approaches. PGN~\cite{pgn} constitutes the state-of-the-art in optimization-driven methods. \rev{We further benchmark against 4 classical \emph{query-based attacks} (SimBA~\cite{simba}, Square~\cite{squareattack}, Boundary~\cite{boundaryattack} and RayS~\cite{rays}) to provide a holistic view under different threat models.}

\noindent\textbf{Hyper-parameters.} All baseline methods use their originally reported configurations unless specified \rev{(see Supp.~\ref{sec:supp_params})}. We unify common parameters across attacks: maximum perturbation $\epsilon=16/255$, and step size $\alpha=\epsilon/T$. Our MEF sets neighborhood radius $\gamma = 2\times\epsilon$, exploration radius $\xi = 0.15\times\epsilon$, outer/inner momentum coefficient $\mu_{outer}/\mu_{inner}=0.5/0.9$. MEF requires only one backpropagation per sample versus PGN (current SOTA)'s two. For comprehensive evaluation, we design two variants: MEF\textsubscript{H} (10 iterations, half PGN's cost) and MEF\textsubscript{F} (20 iterations, fully matching PGN's cost).

\subsection{Effectiveness Benchmarking against Existing Attacks}

\subsubsection{{Comparison with Gradient Stabilization Attacks}}
We \rev{benchmark MEF against 18 representative gradient stabilization attacks} on normally trained models. We generate adversarial examples on four source models and evaluate their transferability across eight diverse target models.

Table \ref{tab:eval_normal_model} demonstrates the significant superiority of our framework. \revminor{To maintain clarity and focus on the state-of-the-art, Table \ref{tab:eval_normal_model} reports methods with an average success rate exceeding 80\%. Results for other classical baselines (e.g., MI~\cite{mi}, RAP~\cite{rap}) are provided in Supp.~\ref{sec:exp_baseline} (Table~\ref{tab:supp_baselines}).} \rev{We compute the overall average success rate by averaging the AVG columns across all four surrogate settings.} \rev{Although FEM~\cite{femi} shows competitive performance on ResNet-based surrogates, PGN remains the second-best method overall with a higher global average (88.20\% vs. 88.13\% for FEM~\cite{femi}).} Notably, MEF\textsubscript{H} attains 89.93\% average success, surpassing the state-of-the-art PGN~\cite{pgn} \rev{(88.20\%)} despite requiring only half the budget. Matching PGN~\cite{pgn}'s cost, MEF\textsubscript{F} elevates the average success rate to 94.84\%, \rev{establishing MEF as the new state-of-the-art for adversarial transferability.}

\begin{table*}[t]
\caption{Transfer attack success rate ($\%$) comparison on diverse architectures. best results are \textbf{bold} and second best are \underline{underlined}.}
\label{tab:eval_diverse_models_main}
\vspace{-20pt}
\begin{center}
\begin{small}
\setlength{\extrarowheight}{0.15em}
\scalebox{0.7}{
\begin{tabular}{c|ccccccc|ccccccc}
\hline
\multirow{2}{*}{Attack} & \multicolumn{7}{c|}{\textbf{Res-50} $\Longrightarrow$} & \multicolumn{7}{c}{\textbf{Inc-v3} $\Longrightarrow$} \\ & MobileNet & PNASNet-L & ViT-B/16 & ViT-L/32 & PiT-S & MLP-Mixer & ResMLP & MobileNet & PNASNet-L & ViT-B/16 & ViT-L/32 & PiT-S & MLP-Mixer & ResMLP \\
\hline
FEM~\cite{femi} & 94.0 & 87.0 & 43.1 & 35.0 & 58.1 & 60.2 & 66.0 & 81.1 & 78.2 & 41.3 & 30.4 & 53.2 & 59.7 & 58.6 \\
PGN~\cite{pgn} & 91.6 & 88.6 & 51.4 & 45.8 & 58.7 & 66.7 & 70.2 & 84.9 & 82.6 & 50.1 & 40.6 & 60.2 & 66.1 & 65.6 \\
\rev{FGSRA~\cite{fgsra}} & \rev{88.8} & \rev{87.0} & \rev{49.8} & \rev{45.6} & \rev{57.1} & \rev{64.5} & \rev{70.9} & \rev{83.7} & \rev{81.7} & \rev{48.1} & \rev{39.6} & \rev{59.6} & \rev{64.2} & \rev{65.2} \\
\rev{MUMODIG~\cite{mumodig}} & \rev{89.5} & \rev{86.3} & \rev{48.8} & \rev{44.1} & \rev{56.9} & \rev{66.3} & \rev{69.0} & \rev{81.3} & \rev{81.1} & \rev{48.6} & \rev{39.0} & \rev{59.5} & \rev{65.7} & \rev{65.1} \\
\hdashline
MEF\textsubscript{H} & \underline{95.2} & \underline{90.3} & \underline{51.9} & \underline{46.0} & \underline{63.3} & \underline{66.9} & \underline{70.5} & \underline{85.2} & \underline{83.9} & \underline{51.7} & \underline{42.2} & \underline{60.4} & \underline{67.3} & \underline{66.3} \\
MEF\textsubscript{F} & \textbf{98.0} & \textbf{95.2} & \textbf{56.0} & \textbf{46.9} & \textbf{71.1} & \textbf{71.7} & \textbf{77.4} & \textbf{92.7} & \textbf{90.8} & \textbf{56.2} & \textbf{45.3} & \textbf{71.4} & \textbf{69.4} & \textbf{73.3} \\
\hline
\end{tabular}
}
\end{small}
\end{center}
\vspace{-15pt}
\end{table*}
\subsubsection{Evaluation on Diverse Network Architectures}
\label{sec:exp_archi_main}
\revminor{To assess cross-architecture transferability, we evaluate attack success rates on seven heterogeneous models, including MobileNet~\cite{mobilenet}, PNASNet~\cite{pnasnet}, Vision Transformers (ViT)~\cite{vit,pit}, and MLP-Mixers~\cite{mlp,resmlp}. Table \ref{tab:eval_diverse_models_main} compares MEF against the top-4 performing baselines (full comparison with 12 baselines is in Supp.~\ref{sec:exp_archi_supp}). Despite the significant inductive bias gap, MEF\textsubscript{F} consistently outperforms the state-of-the-art PGN~\cite{pgn}, achieving a remarkable 46.9\% success rate on ViT-L/32 (vs. PGN's 45.8\%) and 77.4\% on ResMLP (vs. PGN's 70.2\%). This confirms that optimizing zeroth-order flatness effectively extracts geometry-invariant features that transfer across fundamentally different architectural paradigms.}

\begin{table*}[t]
\caption{Transfer attack success rate (\%) comparison on ensemble surrogates. Best results are \textbf{bold} and second best are \underline{underlined}.}
\vspace{-15pt}
\label{tab:attack_ensemble_main}
\begin{center}
\begin{small}
\setlength{\extrarowheight}{0.15em}
\scalebox{0.68}{
\begin{tabular}{c|cccccccccccccc}
\hline
\multirow{2}{*}{Attack} & \multicolumn{14}{c}{\textbf{Res-50 + Inc-v3} $\Longrightarrow$} \\ & Res-101 & Inc-v4 & IncRes-v2 & Inc-v3$_{adv}$ & Inc-v3$_{ens3}$ & Inc-v3$_{ens4}$ & IncRes-v2$_{ens}$ & MobileNet & PNASNet-L & ViT-B/16 & ViT-L/32 & PiT-S & MLP-Mixer & ResMLP \\
\hline
FEM~\cite{femi} & 98.7 & 92.0 & 88.3 & 68.3 & 65.9 & 62.9 & 46.9 & 92.5 & 89.0 & 59.0 & 41.3 & 74.1 & 66.0 & 77.0 \\
PGN~\cite{pgn} & 97.8 & 91.9 & 89.4 & 79.5 & 78.2 & 77.5 & 65.6 & 93.9 & 91.9 & 68.7 & 50.2 & 76.9 & 74.5 & 81.8 \\
\rev{FGSRA~\cite{fgsra}} & \rev{94.9} & \rev{90.3} & \rev{87.4} & \rev{78.9} & \rev{76.4} & \rev{75.2} & \rev{66.3} & \rev{92.6} & \rev{90.9} & \rev{66.5} & \rev{49.1} & \rev{76.1} & \rev{72.5} & \rev{81.2} \\
\rev{MUMODIG~\cite{mumodig}} & \rev{95.5} & \rev{89.6} & \rev{86.0} & \rev{77.0} & \rev{76.0} & \rev{76.9} & \rev{64.5} & \rev{90.1} & \rev{90.2} & \rev{66.8} & \rev{48.4} & \rev{75.8} & \rev{73.9} & \rev{80.9} \\
\hdashline
MEF\textsubscript{H} & 98.9 & \underline{93.2} & \underline{90.2} & \underline{80.2} & \underline{79.8} & \underline{80.5} & \underline{67.0} & \underline{94.1} & \underline{92.4} & \underline{69.5} & \underline{51.7} & \underline{77.8} & \underline{75.3} & \underline{82.1} \\
MEF\textsubscript{F} & \textbf{99.4} & \textbf{96.6} & \textbf{95.1} & \textbf{85.7} & \textbf{83.3} & \textbf{82.6} & \textbf{70.4} & \textbf{96.7} & \textbf{95.1} & \textbf{75.4} & \textbf{55.2} & \textbf{83.8} & \textbf{80.4} & \textbf{89.0} \\
\hline
\end{tabular}
}
\end{small}
\end{center}
\vspace{-10pt}
\end{table*}
\begin{table*}[t]
\caption{Transfer attack success rate (\%) comparison with input augmentation attacks. The best results are \textbf{bold}.}
\label{tab:ia_gs}
\vspace{-15pt}
\begin{center}
\begin{small}
\setlength{\extrarowheight}{0.2pt}
\setlength{\tabcolsep}{4pt}
\scalebox{0.7}{
\begin{tabular}{c|ccccccc|ccccccc}
\hline
\multirow{2}{*}{Attack} & \multicolumn{7}{c|}{{Res-50} $\Longrightarrow$} & \multicolumn{7}{c}{{Inc-v3} $\Longrightarrow$} \\ & IncRes-v2 & Inc-v3$_{ens3}$ & Inc-v4$_{ens4}$ & IncRes-v2$_{ens}$ & ViT-L/32 & MLP-Mixer & \rev{{AVG}} & IncRes-v2 & Inc-v3$_{ens3}$ & Inc-v4$_{ens4}$ & IncRes-v2$_{ens}$ & ViT-L/32 & MLP-Mixer & \rev{{AVG}}
\\
\hline
TI~\cite{tim}
& 51.1 & 42.0 & 39.1 & 30.2 & 34.4 & 55.6 & \rev{42.1}& 52.0 & 37.0 & 37.2 & 26.0 & 28.8 & 53.7 & \rev{39.1}\\
MEF\textsubscript{H}+TI
& {81.7} & {77.4} & {77.8} & {68.2} & {57.3} & {71.1} & \rev{72.2}& {86.4} & {79.6} & {79.5} & {67.4} & {48.7} & {68.3} & \rev{71.6}\\
MEF\textsubscript{F}+TI
& {94.1} & {83.5} & {82.6} & {73.6} & {62.5} & {76.0} & \rev{78.7}& {94.5} & {89.0} & {88.3} & {78.8} & {56.3} & {75.6} & \rev{80.4}\\
\hdashline
\rev{STM~\cite{stm}} & \rev{84.9} & \rev{52.0} & \rev{49.1} & \rev{32.9} & \rev{37.5} & \rev{66.4} & \rev{53.8}& \rev{85.6} & \rev{54.2} & \rev{56.4} & \rev{32.7} & \rev{34.9} & \rev{63.8} & \rev{54.6}\\
\rev{USMM~\cite{USMM}} & \rev{84.8} & \rev{51.7} & \rev{48.8} & \rev{32.6} & \rev{36.9} & \rev{66.0} & \rev{53.5}& \rev{85.4} & \rev{53.4} & \rev{56.3} & \rev{32.3} & \rev{34.9} & \rev{63.3} & \rev{54.3}\\
\rev{DeCowA~\cite{decowa}} & \rev{90.1} & \rev{59.4} & \rev{56.7} & \rev{40.6} & \rev{44.3} & \rev{70.2} & \rev{60.2}& \rev{92.9} & \rev{62.3} & \rev{63.9} & \rev{40.2} & \rev{42.8} & \rev{71.6} & \rev{62.3}\\
\rev{L2T~\cite{l2t}} & \rev{89.6} & \rev{56.8} & \rev{53.8} & \rev{37.5} & \rev{41.6} & \rev{69.7} & \rev{58.2}& \rev{90.3} & \rev{58.5} & \rev{61.1} & \rev{37.0} & \rev{40.1} & \rev{69.3} & \rev{59.4}\\
\rev{BSR~\cite{bsr}} & \rev{89.4} & \rev{57.9} & \rev{55.0} & \rev{38.6} & \rev{43.1} & \rev{69.9} & \rev{59.0}& \rev{91.4} & \rev{59.5} & \rev{62.2} & \rev{38.2} & \rev{40.1} & \rev{70.0} & \rev{60.2}\\
\rev{OPS(10,5,5)~\cite{ops}} & \rev{{96.2}} & \rev{{84.3}} & \rev{{81.9}} & \rev{{72.3}} & \rev{{63.7}} & \rev{{77.7}} & \rev{79.3}& \rev{{95.9}} & \rev{{90.7}} & \rev{{88.8}} & \rev{{79.4}} & \rev{{57.1}} & \rev{{76.3}} & \rev{81.4}\\
\rev{OPS(10,30,30)~\cite{ops}} & \rev{\underline{98.5}} & \rev{\underline{90.6}} & \rev{\underline{89.2}} & \rev{\underline{83.7}} & \rev{\underline{81.1}} & \rev{\underline{98.1}} & \rev{\underline{90.2}} & \rev{\underline{98.7}} & \rev{\underline{93.1}} & \rev{\underline{93.2}} & \rev{\underline{86.6}} & \rev{\underline{75.3}} & \rev{\underline{91.9}} & \rev{\underline{89.8}} \\
\hdashline
MEF\textsubscript{H}
& 83.7 & 64.5 & 63.3 & 50.4 & 40.8 & 65.3 & \rev{61.3}& 86.1 & 65.1 & 65.4 & 45.4 & 35.2 & 65.5 & \rev{60.4}\\
MEF\textsubscript{F}
& 90.3 & 69.2 & 65.1 & 52.2 & 44.9 & 71.7 & \rev{65.6}& 94.3 & 69.0 & 69.6 & 47.9 & 40.3 & 69.4 & \rev{65.1}\\
\rev{MEF+TI(10,5$\times$5)} & \rev{{97.5}} & \rev{{85.2}} & \rev{{83.7}} & \rev{{75.7}} & \rev{{64.3}} & \rev{{78.9}} & \rev{80.9}& \rev{{96.3}} & \rev{{92.8}} & \rev{{90.8}} & \rev{{80.7}} & \rev{{59.4}} & \rev{{77.1}} & \rev{82.8}\\
\rev{MEF+TI(10,30$\times$30)} & \rev{\textbf{99.2}} & \rev{\textbf{92.9}} & \rev{\textbf{92.3}} & \rev{\textbf{87.3}} & \rev{\textbf{82.5}} & \rev{\textbf{99.2}} & \rev{\textbf{92.2}} & \rev{\textbf{99.7}} & \rev{\textbf{95.5}} & \rev{\textbf{96.9}} & \rev{\textbf{90.5}} & \rev{\textbf{78.0}} & \rev{\textbf{95.4}} & \rev{\textbf{92.7}}\\
\hline
\end{tabular}
}
\end{small}
\end{center}
\vspace{-10pt}
\end{table*}
\subsubsection{Attacking on Ensemble of Models}
\label{sec:exp_ensemble_main}
\revminor{We extend our evaluation to ensemble attacks by integrating Inc-v3~\cite{incv3} and Res-50~\cite{resnet} through logit averaging as the surrogate model. Table \ref{tab:attack_ensemble_main} compares MEF against the top-4 performing baselines (full comparison with 14 methods, including ensemble-specialized SVRE~\cite{svre} and CWA~\cite{cwa}, is provided in Supp.~\ref{sec:exp_ensemble_supp}). MEF\textsubscript{F} demonstrates consistent superiority, outperforming the state-of-the-art PGN~\cite{pgn} by margins of 1.6\%-8.2\% across diverse targets. This confirms that MEF captures transferable adversarial patterns across model ensembles.}

\subsubsection{{Integrated with Input Augmentation Attacks}}
\label{sec:exp_combination}
We compare MEF\textsubscript{H} and MEF\textsubscript{F} against \rev{10} representative input augmentation attacks (e.g., SSA~\cite{ssa}, \rev{DeCoWA~\cite{decowa}}). Following the standard setup, we generate adversarial examples on Res-50~\cite{resnet} and Inc-v3~\cite{incv3} and evaluate their transferability to challenging targets, including ensemble models and heterogeneous architectures. Furthermore, to demonstrate compatibility, we integrate sample transformations from \rev{five classical augmentation methods (e.g., TI~\cite{tim}, DI~\cite{di})} into MEF variants without introducing additional gradient computations.

As illustrated in Table~\ref{tab:ia_gs}, MEF\textsubscript{H} outperforms standard baselines like SSA~\cite{ssa} in terms of average attack success rate. \revminor{Due to space limits, Table~\ref{tab:ia_gs} focuses on the integration with TI~\cite{tim} and comparisons with advanced SOTA methods. The extensive integration results with DI~\cite{di}, SI~\cite{ni}, Admix~\cite{admix}, and SSA~\cite{ssa} are detailed in Supp.~\ref{sec:exp_supp_ia} (Table~\ref{tab:supp_ia}).} Integrating TI~\cite{tim} further boosts MEF\textsubscript{F} to an average success rate of \rev{79.55\%}, demonstrating strong compatibility with input transformations. \rev{We specifically compare against OPS~\cite{ops}, which relies on stacking multiple transformations. For fair comparison, we align the computational overhead by adjusting sampling counts ($5\times5$ and $30\times30$). Remarkably, using only a simple TI~\cite{tim} integration, MEF+TI(10,$30\times30$) achieves 92.45\%, surpassing the complex OPS~\cite{ops} framework (90.00\%). This high-sampling strategy effectively bridges the architectural gap, boosting success rates on ViT-L/32 from 56.8\% to 80.3\%. This confirms that optimizing intrinsic flatness is more effective than heuristic transformation ensembles.}

\begin{table}[t]
\caption{Transfer attack success rate (\%) comparison across attack paradigms. The best results are \textbf{bold}.}
\label{tab:exp_attacktype}
\vspace{-10pt}
\begin{center}
\begin{small}
\setlength{\tabcolsep}{4pt}
\setlength{\extrarowheight}{0.25pt}
\scalebox{0.7}{
\begin{tabular}{c|ccccccc}
\hline
Attack & Inc-v3 & Inc-v4 & Res-101 & Inc-v3$_{ens3}$ & Inc-v3$_{ens4}$ & IncRes-v2$_{ens}$ & AVG \\
\hline
SGM~\cite{sgm}
& 71.6
& 35.3
& 92.8
& 12.3
& 12.5
& 6.1
& 38.4
\\
LinBP~\cite{linbp}
& 72.7
& 47.9
& 95.4
& 13.2
& 13.3
& 6.8
& 41.6
\\
\rev{AWT~\cite{awt}} & \rev{89.1} & \rev{87.5} & \rev{97.9} & \rev{63.1} & \rev{61.4} & \rev{49.1} & \rev{74.7} \\
\rev{FAUG~\cite{faug}} & \rev{68.0} & \rev{64.1} & \rev{78.6} & \rev{41.5} & \rev{42.2} & \rev{27.4} & \rev{53.6} \\
\rev{ANA~\cite{ana}} & \rev{79.8} & \rev{75.4} & \rev{86.0} & \rev{51.4} & \rev{50.3} & \rev{38.9} & \rev{63.6} \\
\rev{LL2S~\cite{ll2s}} & \rev{88.2} & \rev{86.6} & \rev{96.7} & \rev{61.9} & \rev{60.6} & \rev{47.8} & \rev{73.6} \\
\hdashline
ILA~\cite{ila}
& 70.5
& 74.8
& 97.9
& 16.5
& 15.3
& 9.0
& 47.3
\\
NAA~\cite{naa}
& 82.7
& 82.4
& 98.3
& 30.1
& 29.6
& 21.0
& 57.4
\\
\rev{BFA~\cite{bfa}} & \rev{89.1} & \rev{87.3} & \rev{98.6} & \rev{63.2} & \rev{60.9} & \rev{48.6} & \rev{74.6} \\
\rev{P2FA(10,70)~\cite{p2fa}} & \rev{\underline{95.4}} & \rev{\underline{95.7}} & \rev{\underline{99.8}} & \rev{\underline{70.1}} & \rev{\underline{66.4}} & \rev{\underline{54.2}} & \rev{\underline{81.5}} \\
\hdashline
CDA~\cite{cda}
& 69.4
& 83.1
& 94.5
& 49.8
& 52.7
& 43.3
& 65.5
\\
\rev{DiffAttack~\cite{diffattack}} & \rev{41.4} & \rev{41.9} & \rev{52.6} & \rev{15.9} & \rev{14.0} & \rev{5.4} & \rev{28.5} \\
\hdashline
MEF\textsubscript{H}
& 90.6
& 88.7
& 99.7
& 64.5
& 63.3
& 50.4
& 76.2
\\
MEF\textsubscript{F}
& 95.1
& 94.2
& \textbf{99.9}
& 69.2
& 65.1
& 52.2
& 79.3
\\
\rev{MEF(10,70)} & \rev{\textbf{97.5}} & \rev{\textbf{96.6}} & \rev{\textbf{99.9}} & \rev{\textbf{73.1}} & \rev{\textbf{68.5}} & \rev{\textbf{55.7}} & \rev{\textbf{83.6}} \\
\hline
\end{tabular}
}
\end{small}
\end{center}
\vspace{-5pt}
\end{table}
\subsubsection{{Comparison with Other Types of Transfer-based Attacks}}
We validate MEF against three paradigms: surrogate refinement (SGM~\cite{sgm}, LinBP~\cite{linbp}, \rev{AWT~\cite{awt}, FAUG~\cite{faug}, ANA~\cite{ana}, LL2S~\cite{ll2s}}), feature-based (ILA~\cite{ila}, NAA~\cite{naa}, \rev{BFA~\cite{bfa}, P2FA~\cite{p2fa}}), and generative attacks (CDA~\cite{cda}, \rev{DiffAttack~\cite{diffattack}}). As shown in Table \ref{tab:exp_attacktype}, MEF consistently outperforms these approaches on Res-50~\cite{resnet}. Notably, MEF\textsubscript{F} (79.3\%) surpasses leading baselines like AWT~\cite{awt} (74.7\%) and BFA~\cite{bfa} (74.6\%) solely through geometry-aware input space optimization, avoiding reliance on architectural priors.

\rev{Among feature-based methods, the state-of-the-art method P2FA~\cite{p2fa} attains 81.5\% but incurs $3.5\times$ higher computational cost. Under a budget-aligned setting, our MEF(10,70) variant reclaims the lead with 83.6\% (+2.1\% over P2FA), confirming the fundamental superiority of zeroth-order flatness optimization over intermediate feature manipulation.}

\begin{table}[t]
\centering
\rev{
\caption{Comparison of Efficiency, Constraint Compliance, and Stealthiness on Inc-v3 ($\epsilon=16/255$).}
\vspace{-5pt}
\label{tab:query_vs_transfer}
\resizebox{1.0\linewidth}{!}{
\begin{tabular}{l|c|c|c|c|c}
\toprule
\textbf{Method} & \textbf{Category} & \textbf{Queries} & \textbf{ASR (\%)} & \textbf{LPIPS} $\downarrow$ & \textbf{SSIM} $\uparrow$ \\
\midrule
Boundary~\cite{boundaryattack} & Decision-based & 500 & 12.3 & 0.85 & 0.291 \\
Boundary~\cite{boundaryattack} & Decision-based & 5,000 & 24.5 & 0.72 & 0.346 \\
RayS~\cite{rays} & Decision-based & 500 & 59.8 & 0.18 & 0.887 \\
RayS~\cite{rays} & Decision-based & 5,000 & 94.3 & 0.05 & 0.947 \\
\midrule
SimBA~\cite{simba} & Score-based & 500 & 7.80 & 0.01 & 0.991 \\
SimBA~\cite{simba} & Score-based & 5,000 & 41.8 & 0.01 & 0.992 \\
Square~\cite{squareattack} & Score-based & 500 & 94.9 & 0.31 & 0.583 \\
Square~\cite{squareattack} & Score-based & 5,000 & 99.7 & 0.30 & 0.579 \\
\midrule
\rowcolor{gray!10} \textbf{MEF\textsubscript{H}} & \textbf{Transfer-based} & \textbf{0} & \textbf{90.6} & \textbf{0.24} & \textbf{0.751} \\
\rowcolor{gray!10} \textbf{MEF\textsubscript{F}} & \textbf{Transfer-based} & \textbf{0} & \textbf{95.1} & \textbf{0.30} & \textbf{0.692} \\
\bottomrule
\end{tabular}
}
}
\vspace{-5pt}
\end{table}
\subsubsection{{Comparison with Query-based Attacks}}
\rev{We benchmark MEF against score-based (SimBA~\cite{simba}, Square~\cite{squareattack}) and decision-based (Boundary~\cite{boundaryattack}, RayS~\cite{rays}) attacks on Inc-v3~\cite{incv3}. Unlike feedback-dependent baselines, MEF operates in a zero-query setting using a ResNet-50 surrogate. As shown in Table~\ref{tab:query_vs_transfer}, while Square Attack~\cite{squareattack} achieves marginally higher success (99.7\%), it incurs prohibitive costs (5,000 queries) and severe visual degradation (SSIM 0.579). In contrast, our zero-query $\text{MEF}_{\text{F}}$ attains a competitive 95.1\%, outperforming the decision-based SOTA RayS~\cite{rays} (94.3\% at 5,000 queries). Even the half-cost $\text{MEF}_{\text{H}}$ maintains 90.6\% success with superior stealth (SSIM 0.751). This confirms MEF offers a practical solution without the detection risks and computational overhead of high-frequency queries.}

\subsection{Practical Threats to Real-World Applications}

\rev{
\begin{table}[t]
  \centering
  \caption{\rev{Transfer Attack Success Rates on Six Commercial Vision APIs. The best results are \textbf{bold} and the second best are \underline{underlined}.}}
  \label{tab:real_world_apis}
  \setlength{\tabcolsep}{2.5pt}
  \renewcommand{\arraystretch}{1.5}
  \vspace{-5pt}
  \resizebox{\columnwidth}{!}{
  \rev{
  \begin{tabular}{lccccccccc}
    \toprule
    \multirow{2}{*}{Method} & \multicolumn{3}{c}{Google Cloud Vision} & \multicolumn{3}{c}{Amazon Rekognition} & \multicolumn{3}{c}{Azure AI Vision} \\
    \cmidrule(lr){2-4} \cmidrule(lr){5-7} \cmidrule(lr){8-10}
    & T-1 & T-5 & T-10 & T-1 & T-5 & T-10 & T-1 & T-5 & T-10 \\
    \midrule
    PGN~\cite{pgn} & 67.20\% & 44.30\% & 40.00\% & 54.50\% & 36.60\% & 26.30\% & 59.10\% & 32.00\% & 27.40\% \\
    FEM~\cite{femi} & 65.30\% & 44.10\% & 39.00\% & 57.10\% & 40.00\% & 31.00\% & 54.40\% & 26.10\% & 21.20\% \\
    FGSRA~\cite{fgsra} & 68.60\% & 46.10\% & 40.90\% & 55.90\% & 39.70\% & 29.20\% & 51.70\% & 22.30\% & 18.50\% \\
    \hdashline
    \textbf{MEF\textsubscript{H}} & \underline{71.60\%} & \underline{50.20\%} & \underline{46.10\%} & \underline{58.50\%} & \underline{42.80\%} & \underline{33.90\%} & \underline{60.90\%} & \underline{34.70\%} & \underline{29.10\%} \\
    \textbf{MEF\textsubscript{F}} & \textbf{79.40\%} & \textbf{63.30\%} & \textbf{59.00\%} & \textbf{68.80\%} & \textbf{55.50\%} & \textbf{46.70\%} & \textbf{67.18\%} & \textbf{40.33\%} & \textbf{34.88\%} \\
    \midrule
    \multirow{2}{*}{Method} & \multicolumn{3}{c}{Baidu AI Cloud} & \multicolumn{3}{c}{Alibaba Cloud} & \multicolumn{3}{c}{Tencent Cloud} \\
    \cmidrule(lr){2-4} \cmidrule(lr){5-7} \cmidrule(lr){8-10}
    & T-1 & T-5 & T-10 & T-1 & T-5 & T-10 & T-1 & T-5 & T-10 \\
    \midrule
    PGN~\cite{pgn} & 79.80\% & 61.10\% & 61.10\% & 86.40\% & 66.10\% & 66.10\% & 74.90\% & 53.10\% & 51.90\% \\
    FEM~\cite{femi} & 80.60\% & 61.30\% & 61.30\% & 83.40\% & 64.20\% & 64.20\% & 72.50\% & 50.80\% & 49.80\% \\
    FGSRA~\cite{fgsra} & 74.50\% & 57.80\% & 57.80\% & 83.00\% & 66.00\% & 66.00\% & \underline{76.00\%} & 52.70\% & 51.50\% \\
    \hdashline
    \textbf{MEF\textsubscript{H}} & \underline{83.40\%} & \underline{62.70\%} & \underline{62.70\%} & \underline{86.80\%} & \underline{66.80\%} & \underline{66.80\%} &       {75.10\%} & \underline{54.70\%} & \underline{53.60\%} \\
    \textbf{MEF\textsubscript{F}} & \textbf{86.30\%} & \textbf{70.10\%} & \textbf{70.10\%} & \textbf{91.70\%} & \textbf{77.30\%} & \textbf{77.30\%} & \textbf{84.70\%} & \textbf{67.10\%} & \textbf{66.10\%} \\
    \bottomrule
  \end{tabular}
  }
}
\vspace{-10pt}
\end{table}
}
\subsubsection{{Evaluation on Commercial Vision APIs}}
\rev{To assess practical threats beyond open-source models, we evaluate transferability against six commercial black-box systems. Unlike local models, these APIs run proprietary architectures with unknown preprocessing defenses. Given query costs, we compare MEF against the three leading baselines using the \textit{Top-k Exclusion} (T-k) metric, where the correct ground-truth label is excluded from the top-k predictions.}

\rev{Table~\ref{tab:real_world_apis} demonstrates that MEF consistently outperforms baselines across all platforms. Notably, on the rigorous Google Cloud Vision API, MEF\textsubscript{F} achieves 79.40\% T-1, surpassing the SOTA PGN (67.20\%) by a significant margin of 12.2\%. Even the half-cost MEF\textsubscript{H} (71.60\%) exceeds the strongest baseline. High success rates are also observed on Alibaba Cloud (91.70\%) and others. These results confirm that the flat local minima identified by our framework generalize effectively to unknown production pipelines. For intuitive visual evidence of these successful semantic subversions, we provide qualitative comparisons of API outputs in Fig.~\ref{fig:gcv_visualization} of the Supp.~\ref{sec:supp_gcv}.}

\begin{table}[t]
    \centering
    \caption{\rev{Transferability comparison (Fooling Rate \%) against Multimodal Large Models. The best results are \textbf{bold} and the second best are \underline{underlined}}}
    \label{tab:mllm_results}
    \renewcommand{\arraystretch}{1.2}
    \resizebox{0.95\linewidth}{!}{
    \rev{
    \begin{tabular}{l|cccc|c}
        \hline
        \textbf{Attack Method} & \textbf{LLaVA} & \textbf{Unidiffuser} & \textbf{MiniGPT-4} & \textbf{Qwen2.5-VL} & \textbf{Average} \\
        
        \hline
        \rev{APP~\cite{app}} & \rev{19.1} & \rev{24.9} & \rev{23.8} & \rev{23.4} & \rev{22.8} \\
        FEM~\cite{femi} & {22.5} & 29.3 & 26.5 & {29.4} & {26.9} \\
        PGN~\cite{pgn} & 22.0 & {32.4} & {27.8} & 24.5 & 26.7 \\
        FGSRA~\cite{fgsra} & 20.3 & 26.4 & 25.7 & 24.7 & 24.3 \\
        \rev{ANDA~\cite{anda}} & \rev{19.6} & \rev{25.5} & \rev{24.9} & \rev{23.9} & \rev{23.5} \\
        \rev{GI~\cite{gifgsm}} & \rev{19.8} & \rev{25.7} & \rev{25.0} & \rev{24.1} & \rev{23.6} \\
        \rev{MUMODIG~\cite{mumodig}} & \rev{20.2} & \rev{26.2} & \rev{25.5} & \rev{24.5} & \rev{24.1} \\
        \rev{GAA~\cite{gaa}} & \rev{19.8} & \rev{25.8} & \rev{25.1} & \rev{24.1} & \rev{23.7} \\
        \rev{FoolMix~\cite{foolmix}} & \rev{19.4} & \rev{25.2} & \rev{24.5} & \rev{23.6} & \rev{23.2} \\
        \hline
        \textbf{MEF\textsubscript{H}} & \underline{26.7} & \underline{35.4} & \underline{30.5} & \underline{33.4} & \underline{31.5} \\
        \textbf{MEF\textsubscript{H}} & \textbf{30.5} & \textbf{38.4} & \textbf{35.5} & \textbf{41.5} & \textbf{36.5} \\
        \hline
        
    \end{tabular}
    }
    }
\vspace{-10pt}
\end{table}
\subsubsection{{Transferability to Multimodal Large Language Models}}
\rev{To investigate transferability against Multimodal Large Language Models (MLLMs), we evaluate MEF on LLaVA~\cite{llava}, Unidiffuser~\cite{unidiffuser}, MiniGPT-4~\cite{minigpt-4}, and Qwen2.5-VL~\cite{qwen2.5-vl}. We generate adversarial examples on Res-50~\cite{resnet} and transfer them to these models. This challenging cross-task scenario requires disrupting semantic alignment across modalities.}

\revminor{We prompt models with ``Describe the image'' to generate captions $C_{cln}$ and $C_{adv}$ for clean and adversarial images. Let $D(\cdot)$ denote the CLIP text encoder~\cite{clip}. We report the \textit{CLIP-based Fooling Rate}, which is defined as the proportion of samples satisfying the semantic divergence condition:
\begin{equation}
\label{eq:mllm_metric}
\text{CosSim}\big(D(C_{cln}), D(C_{adv})\big) \le \tau,
\end{equation}
where $\tau=0.4$ is an empirical threshold adopted to distinguish successful attacks from benign caption variations.}

\rev{As shown in Table~\ref{tab:mllm_results}, MEF demonstrates superior cross-paradigm transferability. MEF\textsubscript{F} achieves the highest average fooling rate of 36.5\% across all models. Notably, even the efficient MEF\textsubscript{H} (31.5\% average) outperforms the best baseline FEM (26.9\%). On Qwen2.5-VL, MEF\textsubscript{F} reaches 41.5\%, surpassing baselines by over 12\%. This confirms that optimizing multi-order flatness creates robust adversarial features capable of compromising the visual encoders of advanced MLLMs (see Supp.~\ref{mllm_visual_exp} for qualitative examples).}

\subsection{Robustness against Defense Mechanisms}

\begin{table*}[tbp!]
\caption{Transfer attack success rate ($\%$) against modern defense mechanisms. Evaluating across three defense categories on seven protected targets, our MEF\textsubscript{H}/\textsubscript{F} outperform all competitors. The best results are \textbf{bold} and the second best are \underline{underlined}.}
\label{tab:eval_robust_model}
\vspace{-10pt}
\begin{center}
\begin{small}
\setlength{\extrarowheight}{0.25pt}
\scalebox{0.68}{
\begin{tabular}{c|cccccccc|cccccccc}
\hline
\multirow{2}{*}{Attack} & \multicolumn{8}{c|}{\textbf{Res-50} $\Longrightarrow$} & \multicolumn{8}{c}{\textbf{Inc-v3} $\Longrightarrow$} \\ & Inc-v3$_{adv}$ & Inc-v3$_{ens3}$ & Inc-v3$_{ens4}$ & IncRes-v2$_{ens}$ & RS & HGD & NRP & \rev{\textbf{AVG}} & Inc-v3$_{adv}$ & Inc-v3$_{ens3}$ & Inc-v3$_{ens4}$ & IncRes-v2$_{ens}$ & RS & HGD & NRP & \rev{\textbf{AVG}} \\
\hline
APP~\cite{app} & 50.1 & 49.0 & 48.5 & 34.6 & 38.6 & 60.3 & 88.0 & \rev{52.7} & 52.7 & 46.2 & 46.3 & 28.9 & 30.4 & 23.1 & 43.2 & \rev{38.7} \\
FEM~\cite{femi} & 50.2 & 49.3 & 46.4 & 31.4 & 36.4 & 72.2 & 89.5 & \rev{53.6} & 56.2 & 49.7 & 49.4 & 28.5 & 29.9 & 23.5 & 43.0 & \rev{40.0} \\
PGN~\cite{pgn} & 60.2 & 58.4 & 59.6 & 45.6 & 48.1 & 62.5 & 92.9 & \rev{61.0} & 69.2 & 61.1 & 62.2 & 41.6 & 38.1 & 27.7 & 54.6 & \rev{50.6} \\
\rev{ANDA~\cite{anda}} & \rev{50.2} & \rev{48.9} & \rev{47.3} & \rev{37.7} & \rev{41.0} & \rev{62.5} & \rev{87.7} & \rev{53.6} & \rev{58.2} & \rev{50.2} & \rev{50.7} & \rev{29.5} & \rev{29.4} & \rev{22.3} & \rev{45.1} & \rev{40.8} \\
\rev{FGSRA~\cite{fgsra}} & \rev{52.8} & \rev{51.4} & \rev{52.1} & \rev{36.7} & \rev{39.8} & \rev{63.6} & \rev{91.6} & \rev{55.4} & \rev{57.9} & \rev{50.5} & \rev{51.1} & \rev{31.9} & \rev{30.5} & \rev{26.1} & \rev{45.2} & \rev{41.9} \\
\rev{GI~\cite{gifgsm}} & \rev{50.5} & \rev{50.9} & \rev{48.1} & \rev{37.1} & \rev{37.9} & \rev{60.7} & \rev{89.7} & \rev{53.6} & \rev{57.5} & \rev{51.3} & \rev{52.2} & \rev{32.0} & \rev{32.1} & \rev{22.5} & \rev{44.1} & \rev{41.7} \\
\rev{MUMODIG~\cite{mumodig}} & \rev{53.2} & \rev{51.6} & \rev{51.8} & \rev{36.9} & \rev{40.2} & \rev{63.7} & \rev{87.7} & \rev{55.0} & \rev{57.2} & \rev{51.0} & \rev{52.1} & \rev{32.2} & \rev{32.1} & \rev{23.0} & \rev{46.1} & \rev{42.0} \\
\rev{GAA~\cite{gaa}} & \rev{49.7} & \rev{49.0} & \rev{51.1} & \rev{36.9} & \rev{39.5} & \rev{62.8} & \rev{85.6} & \rev{53.5} & \rev{55.2} & \rev{49.7} & \rev{50.0} & \rev{31.5} & \rev{31.7} & \rev{25.0} & \rev{45.1} & \rev{41.2} \\
\rev{FoolMix~\cite{foolmix}} & \rev{49.6} & \rev{50.7} & \rev{46.2} & \rev{34.9} & \rev{38.1} & \rev{62.2} & \rev{84.5} & \rev{52.3} & \rev{55.6} & \rev{48.5} & \rev{49.5} & \rev{29.7} & \rev{32.3} & \rev{23.4} & \rev{42.3} & \rev{40.2} \\
MEF\textsubscript{H} & \underline{62.4} & \underline{64.5} & \underline{63.3} & \underline{50.4} & \underline{49.2} & \underline{80.6} & \underline{94.6} & \rev{66.4} & \underline{69.7} & \underline{65.1} & \underline{65.4} & \underline{45.4} & \underline{38.3} & \underline{46.3} & \underline{54.8} & \rev{55.0} \\
MEF\textsubscript{F} & \textbf{70.1} & \textbf{69.2} & \textbf{65.1} & \textbf{52.2} & \textbf{51.5} & \textbf{83.0} & \textbf{97.2} & \rev{69.8} & \textbf{75.8} & \textbf{69.0} & \textbf{69.6} & \textbf{47.9} & \textbf{40.4} & \textbf{52.4} & \textbf{56.2} & \rev{58.8} \\
\hline
\end{tabular}
}
\end{small}
\end{center}
\vspace{-15pt}
\end{table*}
\subsubsection{{Evaluation on Standard Defense Methods}}
To evaluate robustness, we transfer adversarial examples generated on Inc-v3 and Res-50 to seven representative protected targets spanning adversarial training~\cite{advincv3,eat}, purification~\cite{hgd,nrp}, and certifiable defense~\cite{rs}, as detailed in Sec. IV-A. Table \ref{tab:eval_robust_model} compares MEF\textsubscript{H}/MEF\textsubscript{F} with \revminor{competitive} gradient stabilization attacks \revminor{that achieve an average success rate exceeding 40\%. Results for other baselines (e.g., MI~\cite{mi}, VMI~\cite{vmi-vni}) are provided in Supp.~\ref{sec:exp_supp_robust} (Table~\ref{tab:supp_robust}).} MEF\textsubscript{H} achieves \rev{60.70\%} average success rate across all defenses using half the computational budget of PGN~\cite{pgn}, \rev{a 4.86\% improvement over} PGN's \rev{55.84\%} baseline. When matching PGN's cost, MEF\textsubscript{F} further elevates the average success rate to \rev{64.30\%}, a \rev{8.46\%} absolute improvement. Notably, MEF\textsubscript{F} outperforms all competitors on individual defense types. This consistent superiority demonstrates our method's capability to bypass diverse defense mechanisms, establishing new state-of-the-art robustness against protected models.

\begin{table}[t]
\rev{
\caption{\rev{Transfer attack success rate ($\%$) against advanced defenses. The best results are \textbf{bold} and the second best are \underline{underlined}.}}
\label{tab:eval_robustbench_diffusion}
\vspace{-15pt}
\begin{center}
\begin{small}
\setlength{\tabcolsep}{2.5pt}
\setlength{\extrarowheight}{0.25pt}
\resizebox{\linewidth}{!}{
\rev{
\begin{tabular}{l|ccccc}
\hline
\multirow{2}{*}{\textbf{Attack}} & \multicolumn{2}{c}{\textbf{Diffusion-based Purifier}} & \multicolumn{3}{c}{\textbf{RobustBench Models}} \\
 & DiffPure\cite{diffpure} & DensePure\cite{densepure} & MIMIR\cite{mimir} & ARES\cite{ares} & MeanSparse\cite{meansparse} \\
\hline
\multicolumn{5}{c}\textbf{Surrogate Model: ResNet-50} \\
\hline
APP~\cite{app} & 40.2 & 40.0 & 8.1 & 9.1 & 9.9 \\
FEM~\cite{femi} & 42.8 & 42.0 & 8.1 & 9.1 & 9.9 \\
PGN~\cite{pgn} & \underline{46.9} & \underline{46.7} & \underline{9.5} & \underline{10.7} & \underline{11.6} \\
ANDA~\cite{anda} & 39.3 & 39.1 & 8.0 & 8.9 & 9.7 \\
FGSRA~\cite{fgsra} & 41.3 & 43.7 & 9.1 & 10.2 & 11.1 \\
GI~\cite{gifgsm} & 40.9 & 40.7 & 8.3 & 9.3 & 10.1 \\
MUMODIG~\cite{mumodig} & 41.4 & 41.3 & 8.4 & 9.4 & 10.2 \\
GAA~\cite{gaa} & 39.4 & 39.2 & 8.0 & 9.0 & 9.7 \\
FoolMix~\cite{foolmix} & 40.7 & 40.5 & 8.3 & 9.3 & 10.1 \\
\textbf{MEF\textsubscript{H}} & 44.5 & 44.3 & 8.8 & 9.8 & 10.7 \\
\textbf{MEF\textsubscript{F}} & \textbf{48.6} & \textbf{47.2} & \textbf{17.3} & \textbf{19.4} & \textbf{21.1} \\
\hline
\multicolumn{5}{c}\textbf{Surrogate Model: Inception-v3} \\
\hline
APP~\cite{app} & 34.8 & 34.4 & 7.0 & 7.9 & 8.6 \\
FEM~\cite{femi} & 41.6 & 40.9 & 8.4 & 9.4 & 10.2 \\
PGN~\cite{pgn} & \underline{45.1} & \underline{44.5} & \underline{9.1} & \underline{10.2} & \underline{11.1} \\
ANDA~\cite{anda} & 37.1 & 36.6 & 7.5 & 8.4 & 9.1 \\
FGSRA~\cite{fgsra} & 44.9 & 44.2 & 8.7 & 9.8 & 10.6 \\
GI~\cite{gifgsm} & 37.9 & 37.4 & 7.6 & 8.6 & 9.3 \\
MUMODIG~\cite{mumodig} & 37.6 & 37.1 & 7.6 & 8.5 & 9.3 \\
GAA~\cite{gaa} & 36.7 & 36.2 & 7.4 & 8.3 & 9.0 \\
FoolMix~\cite{foolmix} & 35.8 & 35.3 & 7.2 & 8.1 & 8.8 \\
\textbf{MEF\textsubscript{H}} & 43.7 & 42.9 & 8.5 & 9.6 & 10.4 \\
\textbf{MEF\textsubscript{F}} & \textbf{46.1} & \textbf{45.1} & \textbf{15.8} & \textbf{17.8} & \textbf{19.3} \\
\hline
\end{tabular}
}
}
\end{small}
\end{center}
}
\vspace{-10pt}
\end{table}
\subsubsection{{Evaluation on Advanced State-of-the-Art Defenses}}
\rev{To address the rapid evolution of defenses, we extend evaluation to state-of-the-art mechanisms: diffusion-based purification (DiffPure~\cite{diffpure}, DensePure~\cite{densepure}) and top-performing robust models from RobustBench~\cite{robustbench} (MIMIR~\cite{mimir}, ARES~\cite{ares}, MeanSparse~\cite{meansparse}). For comparison, we select the top-9 performing baselines from Table~\ref{tab:eval_robust_model}. As shown in Table~\ref{tab:eval_robustbench_diffusion}, these advanced defenses pose significant challenges. Nevertheless, MEF\textsubscript{F} consistently demonstrates superior resilience. Against purification defenses, it outperforms the runner-up PGN~\cite{pgn} by 1.7\% on DiffPure~\cite{diffpure}. More notably, against the highly robust MeanSparse model~\cite{meansparse}, MEF\textsubscript{F} achieves a remarkable success rate of 21.1\% (ResNet-50~\cite{resnet} as surrogate model), nearly doubling PGN~\cite{pgn}'s 11.6\%. This suggests that MEF's average-case flatness optimization generates adversarial examples that are structurally stable enough to survive stochastic purification and robust feature alignment.}

\subsection{Analysis of Efficiency, Stealthiness, and Sensitivity}

\subsubsection{{Convergence Analysis and Iteration Selection}}
\label{subsec:convergence}
\begin{figure}[tbp!]
\rev{
    \centering
    \includegraphics[width=0.8\columnwidth]{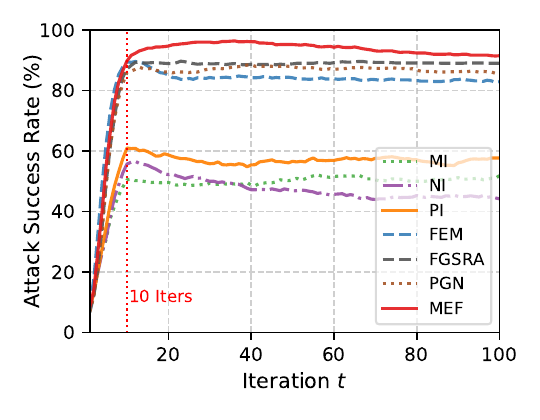}
    \vspace{-15pt}
    \caption{Attack Success Rate (ASR) convergence curve over 100 iterations (Source: Res-50~\cite{resnet} $\rightarrow$ Target: Inc-v3~\cite{incv3}). The dashed red line marks the conventional $T=10$ iteration cutoff.}
    \vspace{-5pt}
    \label{fig:convergence_curve}
}
\end{figure}
\begin{table}[t]
\caption{\rev{Convergence and Performance Comparison. $\Delta$ denotes the improvement from $T_{10}$ to $T_{max}$. The best results are \textbf{bold}.}}
\vspace{-5pt}
\centering
\label{tab:convergence_performance_full}
\resizebox{1.0\linewidth}{!}{
\rev{
\begin{tabular}{lcccc}
\toprule
\textbf{Method} & $\mathbf{T_{10}}$ ASR (\%) & $\mathbf{T_{\text{max}}}$ ASR (\%) & $\mathbf{\Delta}$ ASR (\%) & $\mathbf{T_{\text{max}}}$ Iter. \\
\midrule
MI~\cite{mi} & 50.50\% & 51.20\% & 0.70\% & 56 \\
NI~\cite{ni} & 55.70\% & 56.60\% & 0.90\% & 12 \\
PI~\cite{pi-emi} & 61.00\% & 61.00\% & 0.00\% & 10 \\
FEM~\cite{femi} & \underline{89.50\%} & \underline{89.60\%} & 0.10\% & 12 \\
FGSRA~\cite{fgsra} & 87.10\% & 89.70\% & 2.60\% & 23 \\
PGN~\cite{pgn} & 86.00\% & 88.50\% & 2.50\% & 40 \\
\hdashline
MEF (Ours) & \textbf{89.80\%} & \textbf{96.40\%} & 6.60\% & 36 \\
\bottomrule
\end{tabular}
}
}
\vspace{-10pt}
\end{table}

\rev{To evaluate the impact of iteration count $T$ on transferability, we conducted an extended convergence analysis ($T=100$) using Res-50~\cite{resnet} as the surrogate and Inc-v3~\cite{incv3} as the target. We selected six representative baselines, comprising both classic and SOTA methods. Since attackers in realistic black-box scenarios lack target model feedback to identify the optimal stopping point, assessing whether attacks can sustain transferability over extended iterations is crucial.}

\rev{The results, visualized in Fig.~\ref{fig:convergence_curve} and detailed in Table~\ref{tab:convergence_performance_full}, reveal distinct behaviors. For most baselines, including the SOTA PGN~\cite{pgn}, performance rapidly plateaus or even declines after reaching an early peak. This trend aligns with established findings~\cite{mi}, indicating that excessive iterations lead to overfitting on the surrogate model, thereby hampering transferability. Consequently, the gap between their $T_{10}$ performance and absolute peak ($T_{max}$) is marginal ($<3\%$), validating $T=10$ as a cost-effective standard for these methods. In contrast, MEF effectively mitigates overfitting through flatness optimization, exhibiting a significantly higher optimization ceiling. It continues to accumulate gains up to $T=36$, achieving a peak ASR of 96.40\% (a 6.6\% improvement over $T=10$). Crucially, MEF remains strictly superior to all competitors throughout the entire 100-iteration process, confirming its dominance for both resource-constrained ($T=10$) and determined attackers.}

\subsubsection{{Perceptual Quality Analysis}}
\label{subsec:quality}
\begin{figure}[t]
    \centering
    \includegraphics[width=0.8\linewidth]
    {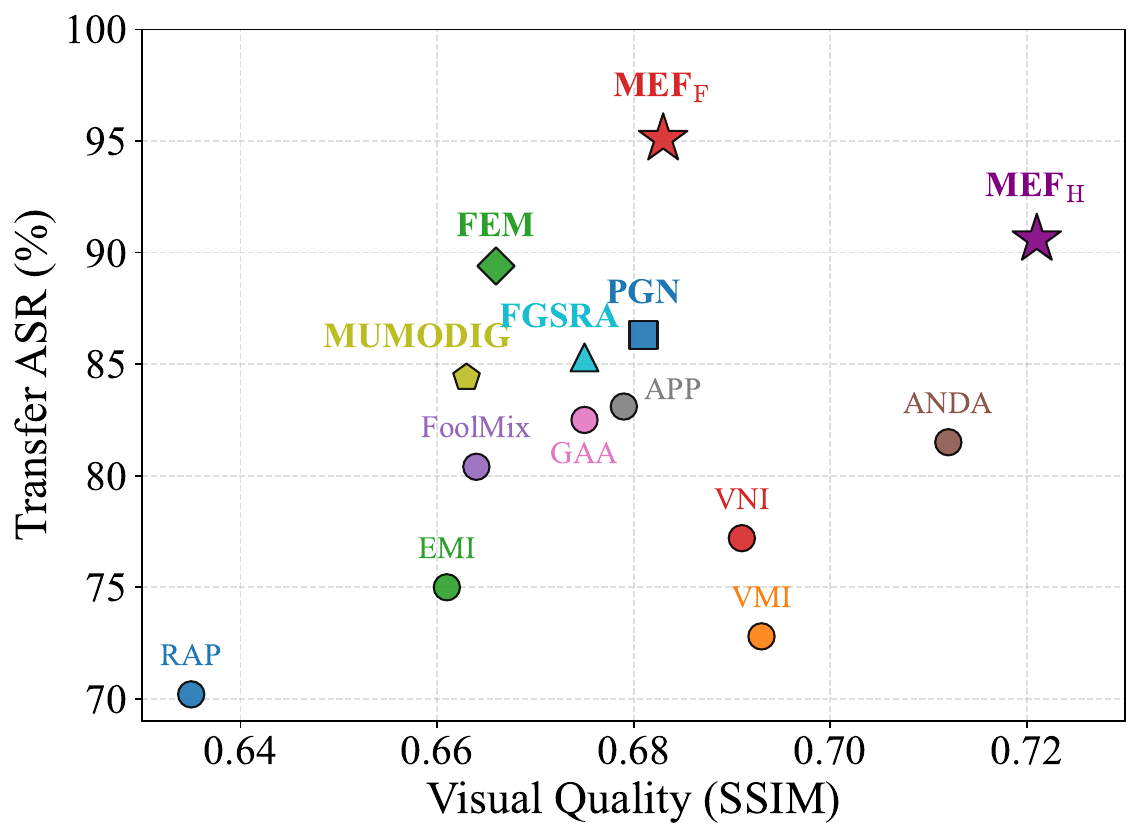}
    \vspace{-10pt}
    \caption{\rev{Comparison of Transferability (ASR) vs. Visual Quality (SSIM). Transfer ASR are valuated on Res-50~\cite{resnet} $\to$ Inc-v3~\cite{incv3} with $\epsilon=16/255$.}}
    \label{fig:visual_comparison}
    \vspace{-15pt}
\end{figure}
\begin{table}[t]
    \centering
    \caption{\rev{Quantitative comparison of visual perceptibility. The best results are \textbf{bold} and the second best are \underline{underlined}.}}
    \vspace{-10pt}
    \label{tab:top5_visual_quality}
    \renewcommand{\arraystretch}{1.3}
    \resizebox{1.0\linewidth}{!}{
    \rev{
    \begin{tabular}{l|c|cccc}
        \hline
        \multirow{2}{*}{\textbf{Method}} & \textbf{Transferability} & \multicolumn{4}{c}{\textbf{Visual Quality Metrics}} \\
        \cline{2-6} 
         & \textbf{ASR (\%)} $\uparrow$ & \textbf{SSIM} $\uparrow$ & \textbf{PSNR} $\uparrow$ & \textbf{LPIPS} $\downarrow$ & \textbf{DISTS} $\downarrow$ \\
        \hline
        MUMODIG~\cite{mumodig} & 84.4 & \underline{0.696} & \underline{26.86} & 0.323 & 0.247 \\
        FGSRA~\cite{fgsra}     & 85.3 & 0.683 & {26.62} & 0.313 & 0.252 \\
        PGN~\cite{pgn}         & 86.3 & 0.681 & 26.36 & \underline{0.294} &  0.250 \\
        FEM~\cite{femi}        & 89.4 & 0.666 & 26.44 & 0.322 & 0.251 \\
        \hdashline
        \textbf{MEF\textsubscript{H}} & \underline{90.6} & \textbf{0.721} & \textbf{28.35} & \textbf{0.250} & \textbf{0.241} \\
        \textbf{MEF\textsubscript{F}} & \textbf{95.1} & {0.692} & 26.56 & {0.295} & \underline{0.244} \\
        \hline
    \end{tabular}
    }
    }
\vspace{-10pt}
\end{table}
\rev{We evaluate perceptual quality ($\epsilon=16/255$) via the ASR-SSIM trade-off (Fig.~\ref{fig:visual_comparison}) and quantitative metrics (Table~\ref{tab:top5_visual_quality}). As shown in Fig.~\ref{fig:visual_comparison}, MEF occupies the optimal top-right quadrant, balancing strength and stealth. Quantitatively, $\text{MEF}_{\text{H}}$ achieves the best fidelity (SSIM 0.721, LPIPS 0.250), surpassing the prior leader MUMODIG~\cite{mumodig}. Meanwhile, $\text{MEF}_{\text{F}}$ yields the highest transferability (95.1\%) with competitive stealth (outperforming PGN~\cite{pgn} in DISTS). This confirms MEF's superiority stems from geometric efficiency. Qualitative results are in Supp.~\ref{sec:supp_visual}.}

\subsubsection{{Computational Efficiency Analysis}}
\begin{table}[t]
    \centering
    \caption{\rev{Computational efficiency and resource cost comparison. We report \revminor{computational cost (Backpropagations/BP), Peak GPU Memory (MiB), and} average runtime (seconds/image).}}
    \vspace{-5pt}
    \label{tab:efficiency}
    \renewcommand{\arraystretch}{1.2}
    \resizebox{0.9\linewidth}{!}{
    \rev{
    \begin{tabular}{l|c|c|c|c}
        \hline
        \textbf{Method} & \textbf{Time (s/img)} $\downarrow$ & \revminor{\textbf{BP (count)} $\downarrow$} & \revminor{\textbf{Peak Mem. (MiB)} $\downarrow$} & \textbf{ASR (\%)} $\uparrow$ \\
        \hline
        MI~\cite{mi} & 0.29 & \revminor{10} & \revminor{1397} & 50.5 \\
        NI~\cite{ni} & 0.29 & \revminor{10} & \revminor{1400} & 56.5 \\
        PI~\cite{pi-emi} & 0.30 & \revminor{10} & \revminor{1403} & 60.2 \\
        \hdashline
        VMI~\cite{vmi-vni} & 0.56 & \revminor{200} & \revminor{4316} & 72.8 \\
        VNI~\cite{vmi-vni} & 0.59 & \revminor{200} & \revminor{4315} & 77.2 \\
        EMI~\cite{pi-emi} & 0.48 & \revminor{200} & \revminor{4309} & 75.0 \\
        FEM~\cite{femi} & 2.80 & \revminor{210} & \revminor{1,788} & 89.4 \\
        FGSRA~\cite{fgsra} & 0.96 & \revminor{400} & \revminor{4100} & 85.3 \\
        PGN~\cite{pgn} & 1.13 & \revminor{400} & \revminor{4488} & 86.3 \\
        \hdashline
        \textbf{MEF\textsubscript{H}} & 0.46 & \revminor{200} & \revminor{4090} & \underline{90.6} \\
        \textbf{MEF\textsubscript{F}} & 0.97 & \revminor{400} & \revminor{4097} & \textbf{95.1} \\
        \hline
    \end{tabular}
    }
    }
    \vspace{-15pt}
\end{table}
\rev{We evaluate computational efficiency by measuring \revminor{the hardware-agnostic Backpropagation (BP) counts, Peak GPU Memory, and} the average wall-clock runtime required to generate an adversarial example (averaged over 100 randomly selected samples on an NVIDIA RTX 4090). For fair comparison, all sampling-based methods employ $N=20$ points with full synchronous parallel computation. We benchmark against basic single-point attacks and representative, well-established sampling-based methods.}

\rev{As detailed in Table~\ref{tab:efficiency}, single-point methods are naturally the fastest \revminor{(10 BPs, $\approx$ 1400 MiB)} but suffer from limited transferability. In contrast, MEF\textsubscript{H} achieves a remarkable 90.6\% success rate \revminor{with only 200 BPs and 4090 MiB memory}. Crucially, compared to the flatness-based SOTA PGN~\cite{pgn} \revminor{(400 BPs, 4488 MiB), MEF\textsubscript{H} halves the gradient cost and reduces memory footprint}, delivering a $\mathbf{2.4\times}$ speedup while improving ASR by 4.3\%. Even our full-cost variant MEF\textsubscript{F} remains faster than PGN \revminor{with lower memory usage}, and significantly outpaces heavy methods like FEM~\cite{femi} (2.80s). This confirms that our GBO strategy effectively mitigates the computational burden, achieving an optimal efficiency-performance ratio.}

\begin{figure*}[htbp!]
    \centering

    \subfloat[$\gamma$ \label{fig:gamma}]{
        \begin{minipage}[t]{0.33\textwidth}
            \centering
            \includegraphics[width=\linewidth]{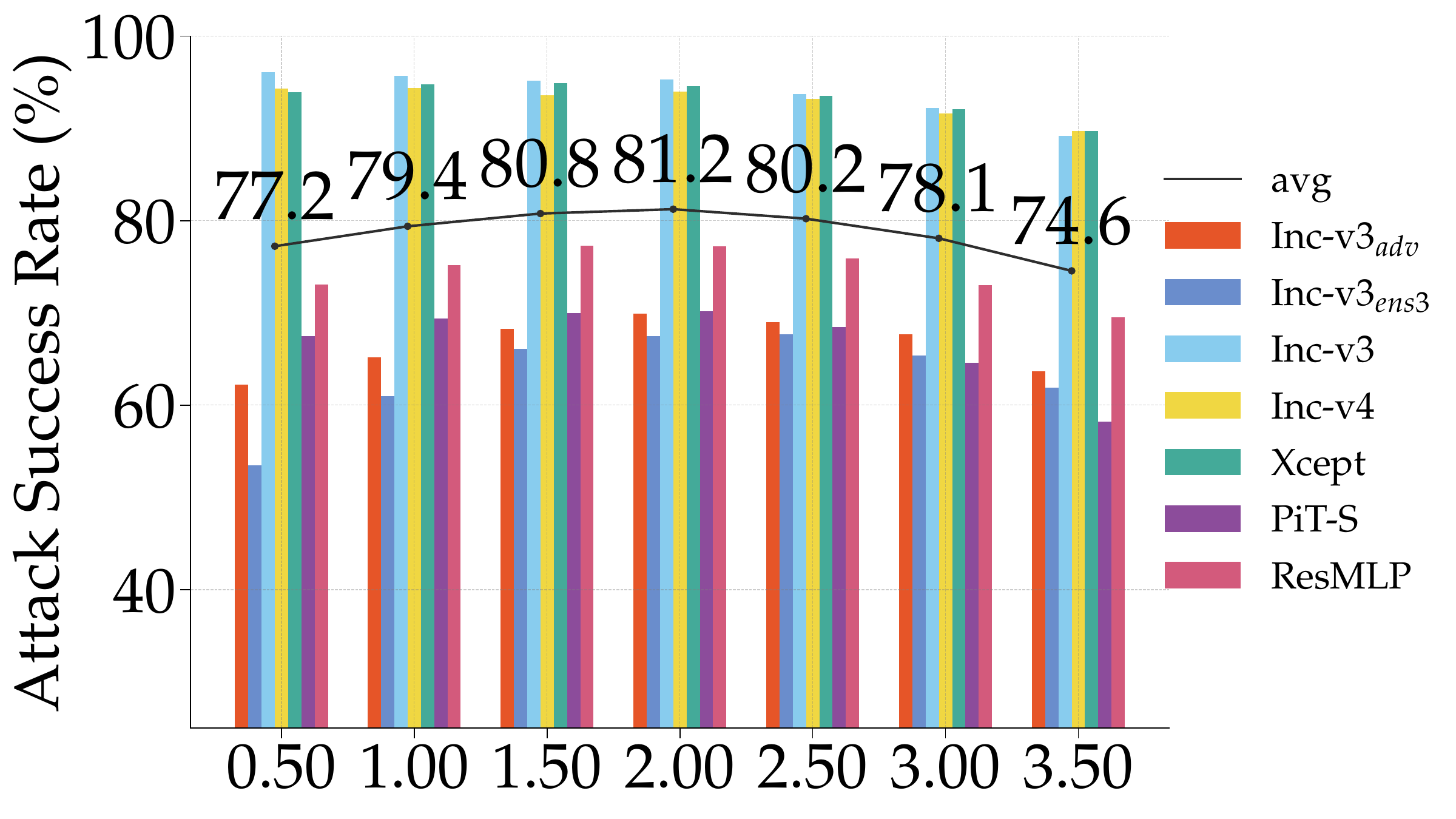}
        \end{minipage}
    }
    \hspace{-15pt}
    \subfloat[$\xi$ \label{fig:xi}]{
        \begin{minipage}[t]{0.33\textwidth}
            \centering
            \includegraphics[width=\linewidth]{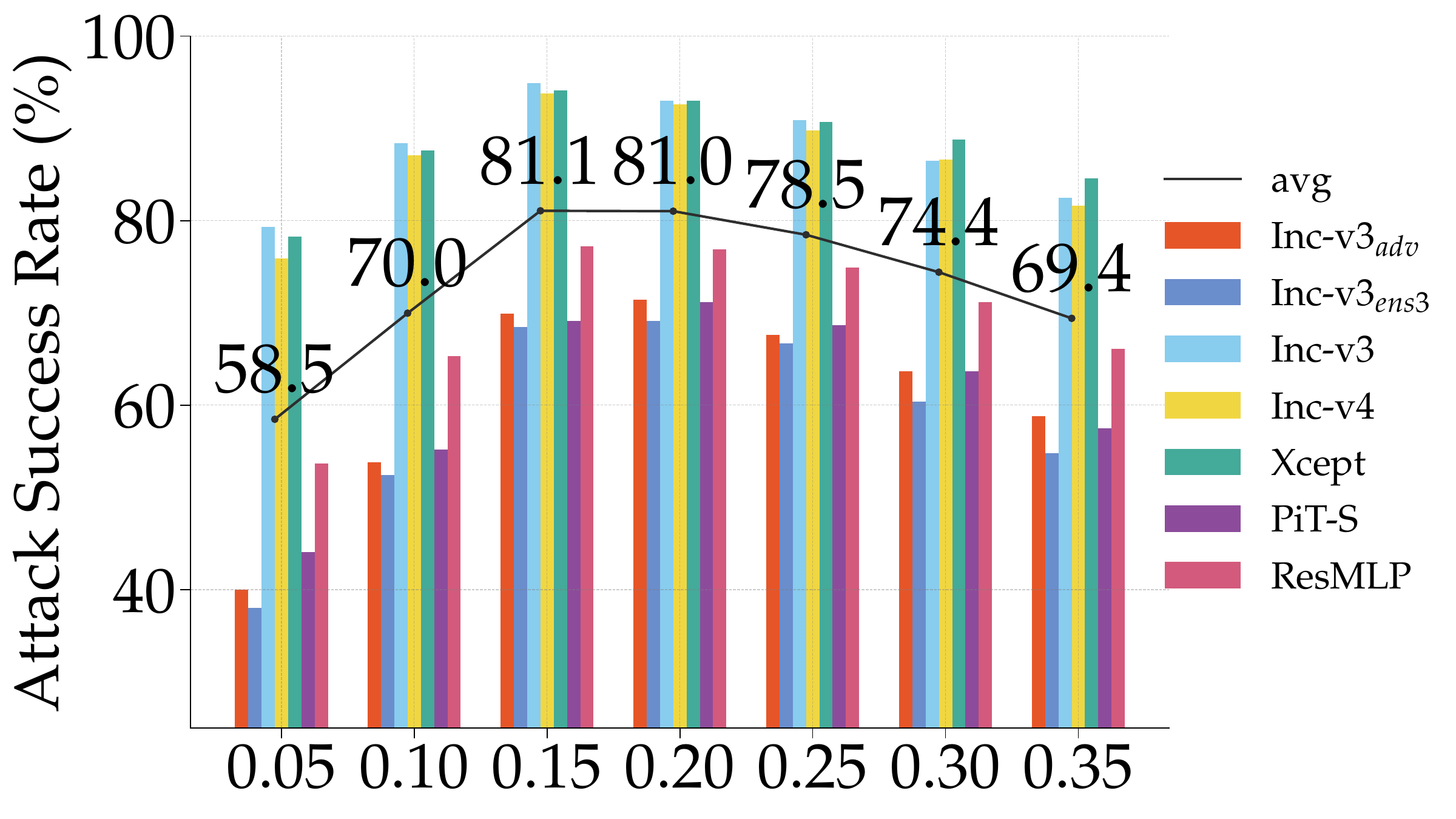}
        \end{minipage}
    }
    \hspace{-15pt}
    \subfloat[$N$ \label{fig:n}]{
        \begin{minipage}[t]{0.33\textwidth}
            \centering
            \includegraphics[width=\linewidth]{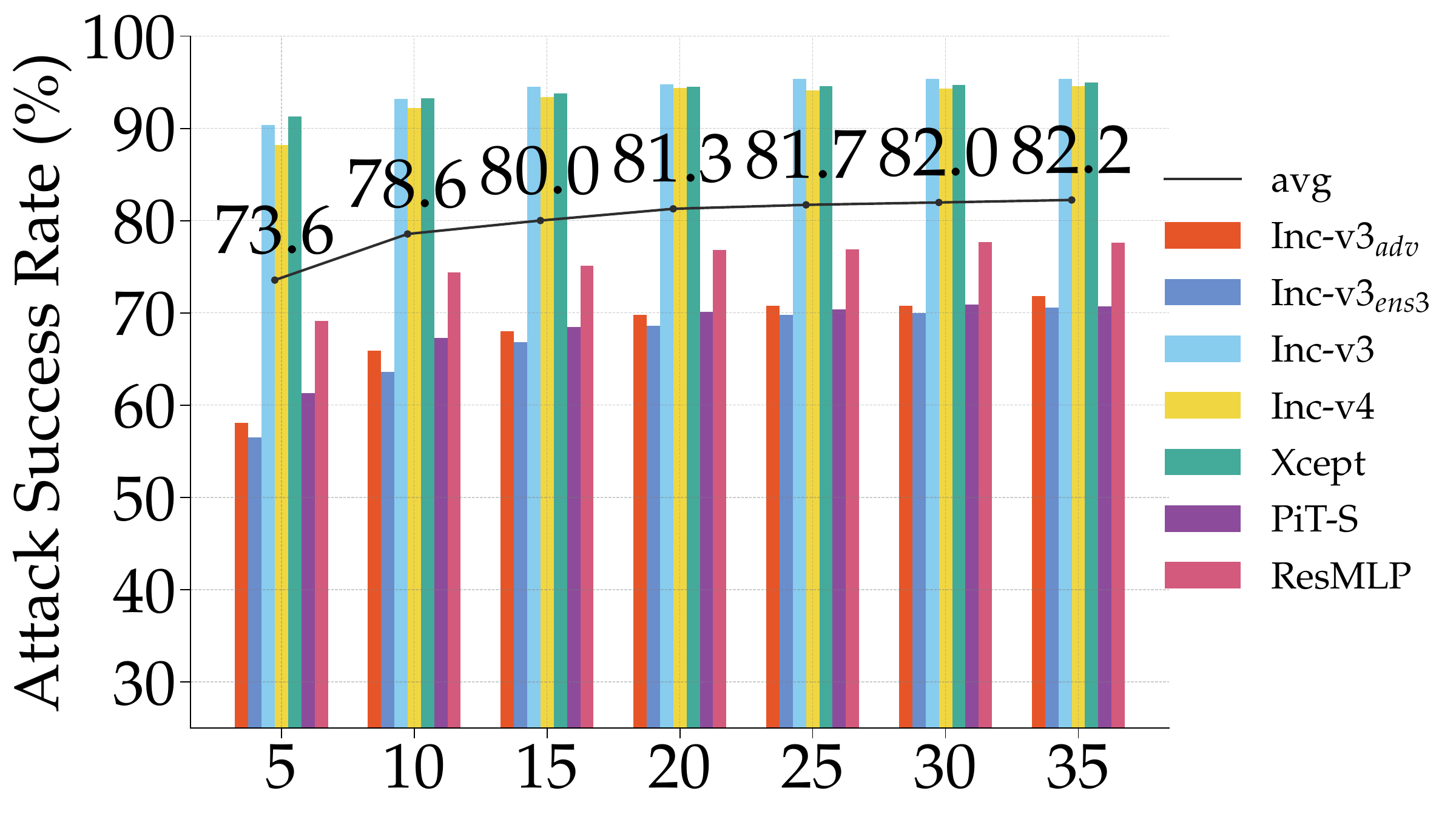}
        \end{minipage}
    }

    \vspace{-10pt}
    
    \subfloat[$\mu_{outer}$ \label{fig:muouter}]{
        \begin{minipage}[t]{0.33\textwidth}
            \centering
            \includegraphics[width=\linewidth]{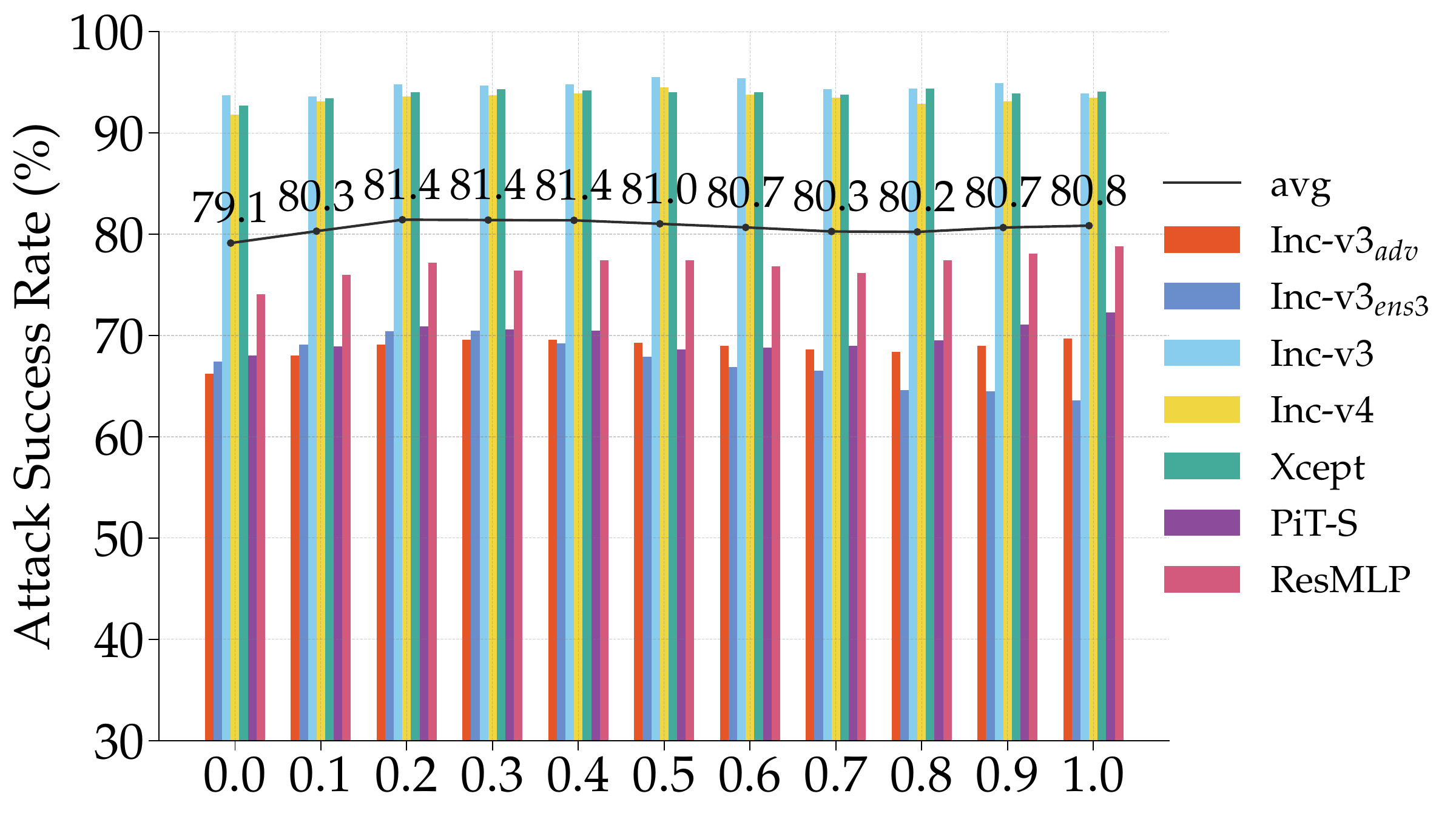}
        \end{minipage}
    }
    \hspace{-15pt}
    \subfloat[$\mu_{inner}$ \label{fig:muinner}]{
        \begin{minipage}[t]{0.33\textwidth}
            \centering
            \includegraphics[width=\linewidth]{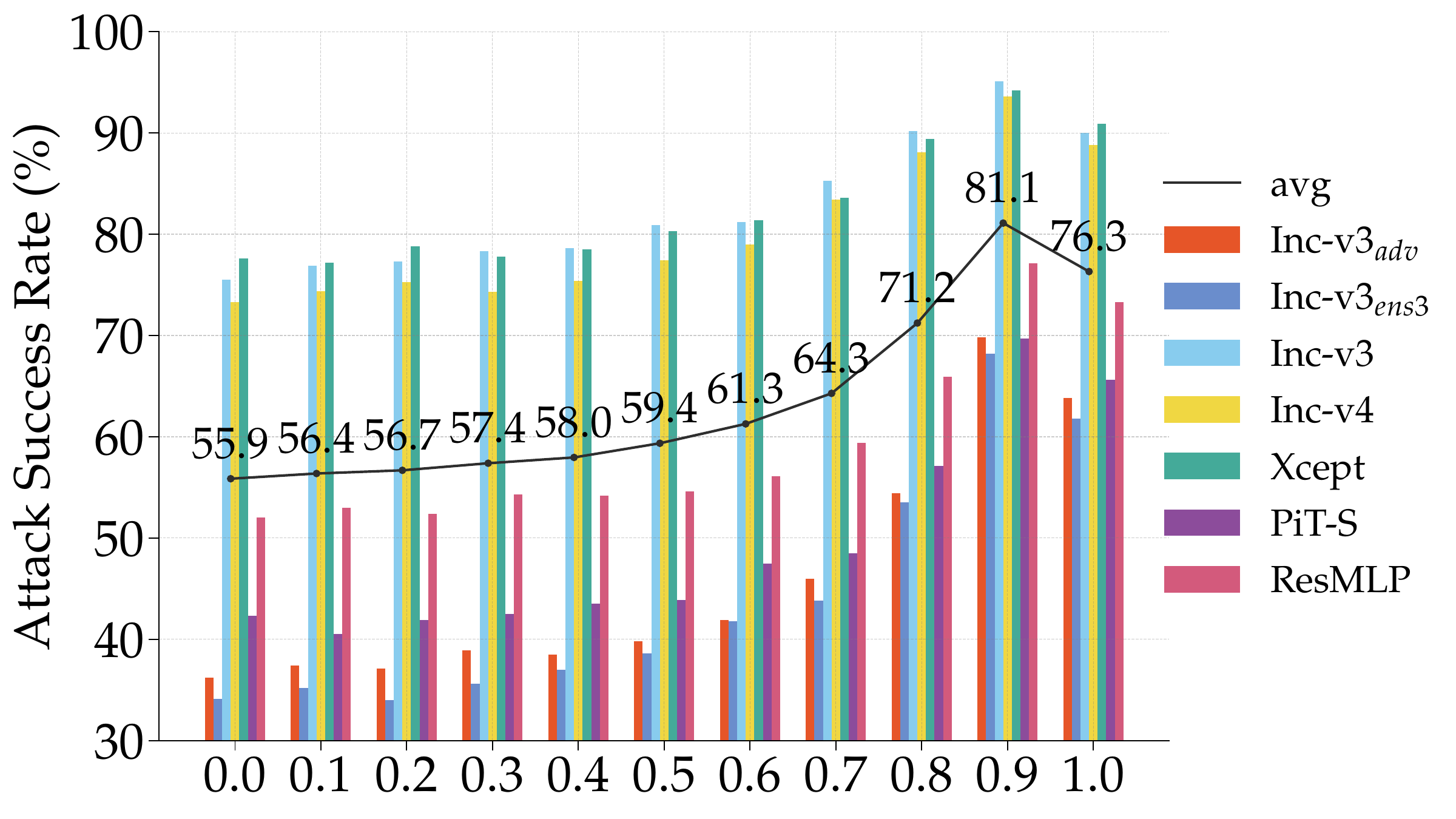}
        \end{minipage}
    }
    \hspace{-15pt}
    \subfloat[$T$ \label{fig:t}]{
        \begin{minipage}[t]{0.33\textwidth}
            \centering
            \includegraphics[width=\linewidth]{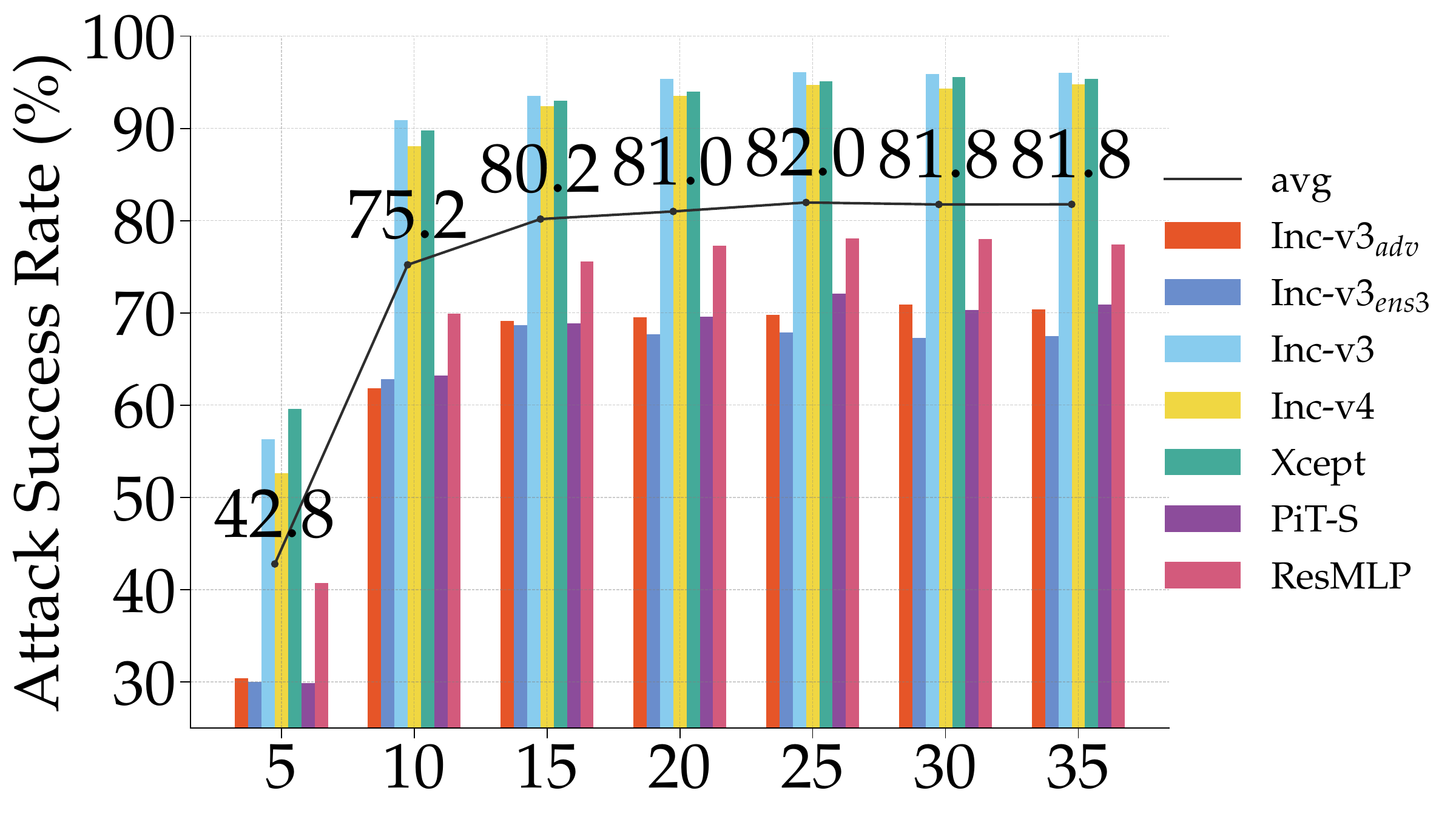}
        \end{minipage}
    }

    \vspace{-2.5pt}
    
    \caption{Parameter ablation study on MEF, evaluating $\gamma$/$\xi$, $\mu_{inner}$/$\mu_{outer}$, and $N$/$T$ impacts on transfer success rates across six challenging targets.}
    \label{fig:ablation_result}
    \vspace{-15pt}
\end{figure*}
\subsubsection{{Sensitivity to Perturbation Budget}}
\label{subsec:epsilon}
\rev{
To assess attack robustness under stricter imperceptibility constraints, we evaluate MEF against the top-4 performing baselines (Res-50 $\to$ Inc-v3) while varying $\epsilon$ from $2/255$ to $16/255$. As illustrated in Fig.~\ref{fig:epsilon_ablation}, MEF consistently outperforms competitors across all magnitudes. Notably, at the restrictive budget of $\epsilon=4/255$, MEF achieves a success rate of 41.3\%, surpassing the runner-up PGN (31.7\%) by a significant margin of 9.6\%. This confirms that MEF's superiority stems from the precise identification of geometrically flat directions rather than reliance on large perturbation norms. For detailed visual inspections across these budgets, please refer to Supp.~\ref{sec:supp_visual}.
}
\begin{figure}[tbp!]
    \centering
    \includegraphics[width=0.7\linewidth]{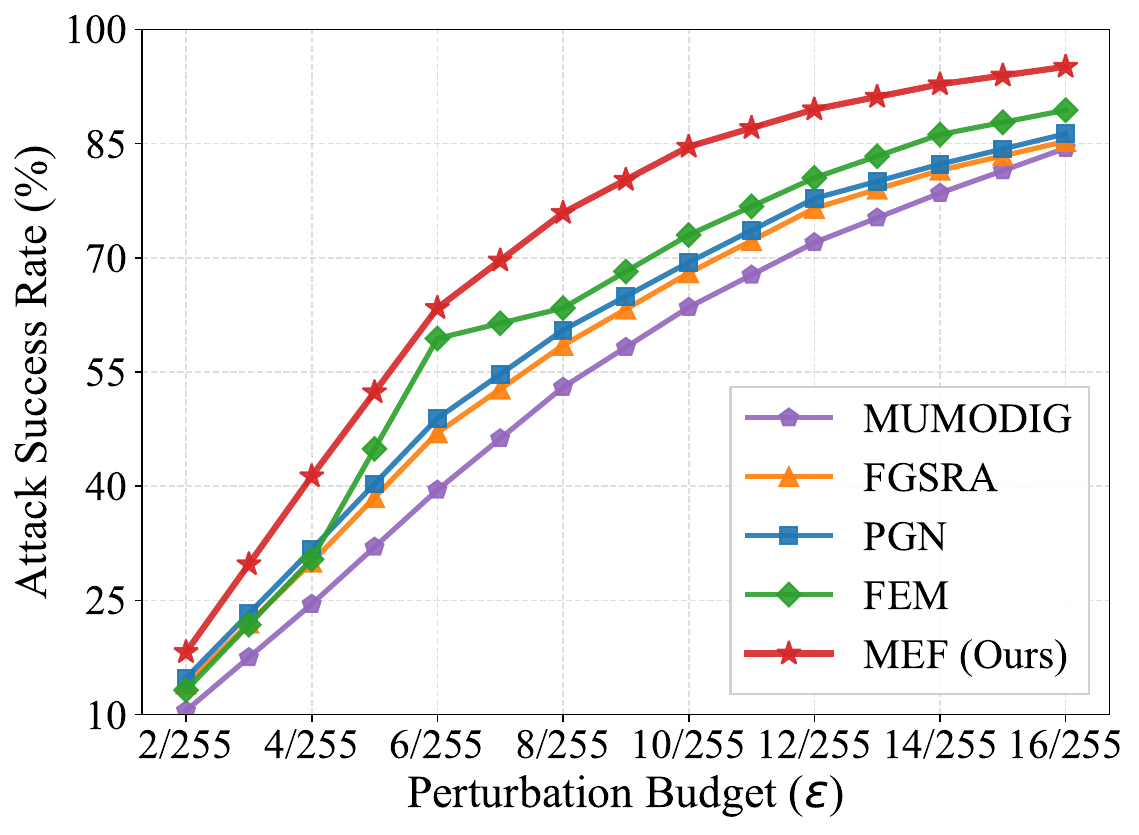}
    \vspace{-12.5pt}
    \caption{\rev{Attack success rate under varying perturbation budgets ($\epsilon$).}}
    \vspace{-10pt}
    \label{fig:epsilon_ablation}
\end{figure}

\subsubsection{{Parameter Sensitivity}}
To analyze the impact of key components in MEF, \rev{we evaluate six parameters ($\gamma, \xi, \mu_{outer}, \mu_{inner}, N, T$) across seven diverse targets, ranging from standard CNNs to defended models.} The results illustrated in Figure~\ref{fig:ablation_result} reveals three trends. First, \rev{both neighborhood radius $\gamma=2\epsilon$ and exploration radius $\xi=0.15\epsilon$ generalize as optimal values across all targets, demonstrating that MEF is robust to parameter choices and does not require laborious model-specific tuning.} Second, \rev{inner momentum $\mu_{inner}=0.9$ proves indispensable (delivering a 10\% boost), while outer momentum provides marginal gains.} Third, \rev{increasing sampling size $N$ shows continuous improvement, while extending iterations $T$ to 20 yields substantial gains (37.2\%).}

\begin{table}[t]
\centering
\caption{Quantitative Comparison of Zeroth/First-Order Flatness.}
\vspace{-8pt}
\label{tab:flatness_compare}
\begin{scriptsize}
\newcolumntype{Y}{>{\centering\arraybackslash}X}
\renewcommand{\arraystretch}{1}

\begin{tabularx}{0.7\columnwidth}{lYY} 
\toprule
\multirow{2}{*}{\textbf{Attack}} & \textbf{Res-50}~\cite{resnet} & \textbf{Inc-v3}~\cite{incv3} \\
 & ($\overline{R}^{(0)} / \overline{R}^{(1)}$) & ($\overline{R}^{(0)} / \overline{R}^{(1)}$) \\
\midrule
VMI~\cite{vmi-vni} & 29.03 / 0.49 & 8.32 / 0.64 \\
VNI~\cite{vmi-vni} & 23.04 / 0.41 & 7.74 / 0.59 \\
EMI~\cite{pi-emi}  & 25.57 / 0.45 & 8.77 / 0.62 \\
RAP~\cite{rap}     & 15.98 / 0.45 & 7.80 / 0.62 \\
GNP~\cite{gnp}     & 50.49 / 0.53 & 29.37 / 0.86 \\
FEM~\cite{femi}    & 12.20 / 0.55 & 3.52 / 0.79 \\
APP~\cite{app}     & 2.41 / 0.47  & 1.75 / 0.65 \\
TPA~\cite{tpa}     & 2.71 / 0.18  & 1.24 / 0.27 \\
PGN~\cite{pgn}     & 2.93 / 0.37  & 2.11 / 0.56 \\
\midrule
MEF\textsubscript{H} & \textbf{2.20} / 0.26 & 1.09 / 0.27 \\
MEF\textsubscript{F} & 2.34 / \textbf{0.10} & \textbf{1.06} / \textbf{0.21} \\
\bottomrule
\end{tabularx}
\end{scriptsize}
\vspace{-20pt}
\end{table}
\subsubsection{Landscape Flatness Analysis}
\label{sec:exp_flatness_analysis}
\revminor{We quantify landscape smoothness via Monte Carlo estimation of zeroth- ($\overline{R}^{(0)}_{\xi}$) and first-order ($\overline{R}^{(1)}_{\xi}$) flatness ($\xi = 32/255$). Table~\ref{tab:flatness_compare} reports the results on Res-50~\cite{resnet} and Inc-v3~\cite{incv3}. MEF\textsubscript{H} consistently achieves the lowest $\overline{R}^{(0)}_{\xi}$, while MEF\textsubscript{F} further minimizes $\overline{R}^{(1)}_{\xi}$, demonstrating that optimizing zeroth-order flatness inherently suppresses gradient variations. These metrics statistically confirm the correlation between flatness minimization and the enhanced transferability observed in Sec. VI-D. For loss landscape visualizations, please refer to Supp.~\ref{sec:supp_flatness_vis}.}
\section{Discussion and Conclusion}
\label{sec:conclusion}

\noindent\textbf{Discussion.} While MEF significantly outperforms state-of-the-art baselines across challenging \textit{cross-task} and \textit{cross-architecture} settings, strictly bridging the structural gap between surrogate and target models remains an open challenge. \revminor{Moreover, we acknowledge the inherent trade-off between transferability and visual stealthiness in the zero-query setting. Unlike white-box or query-based approaches, transfer-based attacks generally necessitate a larger perturbation budget to ensure the generalization of adversarial features to unknown targets. Nevertheless, MEF establishes a superior trade-off, achieving the most favorable balance between ASR and visual quality among existing transfer-based methods.} \revminor{Addressing these limitations constitutes a meaningful direction for future research. Specifically, future exploration of agnostic priors aims to reconcile the trade-off between cross-domain transferability and visual stealthiness.}

\noindent\textbf{Conclusion.} This work establishes the first theoretical foundation linking multi-order flatness to adversarial transferability. By unifying fragmented definitions and resolving the exploitation-exploration dilemma, we proposed the Maximin Expected Flatness (MEF) attack. Extensive evaluations demonstrate that MEF achieves state-of-the-art transferability and superior efficiency across diverse models and defenses. We hope this work inspires both deeper and broader adoption of principled flatness optimization in AI security.

\small{
\bibliographystyle{IEEEtran}
\bibliography{IEEEabrv, refer}
}

\clearpage

\begin{IEEEbiography}[{\includegraphics[width=1in,height=1.25in,clip,keepaspectratio]{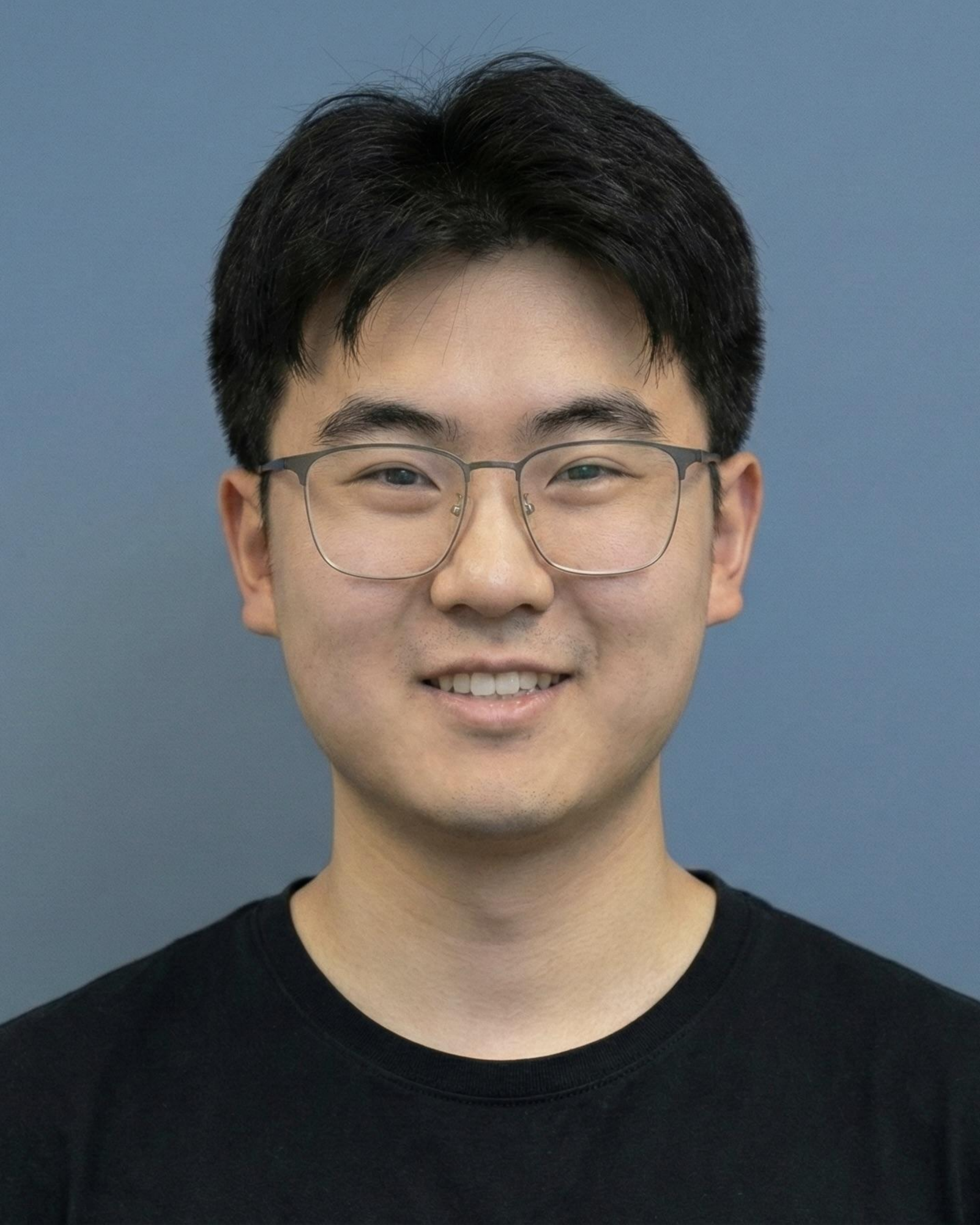}}]{Chunlin Qiu}
is working towards the Ph.D. degree in the School of Cyber Science and Engineering, Wuhan University, China. He received the B.E. degree in Information Security from Wuhan University, China, in 2023. His research interests primarily focus on AI Security and AI Safety.
\end{IEEEbiography}
\begin{IEEEbiography}[{\includegraphics[width=1in,height=1.25in,clip,keepaspectratio]{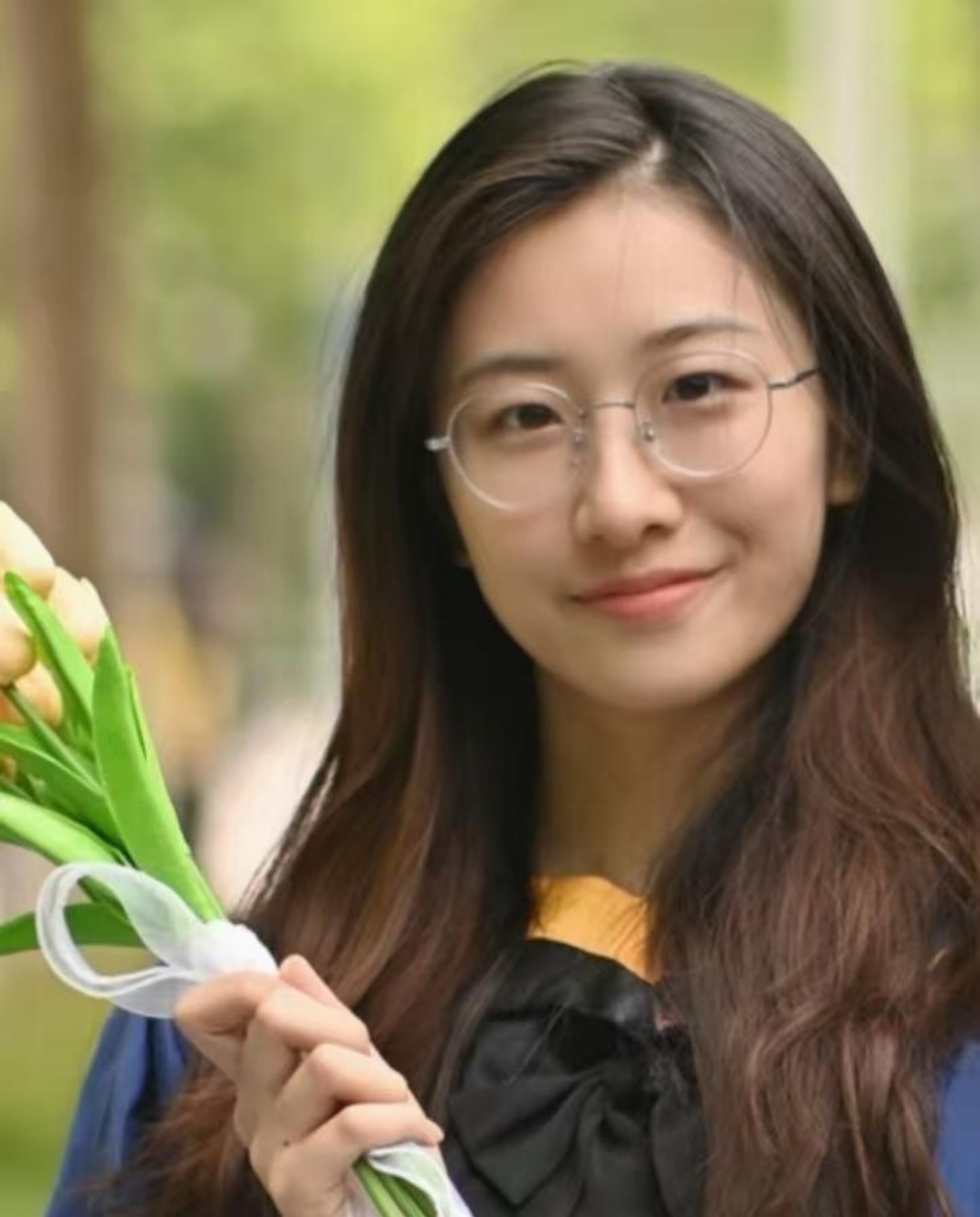}}]{Ang Li}
is currently pursuing the Ph.D. degree with the School of Cyber Science and Engineering, Wuhan University, China. She received the B.E. degree in Automation and the M.S. degree in Control Science and Engineering from the China University of Petroleum (East China), in 2022 and 2024, respectively. Her research interests include machine learning and AI security.
\end{IEEEbiography}
\begin{IEEEbiography}[{\includegraphics[width=1in,height=1.25in,clip,keepaspectratio]{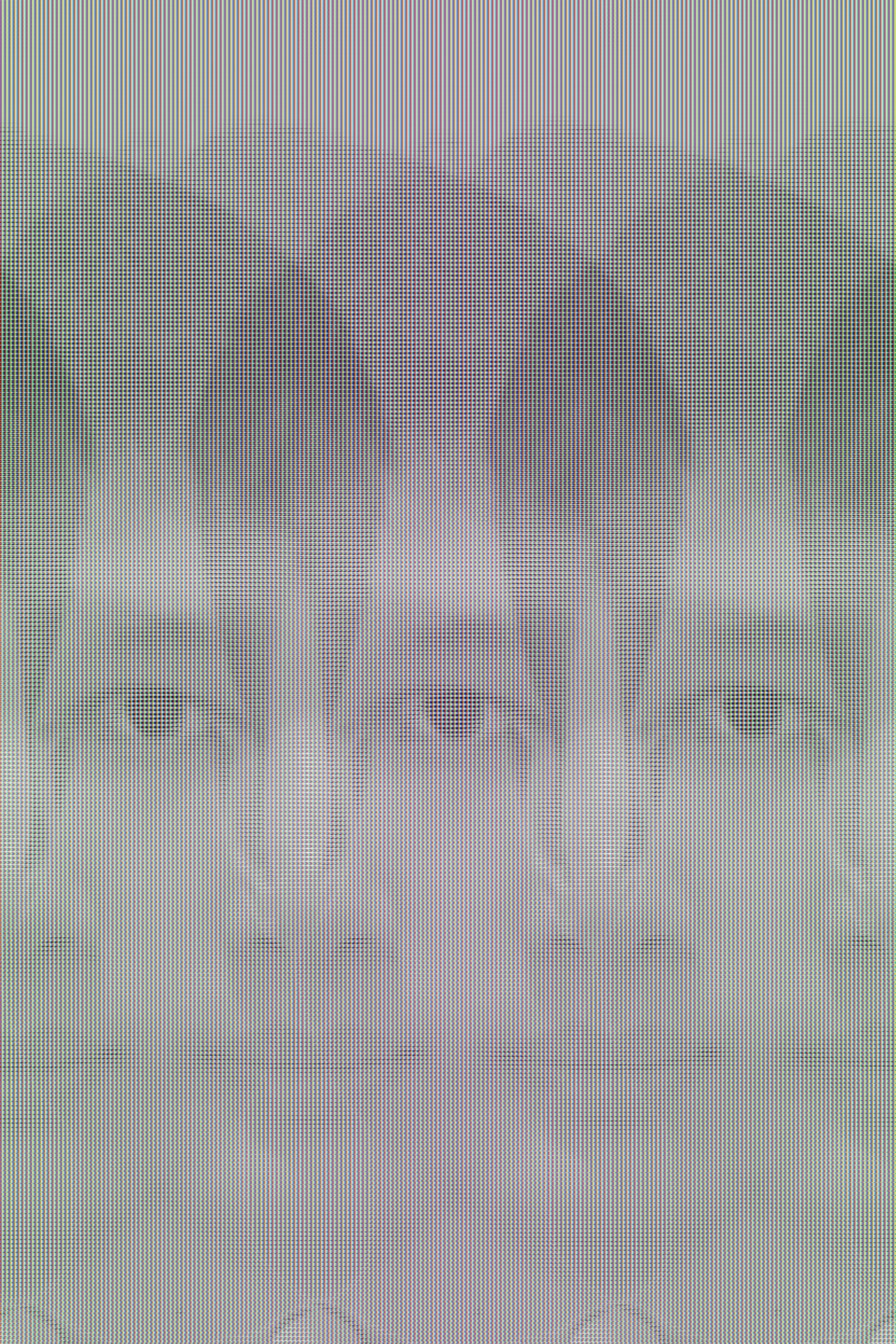}}]{Yiheng Duan}
received the B.E. degree from the School of Computer Science, Wuhan University, China, in 2021, and the M.S. degree from the School of Cyber Science and Engineering in 2024. He is currently a Staff Engineer with the Alibaba Group. He was affiliated with the Key Laboratory of Aerospace Information Security and Trusted Computing, Ministry of Education, during his graduate studies. His research interests include machine learning and AI security.
\end{IEEEbiography}
\begin{IEEEbiography}[{\includegraphics[width=1in,height=1.25in,clip,keepaspectratio]{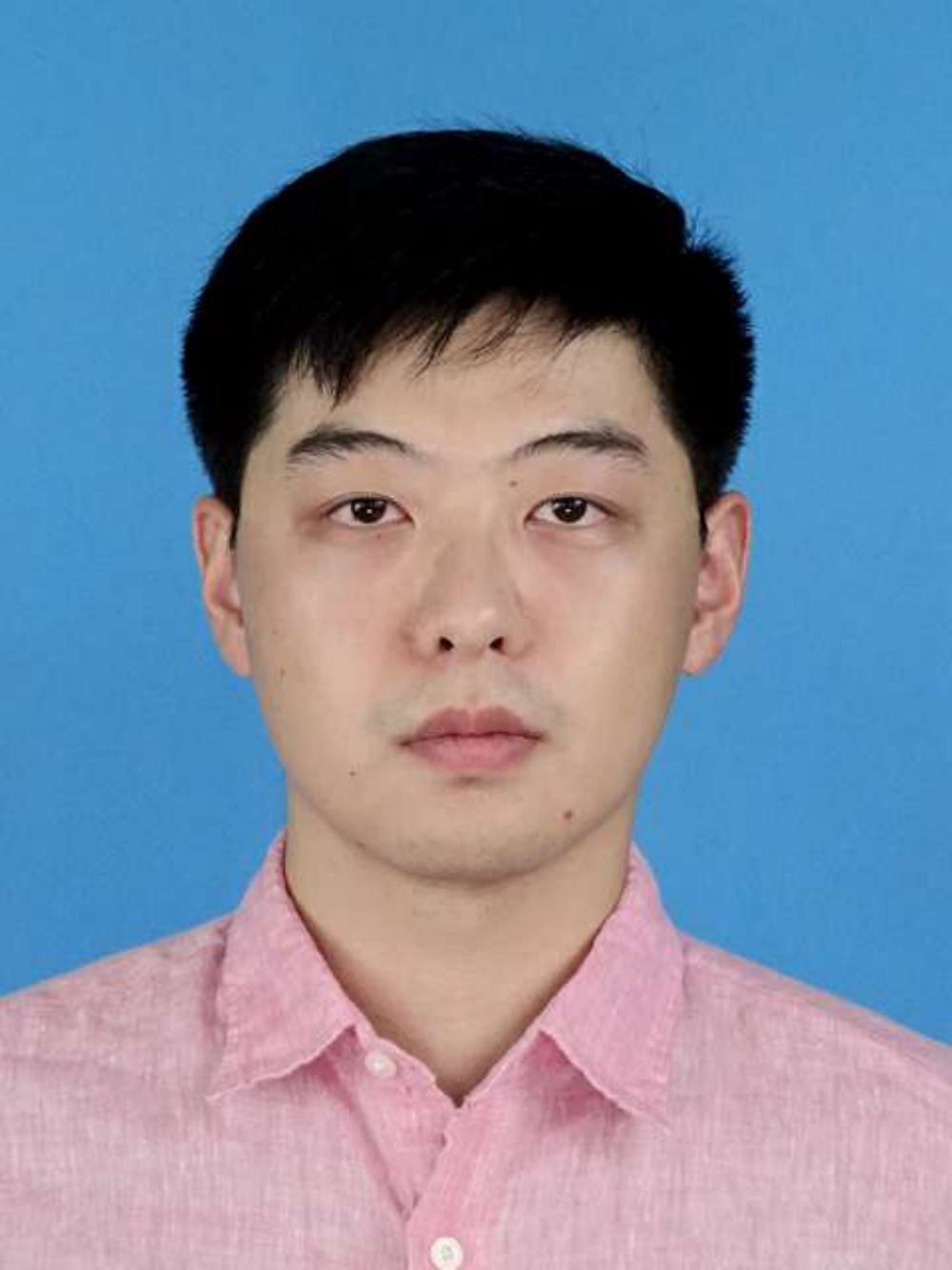}}]{Shenyi Zhang}
received the Ph.D. degree in Cyberspace Security from Wuhan University, China, in 2025. Before that, he received the B.E. degree in Communication Engineering from Shandong University in 2019 and the M.S. degree in Electronic Information from Wuhan University in 2022. His research interests include machine learning and AI security.
\end{IEEEbiography}
\begin{IEEEbiography}[{\includegraphics[width=1in,height=1.25in,clip,keepaspectratio]{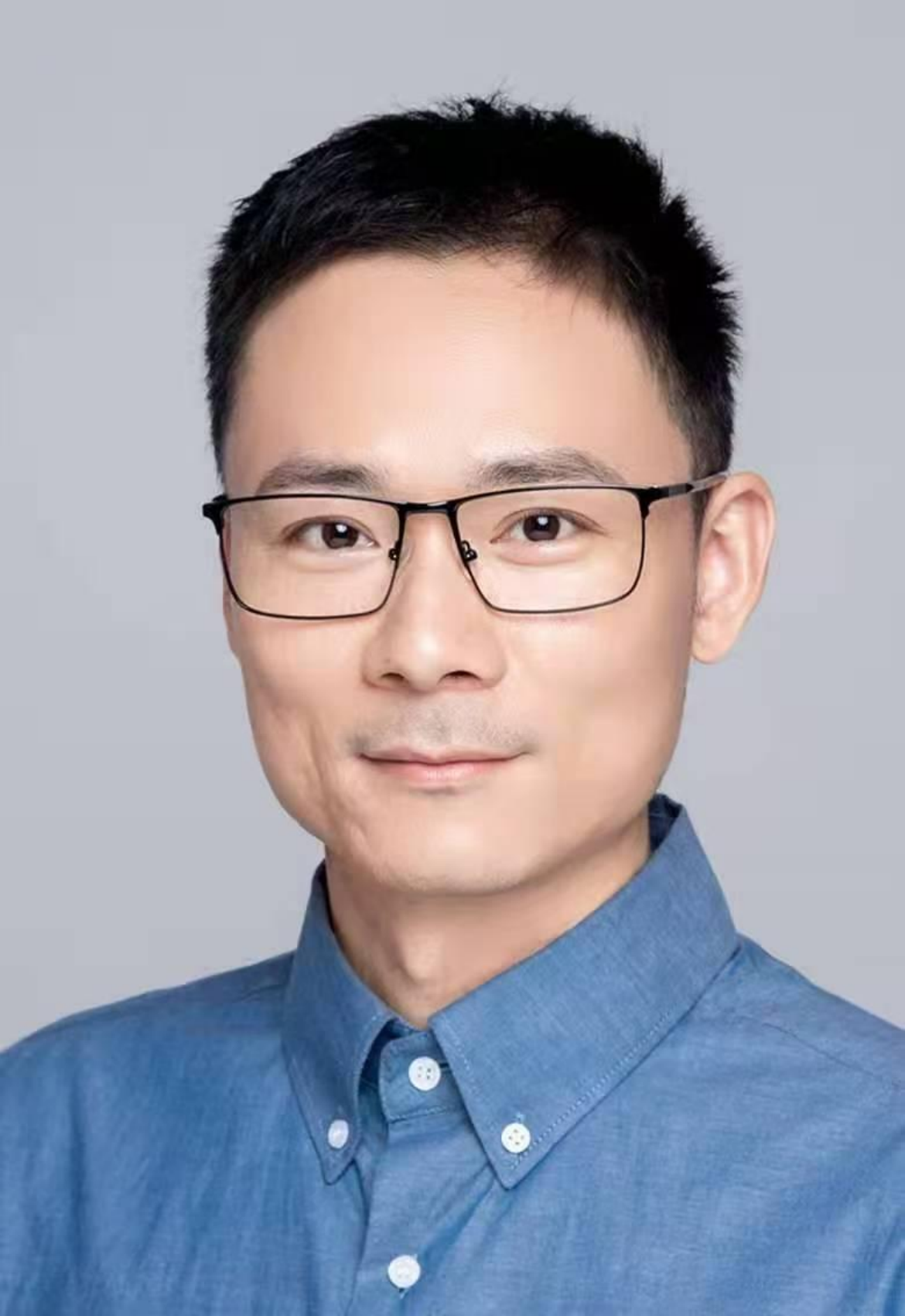}}]{Yuanjie Zhang}
is working toward the Ph.D. degree in the School of Cyber Science and Engineering, Wuhan University, China. His research interests include AI security.
\end{IEEEbiography}
\begin{IEEEbiography}[{\includegraphics[width=1in,height=1.25in,clip,keepaspectratio]{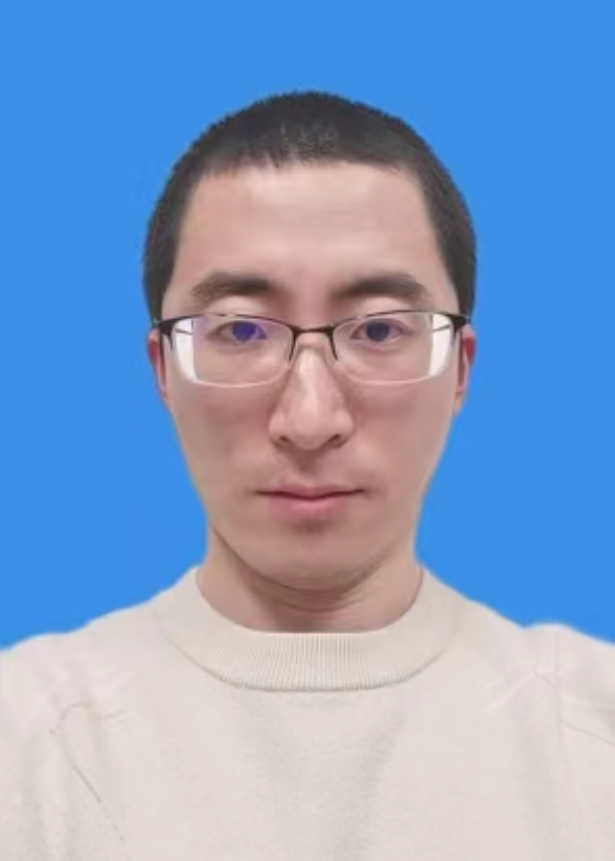}}]{Lingchen Zhao}
is currently an associate professor with the School of Cyber Science and Engineering, Wuhan University, China. He received his Ph.D. degree in Cyberspace Security in 2021, from Wuhan University, China, and his B.E. degree in Information Security in 2016, from Central South University, China. He was a Postdoctoral Researcher with the City University of Hong Kong, Hong Kong, from 2021 to 2022. His research interests include data security and AI security.
\end{IEEEbiography}
\begin{IEEEbiography}[{\includegraphics[width=1in,height=1.25in,clip,keepaspectratio]{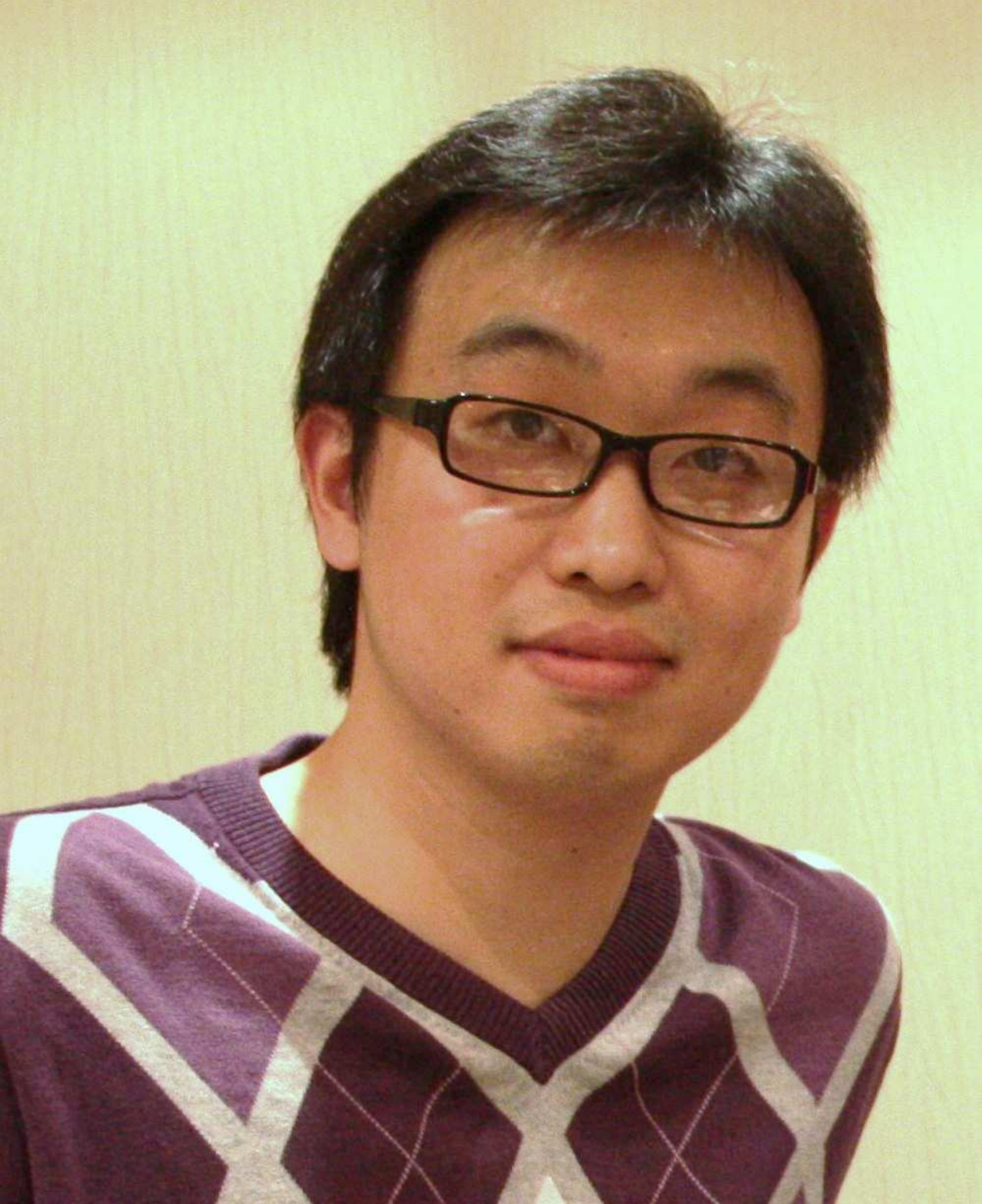}}]{Qian wang}
is a Professor in the School of Cyber Science and Engineering at Wuhan University, China. He was selected into the National Highlevel Young Talents Program of China, and listed among the World’s Top 2\% Scientists by Stanford University. He also received the National Science Fund for Excellent Young Scholars of China in 2018. He has long been engaged in the research of cyberspace security, with focus on AI security, data outsourcing security and privacy, wireless systems security, and applied cryptography. He was a recipient of the 2018 IEEE TCSC Award for Excellence in Scalable Computing (early career researcher) and the 2016 IEEE ComSoc Asia-Pacific Outstanding Young Researcher Award. He has published 200+ papers, with 120+ publications in top-tier international conferences, including USENIX NSDI, ACM CCS, USENIX Security, NDSS, ACM MobiCom, ICML, etc., with 20000+ Google Scholar citations. He is also a co-recipient of 8 Best Paper and Best Student Paper Awards from prestigious conferences, including ICDCS, IEEE ICNP, etc. In 2021, his PhD student was selected under Huawei’s “Top Minds” Recruitment Program. He serves as Associate Editors for IEEE Transactions on Dependable and Secure Computing (TDSC) and IEEE Transactions on Information Forensics and Security (TIFS). He is a fellow of the IEEE, and a member of the ACM.
\end{IEEEbiography}

\clearpage

\appendices
\appendix[Supplementary Material]

\setcounter{page}{1}
\setcounter{equation}{0}
\renewcommand{\theequation}{A-\arabic{equation}}
\setcounter{figure}{0}
\renewcommand{\thefigure}{A-\arabic{figure}}
\setcounter{table}{0}
\renewcommand{\thetable}{A-\arabic{table}}

\subsection{Theoretical Justification and Limitations of ATG}
\label{sec:atg_justificaton}
\rev{
In Section IV-A of the main paper, we introduced the \emph{Adversarial Transferability Gap (ATG)} to formally characterize the discrepancy in perturbation efficacy between models. Here, we provide the rigorous theoretical basis for this metric, analyze its reasonableness under standard settings, and discuss its empirical behavior when theoretical assumptions are violated.

\subsubsection{Formal Assumptions}
The validity of using ATG as an unbiased proxy for transferability relies on three structural assumptions regarding the surrogate model $\mathcal{F}$ and the target model $\mathcal{F'}$:
\begin{itemize}
    \item Assumption 1 (Shared Label Space): Both models map the input space $\mathcal{X}$ to the same label space $\mathcal{Y}$. This ensures that the semantics of the ground-truth label $y$ are consistent across models.
    \item Assumption 2 (Identical Loss Function): Both models employ the same loss function $J(\cdot)$ (e.g., Cross-Entropy Loss) to measure prediction error, ensuring the output values are mathematically comparable.
    \item Assumption 3 (Aligned Clean Performance): The models exhibit comparable performance on clean (benign) examples. Specifically, for a given clean input $\mathbf{x}$, the intrinsic loss difference is negligible:
    \revminor{
    \small
    \begin{equation}
        \label{eq:clean_align}
        |J(\mathbf{x}, y; \mathcal{F'}) - J(\mathbf{x}, y; \mathcal{F})| \approx 0.
    \end{equation}
    }
\end{itemize}

\subsubsection{Reasonableness and Theoretical Necessity}
Based on the assumptions formalized above, we justify why minimizing $|\operatorname{ATG}|$ is a necessary condition for maximizing transferability.

First, Assumptions 1 and 2 ensure that the outputs of the surrogate $\mathcal{F}$ and target $\mathcal{F'}$ share the same semantic space and metric scale, making their loss difference meaningful. Crucially, by invoking \textbf{Assumption 3} (aligned clean performance), we can decompose the adversarial loss on the target model as:
\revminor{
\small
\begin{align}
    \underbrace{J(\mathbf{x}+\boldsymbol{\delta}; \mathcal{F'}) - J(\mathbf{x}; \mathcal{F'})}_{\text{Target Loss Increment}} &\approx \underbrace{J(\mathbf{x}+\boldsymbol{\delta}; \mathcal{F}) - J(\mathbf{x}; \mathcal{F})}_{\text{Source Loss Increment}} + \operatorname{ATG}.
\end{align}
}
In practical transfer scenarios, model mismatch typically causes the attack effectiveness to degrade on the target, implying that the Target Loss Increment is smaller than the Source Loss Increment (i.e., $\operatorname{ATG} < 0$). Consequently, $\operatorname{ATG}$ represents the \textbf{performance drop} or \emph{transferability leakage}. Therefore, the optimization objective must be twofold: (1) maximize the source loss increment via gradient ascent; and (2) minimize the magnitude $|\operatorname{ATG}|$. Since $\operatorname{ATG}$ is inherently negative, minimizing $|\operatorname{ATG}|$ pushes the value closer to zero, thereby mitigating the performance drop and preserving the adversarial efficacy on the target. This logically validates $|\operatorname{ATG}|$ as the correct regularization term.

\subsubsection{Limitations}
We explicitly acknowledge that ATG serves as an unbiased proxy only when the models are comparable. The primary limitation arises if Assumption 3 is significantly violated, for instance, when transferring from a high-performance surrogate to a significantly weaker target. In such cases where the intrinsic clean loss gap $|J(\mathbf{x}; \mathcal{F'}) - J(\mathbf{x}; \mathcal{F})| \gg 0$, the absolute value of ATG would be biased by this performance discrepancy rather than solely reflecting perturbation transferability. However, in standard benchmarks involving mainstream classifiers on ImageNet, this condition is naturally satisfied.

\subsubsection{Empirical Generalization beyond Assumptions}
While our theoretical derivation relies on strict assumptions, our experiments demonstrate that MEF exhibits remarkable transferability even when these conditions are violated. For instance, MEF achieves high success rates when transferring to Commercial Vision APIs (violating Assumption 1 \& 3 due to unknown label spaces) and Multimodal Large Language Models (violating Assumptions 2 \& 3 due to distinct generative tasks). These results suggest that the \emph{zeroth-order average-case flatness} optimized by MEF captures intrinsic vulnerabilities in the visual feature representation space (e.g., texture bias), creating robust disruptions that transcend the specific theoretical constraints of the ATG metric itself.
}

\subsection{Proof of Theorem~\ref{thm:flatness-transferability}}
Here, we provide the detailed derivation for Theorem~\ref{thm:flatness-transferability} using a finite-order Taylor expansion, acknowledging the piecewise linearity of neural networks.

\begin{proof}[Proof of Theorem~\ref{thm:flatness-transferability}]
\label{proof:flat_transfer}
Consider the loss function \(J(\cdot)\) which we assume to be \(N\)-times differentiable in the local neighborhood. Instead of an infinite series, we apply the multivariate Taylor's theorem up to a finite order \(N\). For both the target model \(\mathcal{F'}\) and the surrogate model \(\mathcal{F}\), we have:
\revminor{
\small
\begin{align}
J(\mathbf x+\boldsymbol\delta,y;F')
&= \sum_{n=0}^N \frac{1}{n!}\,\nabla^n_{\mathbf x}J(\mathbf x,y;F')\bigl[\boldsymbol\delta^{\otimes n}\bigr] + R_N(\mathbf x, \boldsymbol\delta; F'), \label{eq:taylor_1} \\
J(\mathbf x+\boldsymbol\delta,y;F)
&= \sum_{n=0}^N \frac{1}{n!}\,\nabla^n_{\mathbf x}J(\mathbf x,y;F)\bigl[\boldsymbol\delta^{\otimes n}\bigr] + R_N(\mathbf x, \boldsymbol\delta; F). \label{eq:taylor_2}
\end{align}
}
where \(R_N(\cdot)\) represents the Lagrange remainder term of order \(N\), capturing all higher-order variations.

Subtracting the surrogate loss from the target loss yields the Adversarial Transferability Gap (ATG):
\revminor{
\small
\begin{equation}
\begin{split}
\operatorname{aATG}\bigl((\mathbf{x},y),\boldsymbol\delta;\mathcal{F},\mathcal{F'}\bigr)
&= \sum_{n=0}^N \frac{1}{n!}
   \Bigl(\nabla^n_{\mathbf x}J(\mathbf x,y;F')\\
    &-\nabla^n_{\mathbf x}J(\mathbf x,y;F)\Bigr)
   \bigl[\boldsymbol\delta^{\otimes n}\bigr] \\
   &\quad + \Bigl(R_N(\mathbf x, \boldsymbol\delta; F') - R_N(\mathbf x, \boldsymbol\delta; F)\Bigr).
\end{split}
\end{equation}
}
For each order \(0 \le n \le N\), define the derivative discrepancy term:
\revminor{
\small
\begin{align}
T_n(\mathbf{x})
  = \nabla^n_{\mathbf{x}}J(\mathbf{x},y;F')
  - \nabla^n_{\mathbf{x}}J(\mathbf{x},y;F).
\end{align}
}
Following the same logic as the zeroth-order flatness derivation, we bound \(\|T_n(\mathbf{x})\|\) by considering an arbitrary point \(\mathbf{x'}\in B_{\xi}(\mathbf{x})\), applying the triangle inequality, and taking the expectation over \(\mathbf{x'}\):
\revminor{
\small
\begin{equation}
\begin{split}
\|T_n(\mathbf{x})\| \le
& \underbrace{\mathbb{E}_{\mathbf{x'}\in B_{\xi}(\mathbf{x})} \|\nabla^nJ(\mathbf{x'},y;F)\!-\!\nabla^nJ(\mathbf{x},y;F)\|}_{\bar R^{(n)}_\xi(\mathbf{x};F)} \\
+\; & \underbrace{\mathbb{E}_{\mathbf{x'}\in B_{\xi}(\mathbf{x})} \|\nabla^nJ(\mathbf{x'},y;F')\!-\!\nabla^nJ(\mathbf{x},y;F')\|}_{\bar R^{(n)}_\xi(\mathbf{x};F')} \\
+\; & \underbrace{\mathbb{E}_{\mathbf{x'}\in B_{\xi}(\mathbf{x})} \|\nabla^nJ(\mathbf{x'},y;F')\!-\!\nabla^nJ(\mathbf{x'},y;F)\|}_{C_n(\mathbf{x})}.
\end{split}
\end{equation}
}

Let \(\Delta R_N = |R_N(\mathbf x, \boldsymbol\delta; F') - R_N(\mathbf x, \boldsymbol\delta; F)|\) denote the bound on the residual difference. Using the property \(\bigl|T_n(\mathbf{x})[\boldsymbol\delta^{\otimes n}]\bigr|\le\|T_n(\mathbf{x})\|\|\boldsymbol\delta\|^n\), we obtain the final bound:
\revminor{
\small
\begin{equation}
\begin{aligned}
\bigl|\mathrm{aATG}\bigr|
&\le \sum_{n=0}^N\frac{\|\boldsymbol\delta\|^n}{n!}
\Bigl[
  \bar R^{(n)}_\xi(\mathbf{x};F) + C_n(\mathbf{x}) + \bar R^{(n)}_\xi(\mathbf{x};F') \Bigr] \\
&\quad + \Delta R_N.
\end{aligned}
\end{equation}
}
For ReLU-based networks, typically \(N=1\) is sufficient as higher-order derivatives vanish or are undefined, making the remainder \(\Delta R_N\) the implicit error bound for high-order curvature effects.
\end{proof}

\subsection{Detailed Hyper-parameter Settings}
\label{sec:supp_params}

\rev{
To ensure a fair and rigorous comparison, we adhere to a standardized evaluation protocol while respecting the optimal configurations reported in the original papers of the baseline methods.

\textbf{Unified Experimental Settings}
We unify the following general parameters across all attacks to ensure consistency:
\begin{itemize}
    \item \textbf{Perturbation Budget:} The maximum perturbation magnitude is set to $\epsilon = 16/255$ for all methods (unless analyzing budget sensitivity).
    \item \textbf{Step Size \& Iterations:} For standard iterative gradient-based attacks, we set the number of iterations $T = 10$ and the step size $\alpha = \epsilon / T = 1.6/255$.
    \item \textbf{Sampling-based Methods:} Since the transferability of sampling-based methods typically correlates positively with the number of sampling points, we fix the sample size $N = 20$ for all such methods (e.g., VMI, EMI, MEF) to ensure fair comparison under the same computational load. This is double the typical setting ($N=10$) used in prior works, ensuring robust gradient estimation.
    \item \textbf{Data-driven Methods:} For methods involving input transformations (e.g., Admix, DIM, SIM), we set the number of image copies (or scale copies) to 5.
\end{itemize}

\textbf{Method-Specific Configurations}
For method-specific hyper-parameters, we adopt the optimal values recommended in their respective original papers.
\begin{itemize}
    \item \textbf{RAP~\cite{rap}:} As an exception to the standard iteration rule, RAP requires a multi-stage optimization process. We follow its official setting with total iterations $K = 400$, late-start phase $K_{LS} = 100$, and a fixed step size $\alpha = 1.6/255$.
    \item \textbf{MEF (Ours):} For our proposed method, we set the neighborhood radius $\gamma = 2 \cdot \epsilon$, the exploration radius $\xi = 0.15 \cdot \epsilon$, and the outer/inner momentum coefficients $\mu_{outer} = 0.5$, $\mu_{inner} = 0.9$.
\end{itemize}
}

\begin{table*}[tbp!]
\caption{\rev{Supplemental comparison with additional baselines. This table presents the transfer attack success rate ($\%$) of classical or less competitive gradient-stabilized methods. Even compared to these diverse baselines, our MEF framework maintains consistent superiority.}}
\vspace{-10pt}
\label{tab:supp_baselines}
\begin{center}
\begin{small}
\setlength{\tabcolsep}{4pt}
\setlength{\extrarowheight}{0.2pt}
\scalebox{0.75}{
\begin{tabular}{c|cccccccc|cccccccc}
\hline
\multirow{2}{*}{Attack} & \multicolumn{8}{c|}{\textbf{Res-50} $\Longrightarrow$} & \multicolumn{8}{c}{\textbf{Res-101} $\Longrightarrow$}  \\
 & Res-101 & Inc-v3 & Inc-v4 & IncRes-v2 & VGG-19 & Dense-121 & Xcept & \rev{AVG} & Res-50 & Inc-v3 & Inc-v4 & IncRes-v2 & VGG-19 & Dense-121& Xcept  & \rev{AVG} \\
\hline
MI~\cite{mi}
& 93.3 & 50.5 & 46.0 & 31.8 & 79.1 & 86.1 & 53.3 & \rev{62.9} & 94.4 & 50.6 & 43.9 & 33.7 & 73.4 & 82.2 & 51.8 & \rev{61.4} \\

NI~\cite{ni}
& 96.7 & 56.5 & 49.1 & 35.9 & 84.0 & 88.5 & 56.4 & \rev{66.7} & 97.8 & 55.2 & 50.9 & 37.8 & 79.0 & 85.8 & 55.0 & \rev{65.9} \\

PI~\cite{pi-emi}
& 98.0 & 60.2 & 54.4 & 38.1 & 88.1 & 92.8 & 60.7 & \rev{70.3} & 98.5 & 62.4 & 55.9 & 42.2 & 83.4 & 90.7 & 60.2 & \rev{70.5} \\

TPA~\cite{tpa}
& 96.2 & 60.5 & 55.0 & 40.4 & 85.9 & 90.5 & 60.7 & \rev{69.9} & 97.2 & 58.1 & 54.2 & 43.2 & 81.4 & 88.8 & 60.9 & \rev{69.1} \\

GNP~\cite{gnp}
& 98.7 & 69.8 & 64.7 & 50.6 & 91.7 & 95.7 & 68.2 & \rev{77.1} & 99.1 & 70.3 & 64.5 & 52.0 & 87.9 & 93.9 & 68.5 & \rev{76.6} \\

VMI~\cite{vmi-vni}
& 98.2 & 72.8 & 68.6 & 57.4 & 91.3 & 95.7 & 73.2 & \rev{79.6} & 98.6 & 73.2 & 68.4 & 57.6 & 88.5 & 93.6 & 70.5 & \rev{78.6} \\

VNI~\cite{vmi-vni}
& 98.8 & 77.2 & 73.4 & 61.6 & 93.7 & 96.8 & 75.7 & \rev{82.5} & 99.4 & 77.4 & 73.1 & 62.8 & 91.3 & 95.4 & 75.0 & \rev{82.1} \\

RAP~\cite{rap}
& 97.8 & 70.2 & 65.9 & 50.6 & 91.3 & 94.3 & 70.9 & \rev{77.3} & 98.7 & 70.1 & 65.1 & 52.1 & 89.3 & 93.3 & 70.2 & \rev{77.0} \\

MEF\textsubscript{H}
& \underline{99.7} & \underline{90.6} & \underline{88.7} & \underline{82.7} & \underline{98.3} & \underline{99.4} & \underline{90.4} & \rev{\underline{92.8}} & \underline{99.7} & \underline{91.2} & \underline{88.2} & \underline{82.2} & \underline{97.0} & \underline{98.4} & \underline{90.0} & \rev{\underline{92.4}} \\

MEF\textsubscript{F}
& \textbf{99.9} & \textbf{95.1} & \textbf{94.2} & \textbf{91.3} & \textbf{99.2} & \textbf{99.8} & \textbf{94.7} & \rev{\textbf{96.3}} & \textbf{99.9} & \textbf{95.8} & \textbf{95.2} & \textbf{91.8} & \textbf{98.9} & \textbf{99.4} & \textbf{95.7} & \rev{\textbf{96.7}} \\

\hline
\multirow{2}{*}{Attack} & \multicolumn{8}{c|}{\textbf{Inc-v3} $\Longrightarrow$} & \multicolumn{8}{c}{\textbf{Inc-v4} $\Longrightarrow$}  \\
 & Res-50 & Res-101 & Inc-v4 & IncRes-v2 & VGG-19 & Dense-121 & Xcept& \rev{AVG} & Res-50 & Res-101& Inc-v3 & IncRes-v2 & VGG-19 & Dense-121 & Xcept  & \rev{AVG} \\
\hline
MI~\cite{mi}
& 54.6 & 48.8 & 51.2 & 43.3 & 56.0 & 55.7 & 56.1 & \rev{52.2} & 56.1 & 51.3 & 58.2 & 43.3 & 61.7 & 58.3 & 56.8 & \rev{55.1} \\

NI~\cite{ni}
& 63.8 & 58.0 & 61.3 & 53.3 & 63.7 & 64.3 & 62.5 & \rev{61.0} & 62.3 & 56.7 & 65.5 & 49.7 & 68.9 & 64.2 & 64.2 & \rev{61.6} \\

PI~\cite{pi-emi}
& 66.1 & 60.7 & 64.3 & 56.8 & 65.6 & 67.7 & 64.9 & \rev{63.7} & 64.3 & 59.9 & 69.8 & 54.0 & 71.9 & 68.3 & 66.6 & \rev{65.0} \\

TPA~\cite{tpa}
& 59.4 & 54.2 & 60.4 & 54.2 & 64.3 & 60.9 & 64.7 & \rev{59.7} & 61.6 & 53.3 & 66.7 & 56.1 & 70.7 & 64.2 & 72.0 & \rev{63.5} \\

GNP~\cite{gnp}
& 60.5 & 55.1 & 65.9 & 56.0 & 64.7 & 61.5 & 66.3 & \rev{61.4} & 58.3 & 51.9 & 64.6 & 56.6 & 66.4 & 60.0 & 71.2 & \rev{61.3} \\

VMI~\cite{vmi-vni}
& 69.6 & 65.4 & 71.3 & 64.7 & 70.9 & 72.0 & 71.8 & \rev{69.4} & 70.7 & 66.6 & 77.6 & 66.8 & 76.3 & 74.6 & 73.0 & \rev{72.2} \\

VNI~\cite{vmi-vni}
& 76.1 & 71.2 & 76.9 & 71.3 & 75.7 & 77.7 & 76.4 & \rev{75.0} & 75.3 & 71.4 & 82.1 & 71.0 & 80.8 & 77.9 & 77.4 & \rev{76.6} \\

RAP~\cite{rap}
& 81.3 & 74.2 & 75.4 & 67.7 & 80.7 & 79.5 & 78.1 & \rev{76.7} & 81.9 & 76.9 & 82.1 & 65.5 & 87.9 & 82.4 & 82.3 & \rev{79.9} \\

MEF\textsubscript{H}
& \underline{85.8} & \underline{80.8} & \underline{89.4} & \underline{86.1} & \underline{85.2} & \underline{87.4} & \underline{89.9} & \rev{\underline{86.4}} & \underline{86.2} & \underline{82.4} & \underline{92.8} & \underline{86.1} & \underline{90.1} & \underline{88.5} & \underline{90.8} & \rev{\underline{88.1}} \\

MEF\textsubscript{F}
& \textbf{92.6} & \textbf{91.5} & \textbf{96.5} & \textbf{94.3} & \textbf{92.8} & \textbf{93.4} & \textbf{95.8} & \rev{\textbf{93.8}} & \textbf{91.4} & \textbf{90.3} & \textbf{94.5} & \textbf{91.2} & \textbf{92.2} & \textbf{91.9} & \textbf{95.1} & \rev{\textbf{92.4}} \\
\hline
\end{tabular}
}
\end{small}
\end{center}
\vspace{-10pt}
\end{table*}
\subsection{Extended Analysis of Gradient-based Baselines}
\label{sec:exp_baseline}
\revminor{Due to space constraints in the main manuscript, we present the comparison results against classical or lower-performing gradient stabilization methods in this section. Table~\ref{tab:supp_baselines} reports the transfer attack success rates of 8 additional baselines, including MI~\cite{mi}, NI~\cite{ni}, PI~\cite{pi-emi}, TPA~\cite{tpa}, GNP~\cite{gnp}, VMI~\cite{vmi-vni}, VNI~\cite{vmi-vni}, and RAP~\cite{rap}. Although these methods have established foundations in the field, their average transferability falls below the 80\% threshold. For a comprehensive comparison, we also include the results of MEF\textsubscript{H} and MEF\textsubscript{F} in Table~\ref{tab:supp_baselines}. The results confirm that our MEF framework maintains a significant lead (over 10\% improvement in most cases) even when compared across this broader range of diverse baselines.}

\begin{table*}[ht]
\caption{\revminor{Transfer attack success rate (\%) of MEF integrated with additional input augmentation methods (DI, SI, Admix, SSA). Comparing the rows (e.g., DI vs. MEF+DI) reveals that our framework consistently amplifies the attack transferability of these classical augmentation techniques.}}
\label{tab:supp_ia}
\vspace{-10pt}
\begin{center}
\begin{small}
\setlength{\extrarowheight}{0.2pt}
\setlength{\tabcolsep}{4pt}
\scalebox{0.7}{
\begin{tabular}{c|ccccccc|ccccccc}
\hline
\multirow{2}{*}{Attack} & \multicolumn{7}{c|}{{Res-50} $\Longrightarrow$} & \multicolumn{7}{c}{{Inc-v3} $\Longrightarrow$} \\ & IncRes-v2 & Inc-v3$_{ens3}$ & Inc-v4$_{ens4}$ & IncRes-v2$_{ens}$ & ViT-L/32 & MLP-Mixer & \rev{{AVG}} & IncRes-v2 & Inc-v3$_{ens3}$ & Inc-v4$_{ens4}$ & IncRes-v2$_{ens}$ & ViT-L/32 & MLP-Mixer & \rev{{AVG}}
\\
\hline
DI~\cite{di}
& 70.2 & 37.5 & 36.5 & 23.1 & 29.5 & 59.2 & \rev{42.7}& 66.7 & 37.1 & 35.7 & 18.8 & 26.3 & 54.3 & \rev{39.8}\\
MEF\textsubscript{H}+DI
& {89.7} & {77.5} & {76.1} & {66.1} & {50.3} & {74.5} & \rev{72.4}& {88.3} & {74.8} & {74.7} & {56.2} & {41.6} & {67.8} & \rev{67.2}\\
MEF\textsubscript{F}+DI
& {95.3} & {83.8} & {80.5} & {68.6} & {55.1} & {79.8} & \rev{77.2}& {94.2} & {78.0} & {77.6} & {58.0} & {46.4} & {74.1} & \rev{71.4}\\
\hdashline
SI~\cite{ni}
& 56.8 & 29.6 & 29.5 & 17.4 & 24.9 & 49.5 & \rev{34.6}& 65.2 & 34.7 & 35.3 & 18.3 & 25.1 & 53.7 & \rev{38.7}\\
MEF\textsubscript{H}+SI
& {86.9} & {71.3} & {69.7} & {56.8} & {44.1} & {67.1} & \rev{66.0}& {89.4} & {75.7} & {75.0} & {55.2} & {39.4} & {64.2} & \rev{66.5}\\
MEF\textsubscript{F}+SI
& {92.8} & {78.6} & {74.7} & {61.2} & {48.8} & {73.0} & \rev{71.5}& {95.9} & {81.7} & {80.5} & {57.8} & {44.9} & {72.2} & \rev{72.2}\\
\hdashline
Admix~\cite{admix}
& 52.5 & 23.6 & 22.2 & 13.1 & 23.9 & 50.4 & \rev{30.9}& 64.2 & 29.7 & 27.7 & 14.7 & 23.6 & 52.8 & \rev{35.4}\\
MEF\textsubscript{H}+Admix
& {70.5} & {39.2} & {37.0} & {24.5} & {27.0} & {55.5} & \rev{42.3}& {83.5} & {45.3} & {45.8} & {27.1} & {26.5} & {56.5} & \rev{47.5}\\
MEF\textsubscript{F}+Admix
& {86.4} & {46.6} & {42.1} & {27.7} & {31.6} & {64.1} & \rev{49.8}& {94.7} & {51.1} & {51.9} & {30.1} & {31.0} & {67.6} & \rev{54.4}\\
\hdashline
SSA~\cite{ssa}
& {83.0} & 50.2 & 47.3 & 31.3 & {35.6} & {64.7} & \rev{52.0}& 83.7 & 52.4 & 54.7 & 31.0 & {33.3} & {62.3} & \rev{52.9}\\
MEF\textsubscript{H}+SSA
& 81.6 & {53.9} & {51.1} & {38.0} & 33.0 & 61.1 & \rev{53.1}& {84.4} & {54.9} & {56.3} & {34.2} & 30.9 & 58.2 & \rev{53.1}\\
MEF\textsubscript{F}+SSA
& {90.1} & {58.1} & {53.5} & {39.0} & {35.7} & {68.0} & \rev{57.4}& {92.2} & {58.2} & {60.3} & {35.1} & {35.3} & {65.7} & \rev{57.8}\\
\hline
\end{tabular}
}
\end{small}
\end{center}
\vspace{-15pt}
\end{table*}
\subsection{Extended Analysis of Input Augmentation Integration}
\label{sec:exp_supp_ia}
\revminor{In this section, we provide the detailed experimental results of integrating MEF with four additional input augmentation methods: DI~\cite{di}, SI~\cite{ni}, Admix~\cite{admix}, and SSA~\cite{ssa}. Table~\ref{tab:supp_ia} shows the transfer attack success rates on six target models. The results consistently demonstrate that MEF is highly compatible with diverse input transformation strategies, significantly boosting the baseline performance across all tested scenarios.}

\revminor{
\begin{table*}[t]
\caption{Transfer attack success rate ($\%$) of classical baseline methods against defense mechanisms. While these methods struggle to breach robust defenses (average ASR below 40\%), MEF variants maintain superior performance.}
\label{tab:supp_robust}
\vspace{-10pt}
\begin{center}
\begin{small}
\setlength{\extrarowheight}{0.25pt}
\scalebox{0.68}{
\begin{tabular}{c|cccccccc|cccccccc}
\hline
\multirow{2}{*}{Attack} & \multicolumn{8}{c|}{\textbf{Res-50} $\Longrightarrow$} & \multicolumn{8}{c}{\textbf{Inc-v3} $\Longrightarrow$} \\ & Inc-v3$_{adv}$ & Inc-v3$_{ens3}$ & Inc-v3$_{ens4}$ & IncRes-v2$_{ens}$ & RS & HGD & NRP & \rev{\textbf{AVG}} & Inc-v3$_{adv}$ & Inc-v3$_{ens3}$ & Inc-v3$_{ens4}$ & IncRes-v2$_{ens}$ & RS & HGD & NRP & \rev{\textbf{AVG}} \\
\hline
MI~\cite{mi} & 21.6 & 17.2 & 16.4 & 9.4 & 25.6 & 19.4 & 53.5 & \rev{23.3} & 27.3 & 22.4 & 23.3 & 11.2 & 24.5 & 8.9 & 30.7 & \rev{21.2} \\
NI~\cite{ni} & 21.8 & 18.3 & 17.3 & 9.2 & 25.7 & 17.0 & 55.4 & \rev{23.5} & 27.8 & 22.8 & 22.6 & 11.1 & 24.4 & 8.4 & 30.5 & \rev{21.1} \\
PI~\cite{pi-emi} & 22.7 & 19.1 & 17.5 & 9.8 & 26.6 & 21.6 & 60.7 & \rev{25.4} & 30.9 & 24.5 & 24.3 & 12.6 & 24.7 & 9.8 & 30.9 & \rev{22.5} \\
TPA~\cite{tpa} & 25.7 & 18.1 & 15.5 & 9.3 & 28.4 & 6.7 & 58.2 & \rev{23.1} & 27.8 & 18.0 & 16.2 & 8.6 & 25.8 & 2.3 & 33.2 & \rev{18.8} \\
GNP~\cite{gnp} & 29.2 & 24.8 & 23.2 & 13.3 & 28.1 & 26.5 & 70.7 & \rev{30.8} & 30.5 & 24.0 & 23.5 & 11.7 & 25.1 & 6.5 & 34.4 & \rev{22.2} \\
VMI~\cite{vmi-vni} & 36.5 & 35.4 & 34.3 & 21.8 & 30.6 & 45.8 & 73.7 & \rev{39.7} & 45.3 & 41.2 & 41.6 & 25.1 & 27.3 & 24.5 & 38.5 & \rev{34.8} \\
VNI~\cite{vmi-vni} & 37.8 & 35.5 & 34.7 & 23.8 & 30.9 & 47.4 & 75.9 & \rev{40.9} & 47.8 & 44.4 & 43.9 & 26.8 & 28.0 & 26.7 & 38.9 & \rev{36.6} \\
EMI~\cite{pi-emi} & 29.5 & 23.6 & 22.4 & 12.5 & 29.0 & 28.9 & 71.4 & \rev{31.0} & 39.8 & 33.5 & 31.4 & 17.7 & 27.2 & 13.5 & 38.3 & \rev{28.8} \\
RAP~\cite{rap} & 29.4 & 17.4 & 16.9 & 8.8 & 32.1 & 4.0 & 63.2 & \rev{24.5} & 32.2 & 16.3 & 16.2 & 7.9 & 27.3 & 1.5 & 27.6 & \rev{18.4} \\
MEF\textsubscript{H} & \underline{62.4} & \underline{64.5} & \underline{63.3} & \underline{50.4} & \underline{49.2} & \underline{80.6} & \underline{94.6} & \rev{66.4} & \underline{69.7} & \underline{65.1} & \underline{65.4} & \underline{45.4} & \underline{38.3} & \underline{46.3} & \underline{54.8} & \rev{55.0} \\
MEF\textsubscript{F} & \textbf{70.1} & \textbf{69.2} & \textbf{65.1} & \textbf{52.2} & \textbf{51.5} & \textbf{83.0} & \textbf{97.2} & \rev{69.8} & \textbf{75.8} & \textbf{69.0} & \textbf{69.6} & \textbf{47.9} & \textbf{40.4} & \textbf{52.4} & \textbf{56.2} & \rev{58.8} \\
\hline
\end{tabular}
}
\end{small}
\end{center}
\vspace{-15pt}
\end{table*}
}
\subsection{Extended Robustness Evaluation}
\label{sec:exp_supp_robust}
\revminor{In this section, we provide the evaluation results of classical gradient stabilization attacks against defense mechanisms. Table~\ref{tab:supp_robust} reports the attack success rates of 9 baseline methods (including MI~\cite{mi}, NI~\cite{ni}, PI~\cite{pi-emi}, TPA~\cite{tpa}, GNP~\cite{gnp}, VMI~\cite{vmi-vni}, VNI~\cite{vmi-vni}, EMI~\cite{pi-emi}, and RAP~\cite{rap}) on seven robustly defended models. These methods generally exhibit lower transferability (average ASR $<$ 40\%) compared to the state-of-the-art methods presented in the main text.}

\begin{table*}[htbp!]
\caption{Transfer attack success rate ($\%$) comparison of gradient-stabilized attacks on cross-architecture models. Evaluating MEF\textsubscript{H}/\textsubscript{F} against 12 gradient stabilization methods on seven diverse architectures (CNNs, Transformers, MLPs). The best results are \textbf{bold} and the second best are \underline{underlined}.}
\vspace{-15pt}
\label{tab:eval_on_diverse_models}
\begin{center}
\begin{small}
\setlength{\extrarowheight}{0.15em}
\scalebox{0.72}{
\begin{tabular}{c|ccccccc|ccccccc}
\hline
\multirow{2}{*}{Attack} & \multicolumn{7}{c|}{\textbf{Res-50} $\Longrightarrow$} & \multicolumn{7}{c}{\textbf{Inc-v3} $\Longrightarrow$} \\ & MobileNet & PNASNet-L & ViT-B/16 & ViT-L/32 & PiT-S & MLP-Mixer & ResMLP & MobileNet & PNASNet-L & ViT-B/16 & ViT-L/32 & PiT-S & MLP-Mixer & ResMLP \\
\hline
MI~\cite{mi}
& 68.4
& 51.0
& 20.2
& 20.9
& 25.1
& 42.6
& 32.4
& 61.7
& 48.4
& 22.5
& 21.0
& 27.1
& 47.5
& 32.8
\\
NI~\cite{ni}
& 71.5
& 52.4
& 20.1
& 20.4
& 25.7
& 44.4
& 31.1
& 68.3
& 56.5
& 22.6
& 20.8
& 29.8
& 49.3
& 35.8
\\
PI~\cite{pi-emi}
& 75.4
& 56.9
& 21.4
& 20.9
& 27.9
& 45.7
& 34.2
& 69.9
& 57.9
& 23.5
& 21.8
& 32.0
& 49.9
& 37.7
\\
TPA~\cite{tpa}
& 74.9
& 58.5
& 23.6
& 23.8
& 30.7
& 47.6
& 36.6
& 63.7
& 57.1
& 23.6
& 21.2
& 28.9
& 50.7
& 37.0
\\
GNP~\cite{gnp}
& 81.9
& 67.2
& 28.6
& 24.6
& 36.6
& 51.0
& 44.8
& 65.1
& 57.1
& 25.4
& 21.4
& 31.0
& 50.9
& 37.3
\\
VMI~\cite{vmi-vni}
& 84.3
& 70.2
& 33.8
& 28.1
& 45.4
& 51.3
& 51.9
& 71.8
& 65.0
& 35.8
& 27.7
& 45.0
& 54.1
& 49.1
\\
VNI~\cite{vmi-vni}
& 87.1
& 73.6
& 35.6
& 29.2
& 46.8
& 53.9
& 53.9
& 78.3
& 67.8
& 37.1
& 27.6
& 47.5
& 56.4
& 50.2
\\
EMI~\cite{pi-emi}
& 88.2
& 70.8
& 27.6
& 24.7
& 37.4
& 50.7
& 45.3
& 81.5
& 69.5
& 32.2
& 24.7
& 41.5
& 56.6
& 48.0
\\
RAP~\cite{rap}
& 85.9
& 66.4
& 26.4
& 25.8
& 36.3
& 52.8
& 44.1
& 82.7
& 67.3
& 24.5
& 23.8
& 36.7
& 54.5
& 43.7
\\
APP~\cite{app}
& 91.0
& 85.1
& 45.6
& 38.0
& 55.4
& 60.6
& 64.1
& 76.1
& 70.4
& 38.5
& 32.2
& 47.5
& 57.2
& 53.5
\\
FEM~\cite{femi}
& 94.0
& 87.0
& 43.1
& 35.0
& 58.1
& 60.2
& 66.0
& 81.1
& 78.2
& 41.3
& 30.4
& 53.2
& 59.7
& 58.6
\\
PGN~\cite{pgn}
& 91.6
& 88.6
& 51.4
& 45.8
& 58.7
& 66.7
& 70.2
& 84.9
& 82.6
& 50.1
& 40.6
& 60.2
& 66.1
& 65.6
\\
\rev{ANDA~\cite{anda}} & \rev{87.0} & \rev{83.5} & \rev{49.2} & \rev{44.8} & \rev{55.2} & \rev{62.8} & \rev{67.9} & \rev{80.9} & \rev{77.5} & \rev{47.9} & \rev{37.9} & \rev{56.4} & \rev{62.7} & \rev{60.0} \\
\rev{FGSRA~\cite{fgsra}} & \rev{88.8} & \rev{87.0} & \rev{49.8} & \rev{45.6} & \rev{57.1} & \rev{64.5} & \rev{70.9} & \rev{83.7} & \rev{81.7} & \rev{48.1} & \rev{39.6} & \rev{59.6} & \rev{64.2} & \rev{65.2} \\
\rev{GI~\cite{gifgsm}} & \rev{86.9} & \rev{84.3} & \rev{48.5} & \rev{45.6} & \rev{56.0} & \rev{62.6} & \rev{67.9} & \rev{79.8} & \rev{79.1} & \rev{45.9} & \rev{37.4} & \rev{57.7} & \rev{63.9} & \rev{62.8} \\
\rev{MUMODIG~\cite{mumodig}} & \rev{89.5} & \rev{86.3} & \rev{48.8} & \rev{44.1} & \rev{56.9} & \rev{66.3} & \rev{69.0} & \rev{81.3} & \rev{81.1} & \rev{48.6} & \rev{39.0} & \rev{59.5} & \rev{65.7} & \rev{65.1} \\
\rev{GAA~\cite{gaa}} & \rev{86.7} & \rev{84.4} & \rev{49.5} & \rev{44.8} & \rev{55.6} & \rev{63.6} & \rev{66.0} & \rev{80.0} & \rev{79.8} & \rev{49.3} & \rev{38.7} & \rev{58.6} & \rev{63.6} & \rev{62.1} \\
\rev{FoolMix~\cite{foolmix}} & \rev{85.7} & \rev{84.1} & \rev{47.9} & \rev{44.2} & \rev{52.1} & \rev{63.0} & \rev{65.5} & \rev{78.8} & \rev{77.0} & \rev{44.7} & \rev{37.6} & \rev{56.4} & \rev{63.1} & \rev{60.6}
\\
MEF\textsubscript{H}
& \underline{95.2}
& \underline{90.3}
& \underline{51.9}
& \underline{46.0}
& \underline{63.3}
& \underline{66.9}
& \underline{70.5}
& \underline{85.2}
& \underline{83.9}
& \underline{51.7}
& \underline{42.2}
& \underline{60.4}
& \underline{67.3}
& \underline{66.3}
\\
MEF\textsubscript{F}
& \textbf{98.0}
& \textbf{95.2}
& \textbf{56.0}
& \textbf{46.9}
& \textbf{71.1}
& \textbf{71.7}
& \textbf{77.4}
& \textbf{92.7}
& \textbf{90.8}
& \textbf{56.2}
& \textbf{45.3}
& \textbf{71.4}
& \textbf{69.4}
& \textbf{73.3}
\\
\hline
\end{tabular}
}
\end{small}
\end{center}
\vspace{-15pt}
\end{table*}
\subsection{Extended Analysis of Diverse Architectures Evaluation}
\label{sec:exp_archi_supp}
\revminor{This section provides the comprehensive evaluation results on diverse network architectures, complementing Section~\ref{sec:exp_archi_main}. Table~\ref{tab:eval_on_diverse_models} reports the full comparison against 12 gradient stabilization baselines. Consistent with the main text, MEF variants demonstrate superior generalization across all architectures.}

\rev{However, we observe a noticeable performance drop when transferring from CNNs to Transformers (e.g., MEF\textsubscript{F} averages $\sim$44\% on ViTs vs. $>$90\% on CNNs). Theoretically, this is attributed to the large Cross-model Gradient Discrepancy caused by the fundamental inductive bias difference between local convolutions and global self-attention. While MEF significantly narrows this gap compared to baselines, the structural discrepancy remains a challenge. As shown in Table III (Main Text), this can be further mitigated by combining MEF with TI and increasing sampling density (MEF+TI $30\times30$), which achieves 80\% success rate on ViT-L/32.}

\begin{table*}[htbp!]
\caption{Transfer attack success rate (\%) comparison on ensemble surrogates. Using logits-averaged Res-50 and Inc-v3 as surrogate models, our MEF\textsubscript{H}/\textsubscript{F} variants outperform 12 gradient-stabilized attacks and two ensemble-specialized methods (SVRE~\cite{svre}, CWA~\cite{cwa}). The best results are \textbf{bold} and the second best are \underline{underlined}.}
\vspace{-15pt}
\label{tab:attack_an_ensemble_model}
\begin{center}
\begin{small}
\setlength{\extrarowheight}{0.15em}
\scalebox{0.7}{
\begin{tabular}{c|cccccccccccccc}
\hline
\multirow{2}{*}{Attack} & \multicolumn{14}{c}{\textbf{Res-50 + Inc-v3} $\Longrightarrow$} \\ & Res-101 & Inc-v4 & IncRes-v2 & Inc-v3$_{adv}$ & Inc-v3$_{ens3}$ & Inc-v3$_{ens4}$ & IncRes-v2$_{ens}$ & MobileNet & PNASNet-L & ViT-B/16 & ViT-L/32 & PiT-S & MLP-Mixer & ResMLP \\
\hline
MI~\cite{mi}
& 94.1
& 68.3
& 56.1
& 33.6
& 30.4
& 30.7
& 16.9
& 76.6
& 63.4
& 32.7
& 25.4
& 39.1
& 51.6
& 47.9
\\
NI~\cite{ni}
& 96.8
& 71.4
& 61.8
& 36.4
& 33.2
& 30.4
& 16.2
& 80.6
& 66.6
& 32.7
& 24.7
& 42.6
& 54.2
& 49.7
\\
PI~\cite{pi-emi}
& 98.0
& 78.2
& 68.8
& 38.4
& 34.7
& 34.5
& 19.2
& 83.5
& 72.9
& 35.1
& 25.8
& 47.7
& 54.6
& 54.4
\\
TPA~\cite{tpa}
& 90.3
& 63.6
& 54.1
& 27.8
& 20.8
& 20.1
& 12.4
& 72.1
& 60.3
& 29.6
& 25.1
& 39.1
& 50.5
& 43.7
\\
GNP~\cite{gnp}
& 92.8
& 73.4
& 66.9
& 37.1
& 32.1
& 30.8
& 17.1
& 76.9
& 66.5
& 32.5
& 25.7
& 45.9
& 52.8
& 51.0
\\
VMI~\cite{vmi-vni}
& 97.6
& 83.4
& 77.3
& 54.8
& 55.3
& 54.6
& 37.1
& 87.4
& 80.7
& 49.0
& 35.1
& 62.1
& 62.0
& 66.2
\\
VNI~\cite{vmi-vni}
& 98.9
& 86.5
& 80.3
& 58.5
& 57.2
& 56.4
& 37.8
& 90.1
& 82.5
& 50.2
& 35.7
& 65.2
& 62.3
& 68.4
\\
EMI~\cite{pi-emi}
& \underline{99.1}
& 90.9
& 83.2
& 51.9
& 46.6
& 44.6
& 25.7
& 93.1
& 85.9
& 45.1
& 32.8
& 62.8
& 61.8
& 69.3
\\
RAP~\cite{rap}
& 98.4
& 86.1
& 80.5
& 42.3
& 28.0
& 24.9
& 14.0
& 92.4
& 80.8
& 42.3
& 31.1
& 58.1
& 63.7
& 65.8
\\
APP~\cite{app}
& 97.8
& 88.3
& 84.6
& 64.2
& 63.1
& 63.3
& 47.1
& 88.9
& 84.5
& 54.2
& 42.6
& 67.1
& 65.2
& 73.8
\\
FEM~\cite{femi}
& 98.7
& 92.0
& 88.3
& 68.3
& 65.9
& 62.9
& 46.9
& 92.5
& 89.0
& 59.0
& 41.3
& 74.1
& 66.0
& 77.0
\\
SVRE~\cite{svre}
& 92.0
& 69.2
& 58.8
& 31.0
& 26.5
& 26.6
& 13.2
& 77.8
& 64.0
& 29.0
& 23.5
& 41.1
& 50.7
& 47.4
\\
CWA~\cite{cwa}
& 94.9
& 53.8
& 45.3
& 26.3
& 16.3
& 14.5
& 8.0
& 74.6
& 56.5
& 20.8
& 20.9
& 26.3
& 49.0
& 34.2
\\
PGN~\cite{pgn}
& 97.8
& 91.9
& 89.4
& 79.5
& 78.2
& 77.5
& 65.6
& 93.9
& 91.9
& 68.7
& 50.2
& 76.9
& 74.5
& 81.8
\\
\rev{ANDA~\cite{anda}} & \rev{92.9} & \rev{86.7} & \rev{85.1} & \rev{76.6} & \rev{73.6} & \rev{73.0} & \rev{63.5} & \rev{89.4} & \rev{86.3} & \rev{65.4} & \rev{46.9} & \rev{72.2} & \rev{70.6} & \rev{75.3} \\
\rev{FGSRA~\cite{fgsra}} & \rev{94.9} & \rev{90.3} & \rev{87.4} & \rev{78.9} & \rev{76.4} & \rev{75.2} & \rev{66.3} & \rev{92.6} & \rev{90.9} & \rev{66.5} & \rev{49.1} & \rev{76.1} & \rev{72.5} & \rev{81.2} \\
\rev{GI~\cite{gifgsm}} & \rev{92.8} & \rev{87.5} & \rev{84.8} & \rev{77.8} & \rev{74.7} & \rev{72.9} & \rev{63.5} & \rev{88.4} & \rev{88.0} & \rev{63.6} & \rev{46.6} & \rev{73.6} & \rev{71.9} & \rev{78.3} \\
\rev{MUMODIG~\cite{mumodig}} & \rev{95.5} & \rev{89.6} & \rev{86.0} & \rev{77.0} & \rev{76.0} & \rev{76.9} & \rev{64.5} & \rev{90.1} & \rev{90.2} & \rev{66.8} & \rev{48.4} & \rev{75.8} & \rev{73.9} & \rev{80.9} \\
\rev{GAA~\cite{gaa}} & \rev{92.7} & \rev{87.5} & \rev{85.8} & \rev{77.0} & \rev{74.3} & \rev{73.9} & \rev{61.6} & \rev{88.6} & \rev{88.7} & \rev{67.0} & \rev{47.9} & \rev{74.5} & \rev{71.6} & \rev{77.6} \\
\rev{FoolMix~\cite{foolmix}} & \rev{91.5} & \rev{87.2} & \rev{83.3} & \rev{75.6} & \rev{70.2} & \rev{73.0} & \rev{61.2} & \rev{87.2} & \rev{85.7} & \rev{62.0} & \rev{46.5} & \rev{72.0} & \rev{70.9} & \rev{75.7} \\
MEF\textsubscript{H}
& 98.9
& \underline{93.2}
& \underline{90.2}
& \underline{80.2}
& \underline{79.8}
& \underline{80.5}
& \underline{67.0}
& \underline{94.1}
& \underline{92.4}
& \underline{69.5}
& \underline{51.7}
& \underline{77.8}
& \underline{75.3}
& \underline{82.1}
\\
MEF\textsubscript{F}
& \textbf{99.4}
& \textbf{96.6}
& \textbf{95.1}
& \textbf{85.7}
& \textbf{83.3}
& \textbf{82.6}
& \textbf{70.4}
& \textbf{96.7}
& \textbf{95.1}
& \textbf{75.4}
& \textbf{55.2}
& \textbf{83.8}
& \textbf{80.4}
& \textbf{89.0}
\\
\hline
\end{tabular}
}
\end{small}
\end{center}
\vspace{-15pt}
\end{table*}
\subsection{Extended Analysis of Attacking Ensemble of Models}
\label{sec:exp_ensemble_supp}
\revminor{This section complements Section~\ref{sec:exp_ensemble_main} by providing the full comparison results on ensemble attacks. Table~\ref{tab:attack_an_ensemble_model} details the performance against 12 gradient stabilization attacks and two ensemble-specialized approaches (SVRE~\cite{svre}, CWA~\cite{cwa}).}

\rev{While MEF achieves SOTA performance overall, we note that MEF\textsubscript{H} marginally trails EMI on the structurally similar target ResNet-101 (by 0.2\%). This occurs because baselines like EMI tend to overfit the specific architectural features of the surrogate (ResNet-50 is part of the ensemble). While this benefits transfer to same-family models, it causes failure on heterogeneous targets (e.g., ViT-L/32). In contrast, MEF prioritizes architecture-agnostic flatness, achieving comprehensive dominance on complex targets (e.g., +45\% on ViT over EMI) by avoiding such architectural overfitting.}

\begin{figure*}[htbp!]
    \centering
    \rev{
    \includegraphics[width=0.75\linewidth]{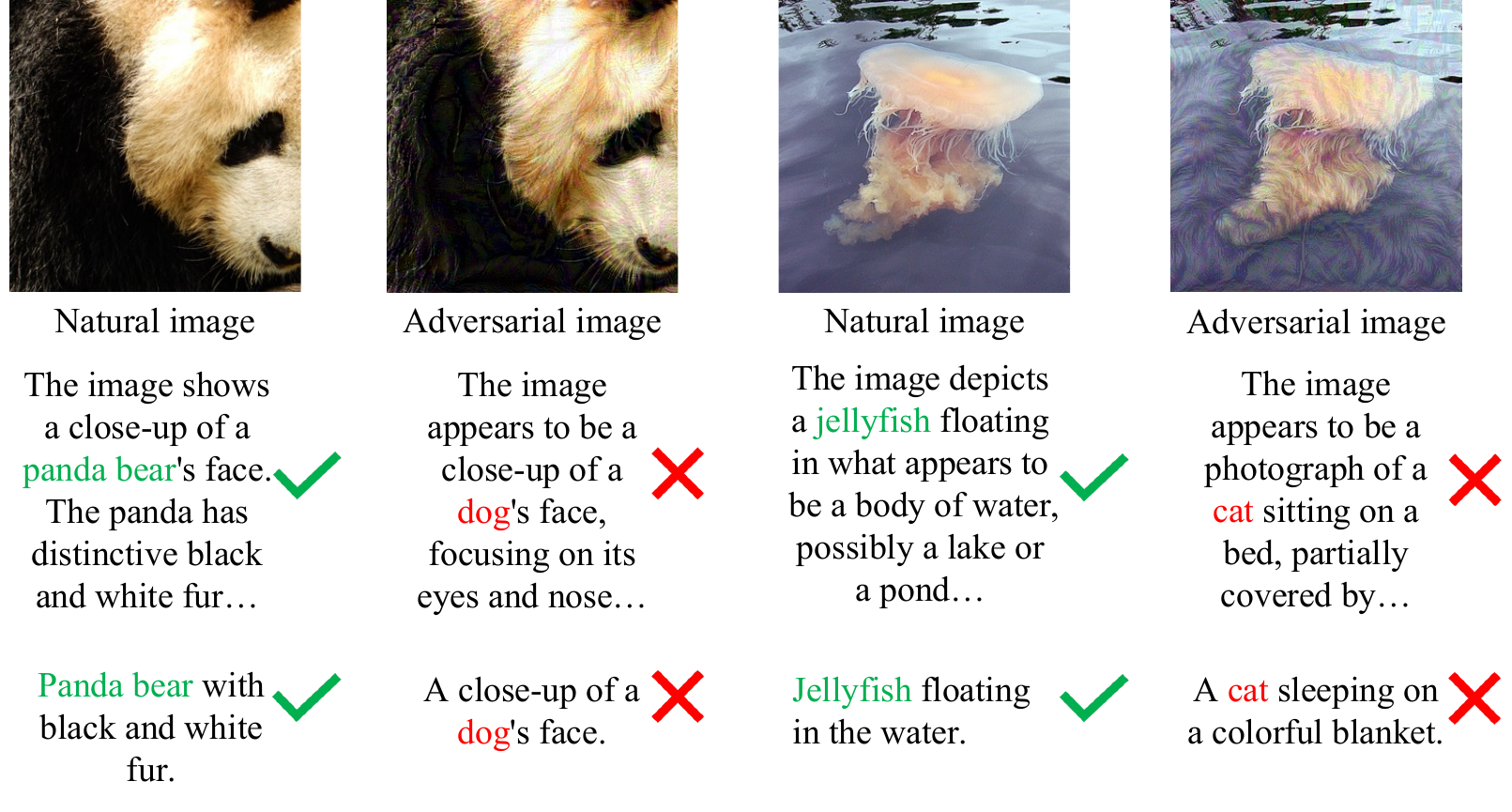} \\
    \vspace{10pt}
    \includegraphics[width=0.75\linewidth]{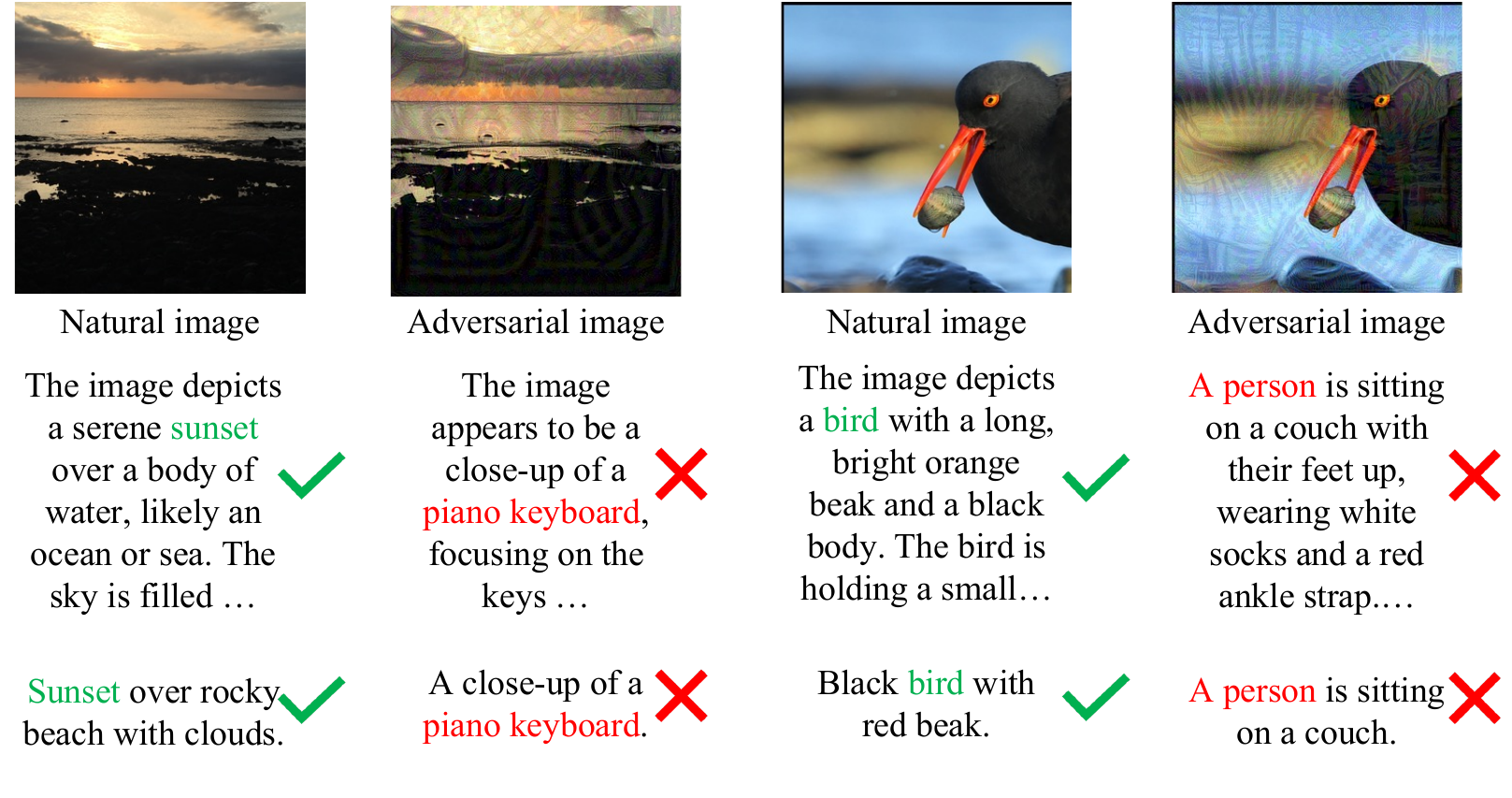} \\
    \vspace{10pt}
    \includegraphics[width=0.75\linewidth]{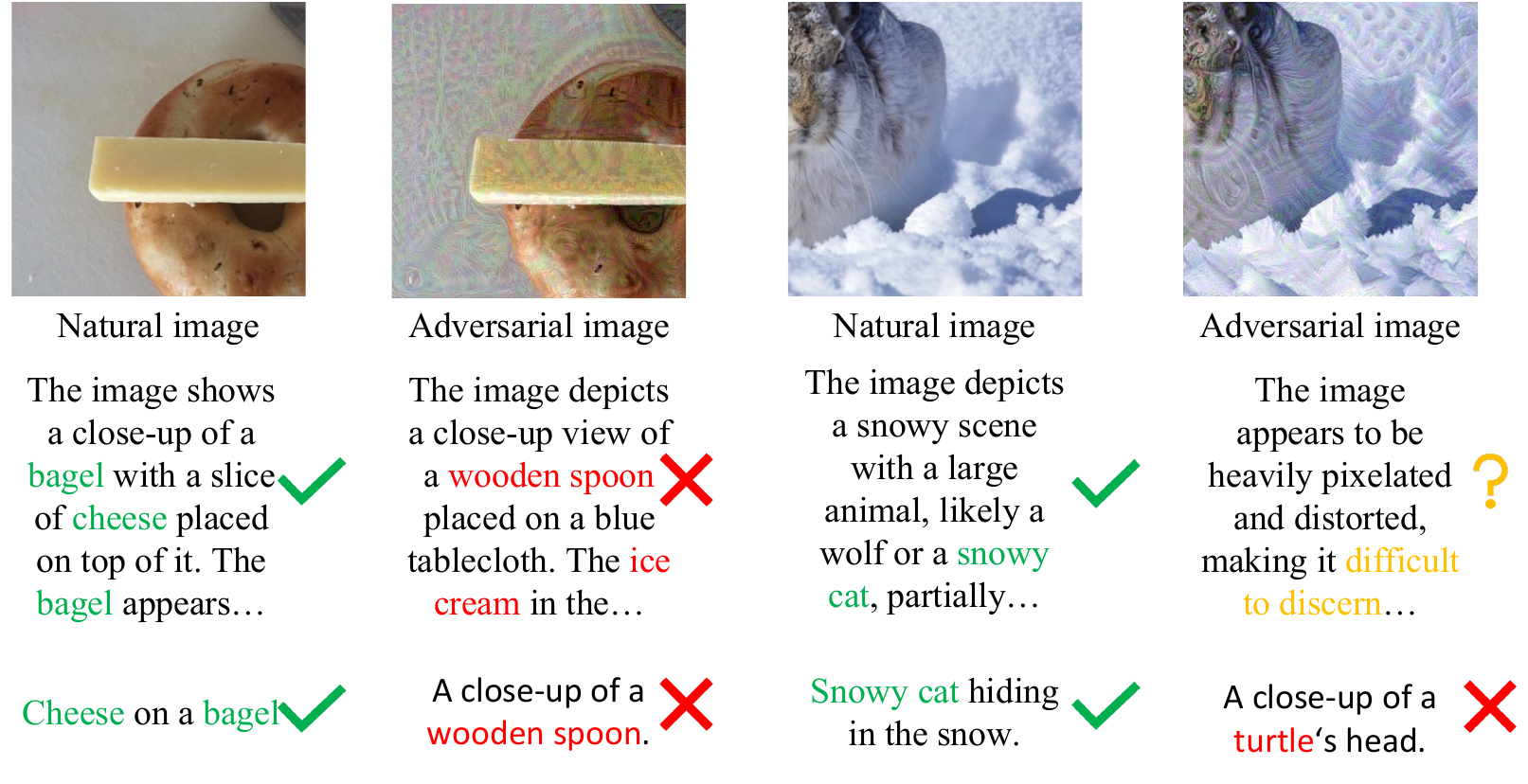} \\
    
    \vspace{-10pt}
    
    \caption{Qualitative visualization of adversarial transferability against \textbf{Qwen2.5-VL}. 
    Adversarial examples are generated on the surrogate model ResNet-50 using the MEF method and then transferred to the target Qwen2.5-VL model. 
    For each visual example, we display the model's responses to two different prompts to demonstrate the robustness of the attack. 
    The \textbf{upper text block} for each image corresponds to the prompt ``Describe the image.'', while the \textbf{lower text block} corresponds to the prompt ``Describe the image briefly.''. 
    The results show that MEF successfully misleads the model into generating incorrect semantic descriptions under both prompt settings.}
    \label{fig:qwen2.5_vl_viz}
    }
\end{figure*}
\subsection{Additional Qualitative Results on MLLMs}
\label{mllm_visual_exp}
\rev{
We evaluated the transferability on Qwen2.5-VL by generating adversarial examples on a ResNet-50 surrogate using MEF. To comprehensively assess the model's behavior, we employed two distinct prompts: the standard ``\textit{Describe the image.}'' and the concise ``\textit{Describe the image briefly.}''. The concise prompt was introduced to mitigate potential noise in CLIP-based metric calculations caused by the excessive length of standard descriptions.

As illustrated in Fig.~\ref{fig:qwen2.5_vl_viz}, MEF achieves strong semantic disruption. In the vast majority of cases, the perturbations cause the model to confidently hallucinate incorrect objects (e.g., misidentifying a \textit{bagel} as a \textit{wooden spoon}). We observed a rare phenomenon—occurring in only 5 out of 1,000 test samples—where the model initially refused to recognize content under the standard prompt, citing ``pixelation'' or ``distortion'' (e.g., the bottom-right example). However, when switched to the concise prompt, the model was forced to generate a specific prediction and was successfully misled into a semantic error (e.g., describing a ``turtle's head''), confirming the attack's efficacy even in these edge cases.
}

\begin{figure*}[htbp!]
    \centering
    \includegraphics[width=0.95\linewidth]{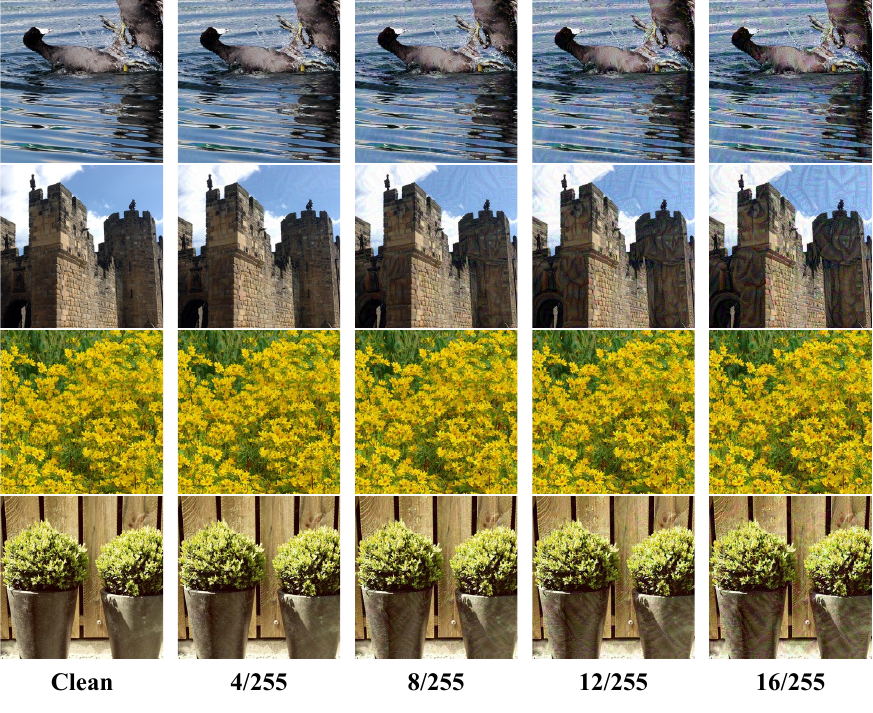}
    \caption{\rev{Qualitative visualization of MEF adversarial examples under varying perturbation budgets ($\epsilon$). From left to right: Clean Image, and Adversarial Examples with $\epsilon=4/255, 8/255, 12/255, 16/255$. While perturbations at $\epsilon=16/255$ may be visible in smooth regions (top row), they remain unobtrusive in textured areas (bottom rows). MEF's robust optimization allows it to remain effective even at lower, strictly imperceptible budgets (e.g., $\epsilon=4/255$).}}
    \label{fig:supp_epsilon_visual}
\end{figure*}
\subsection{Additional Qualitative Results on Perceptual Quality}
\label{sec:supp_visual}
\rev{
To further address concerns regarding the visual imperceptibility of $L_\infty$-constrained adversarial examples, we provide a detailed qualitative inspection of MEF-generated samples across varying perturbation budgets ($\epsilon \in \{4/255, 8/255, 12/255, 16/255\}$).

As illustrated in \textbf{Fig.~\ref{fig:supp_epsilon_visual}}, the perceptibility of perturbations is highly dependent on the image content. For images with complex textures or high-frequency details (e.g., the bottom two rows), the adversarial noise at $\epsilon=16/255$ remains largely imperceptible to the human eye due to the visual masking effect. Conversely, for images containing large smooth regions or plain backgrounds (e.g., the top row), artifacts become noticeable at the standard $\epsilon=16/255$ budget.

This observation underscores the trade-off between attack strength and strict imperceptibility. Crucially, as demonstrated in our parameter sensitivity analysis (Fig.~\ref{fig:epsilon_ablation}), MEF maintains a high transfer success rate even at stricter budgets (e.g., achieving $>40\%$ ASR at $\epsilon=4/255$, significantly outperforming baselines). This indicates that while $\epsilon=16/255$ serves as a standard benchmark for comparing transferability limits, MEF provides the flexibility to operate effectively in scenarios requiring strict stealthiness ($\epsilon \le 8/255$), where perturbations are visually undetectable.
}

\begin{figure*}[htbp!]
  \centering
  \vspace{-5pt}
  
  \subfloat[ResNet-50~\cite{resnet} \label{fig:flat_res50}]{
    \begin{minipage}[b]{0.25\textwidth}
      \centering
      \includegraphics[width=\linewidth]{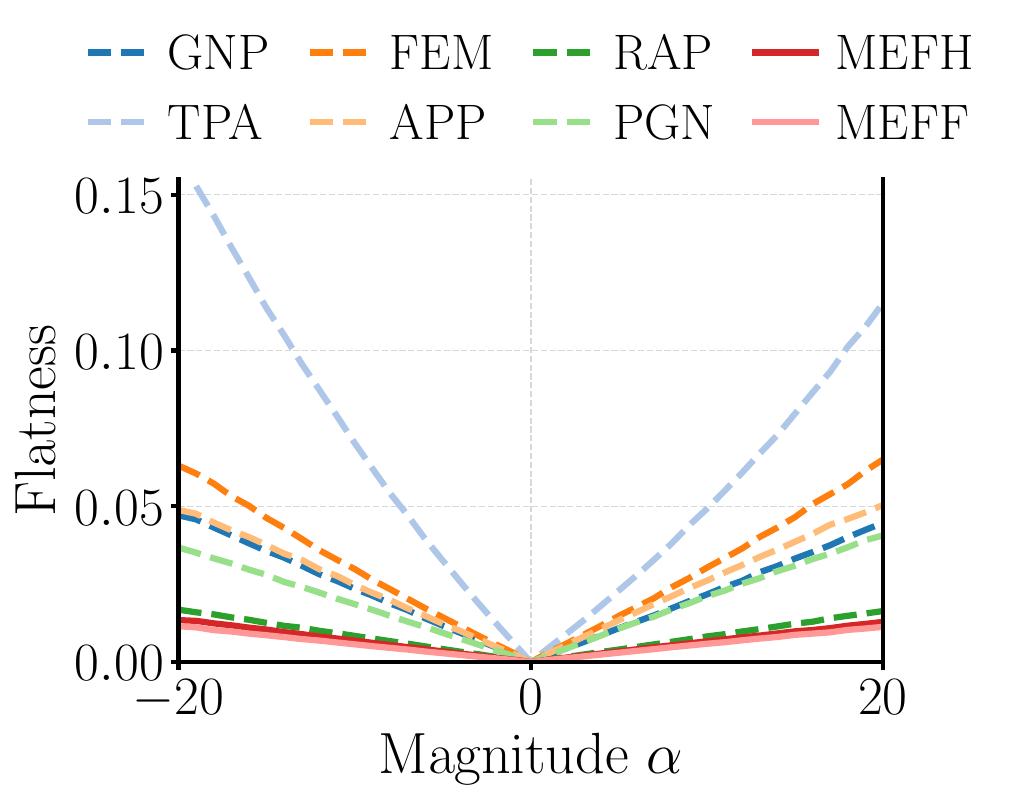}
    \end{minipage}
  }
  \hspace{-15pt}
  \subfloat[ResNet-101 \label{fig:flat_res101}]{
    \begin{minipage}[b]{0.25\textwidth}
      \centering
      \includegraphics[width=\linewidth]{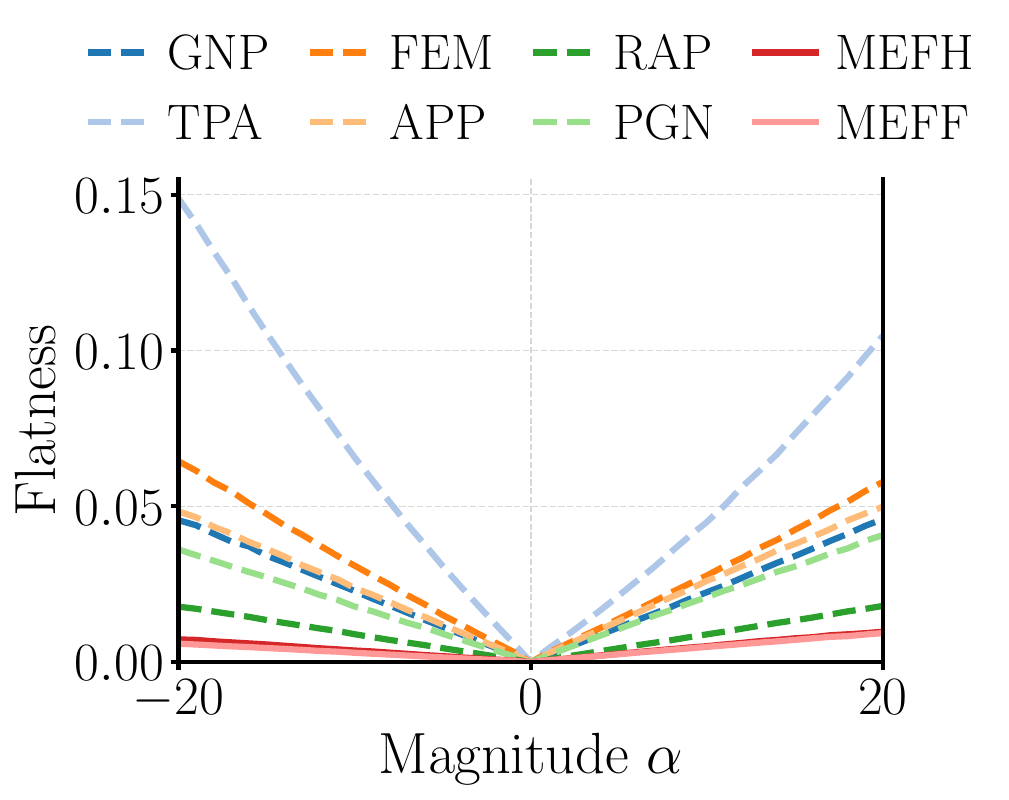}
    \end{minipage}
  }
  \hspace{-15pt}
  \subfloat[Inc-v3~\cite{incv3} \label{fig:flat_inc3}]{
    \begin{minipage}[b]{0.25\textwidth}
      \centering
      \includegraphics[width=\linewidth]{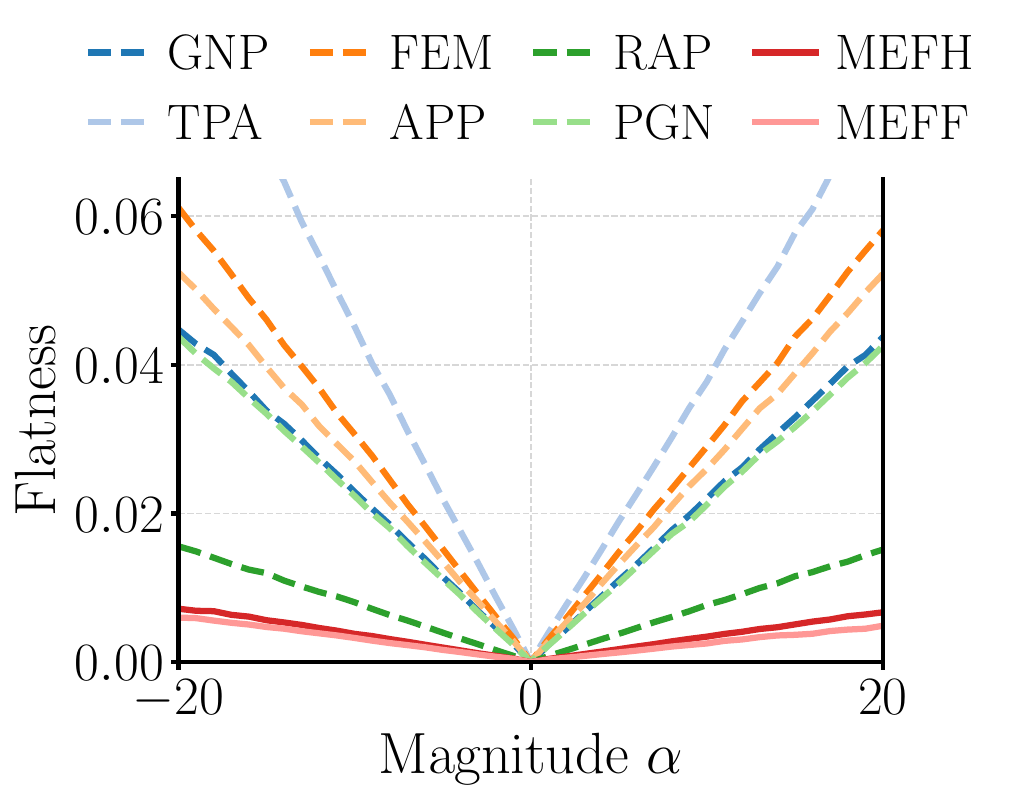}
    \end{minipage}
  }
  \hspace{-15pt}
  \subfloat[Inc-v4 \label{fig:flat_inc4}]{
    \begin{minipage}[b]{0.25\textwidth}
      \centering
      \includegraphics[width=\linewidth]{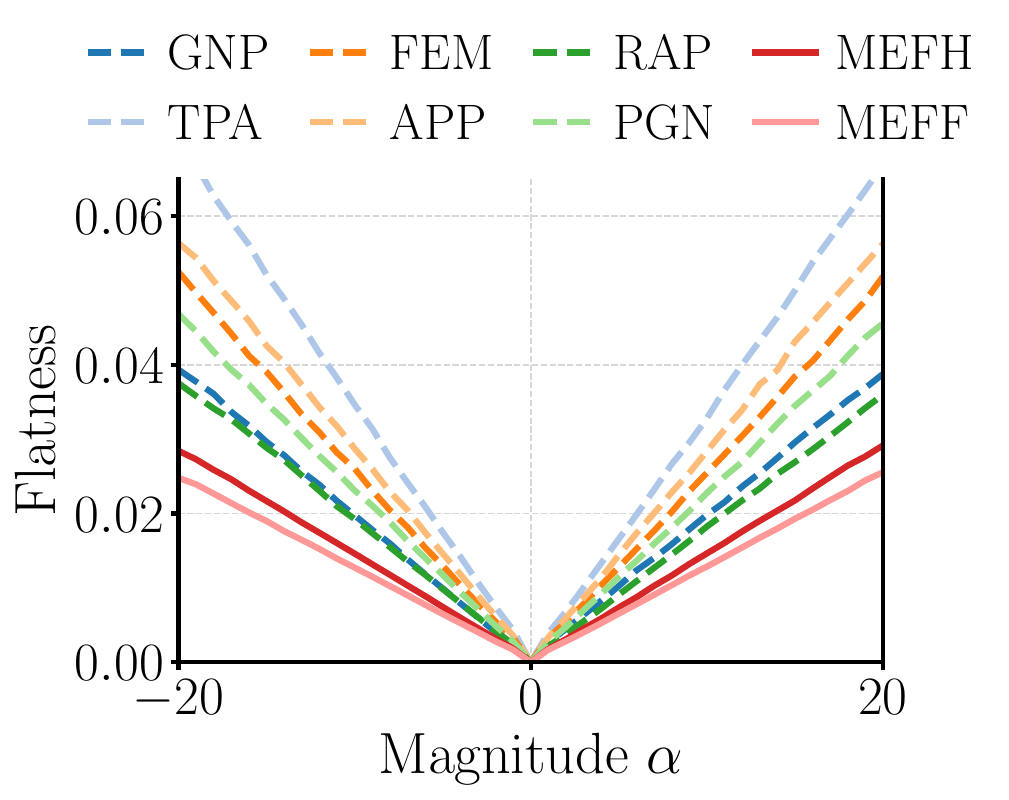}
    \end{minipage}
  }
  
  \caption{The flatness visualization of adversarial examples via $\ell_2$-constrained perturbation analysis. Evaluating MEF\textsubscript{H} and MEF\textsubscript{F} on Res-50~\cite{resnet}, Res-101~\cite{resnet}, Inc-v3~\cite{incv3} and Inc-v4~\cite{incv4} with random directional perturbations (magnitude range: $[-20,20]$).}
  \label{fig:all_flatness}
  
  \vspace{-5pt}
  
\end{figure*}
\subsection{Visualization of Landscape Flatness}
\label{sec:supp_flatness_vis}
\revminor{This section provides visual corroboration of the quantitative flatness analysis presented in the main text (Section~\ref{sec:exp_flatness_analysis}). We adopt the visualization framework from RAP~\cite{rap} to plot the loss landscape geometry. Specifically, we sample random directions from a Gaussian distribution and normalize them to an $\ell_2$-norm sphere to ensure uniform exploration. We then compute and plot the loss variations between the original and perturbed adversarial examples across perturbation magnitudes ranging from $-20$ to $20$. For robust visualization, each magnitude level employs 20 randomly sampled directions.}

\revminor{The results on Res-50~\cite{resnet} and Inc-v3~\cite{incv3} are illustrated in Figure~\ref{fig:all_flatness}. Consistent with the quantitative metrics in Table~\ref{tab:flatness_compare}, MEF\textsubscript{H} and MEF\textsubscript{F} exhibit the lowest loss variations and the smoothest loss surfaces compared to baselines. These visualizations intuitively verify that our Gradient Balancing strategy successfully prevents the optimization from falling into sharp, non-transferable local minima.}

\subsection{Additional Qualitative Results on Commercial Vision APIs}
\label{sec:supp_gcv}

\rev{
To provide intuitive visual evidence of the transferability of our method to real-world commercial systems, we conducted a qualitative evaluation using the official \href{https://cloud.google.com/vision/docs/drag-and-drop}{Google Cloud Vision API drag-and-drop interface}. We selected six pairs of clean images and their corresponding adversarial counterparts. All adversarial examples were generated on a ResNet-50 surrogate model using the proposed MEF attack under the standard perturbation budget $\epsilon=16/255$.

The visualization results are presented in Fig.~\ref{fig:gcv_visualization}. As shown in the screenshots, MEF successfully misleads the commercial black-box classifier into outputting incorrect labels with high confidence. For instance, in Case 1, a \textit{Dog} (confidence 96\%) is misclassified as a \textit{Calf} (confidence 60\%) after perturbation. Similarly, in Case 4, a \textit{Squirrel} is recognized as \textit{Linens} with 88\% confidence. These examples demonstrate severe semantic disruption, where the predicted labels shift to completely unrelated object categories (e.g., \textit{Zebra} $\to$ \textit{Marine Biology} in Case 6). This visual evidence confirms that the flat local minima identified by MEF on the surrogate model generalize effectively to disrupt the semantic understanding of production-level API services, posing a practical threat to real-world applications.
}
\begin{figure*}[htbp!]
    \centering
    \setlength{\tabcolsep}{0pt}
    
    \subfloat[Case 1: \textbf{Dog} $\to$ \textbf{Calf}]{
        \includegraphics[width=0.95\linewidth, height=7cm, keepaspectratio]{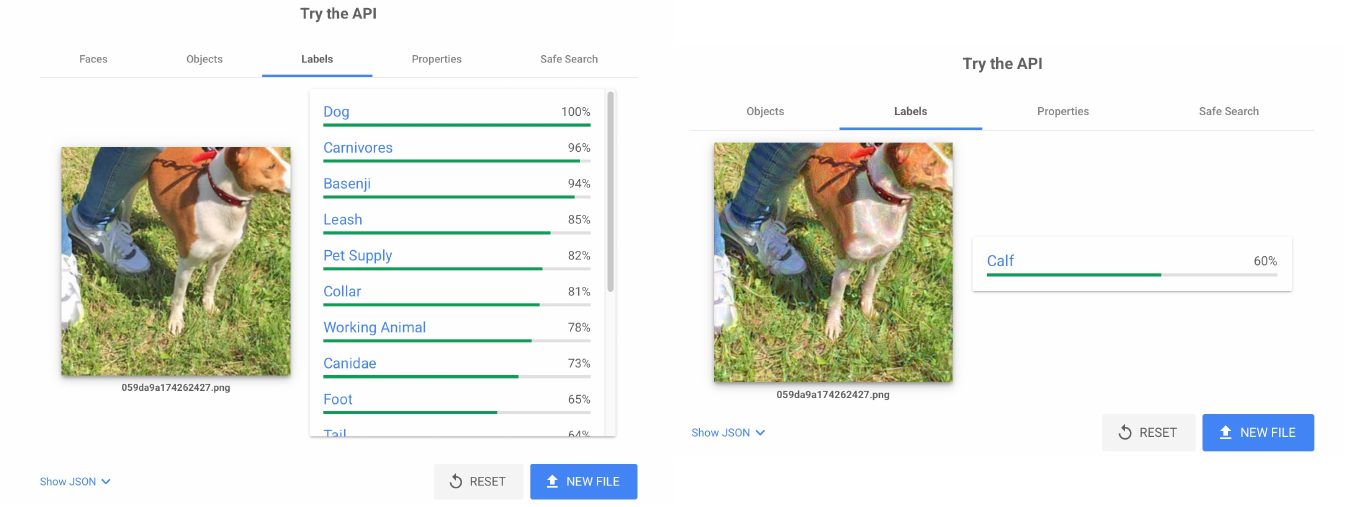}
    }
    
    \vspace{5pt}
    
    \subfloat[Case 2: \textbf{Dog} $\to$ \textbf{Temple}]{
        \includegraphics[width=0.95\linewidth, height=7cm, keepaspectratio]{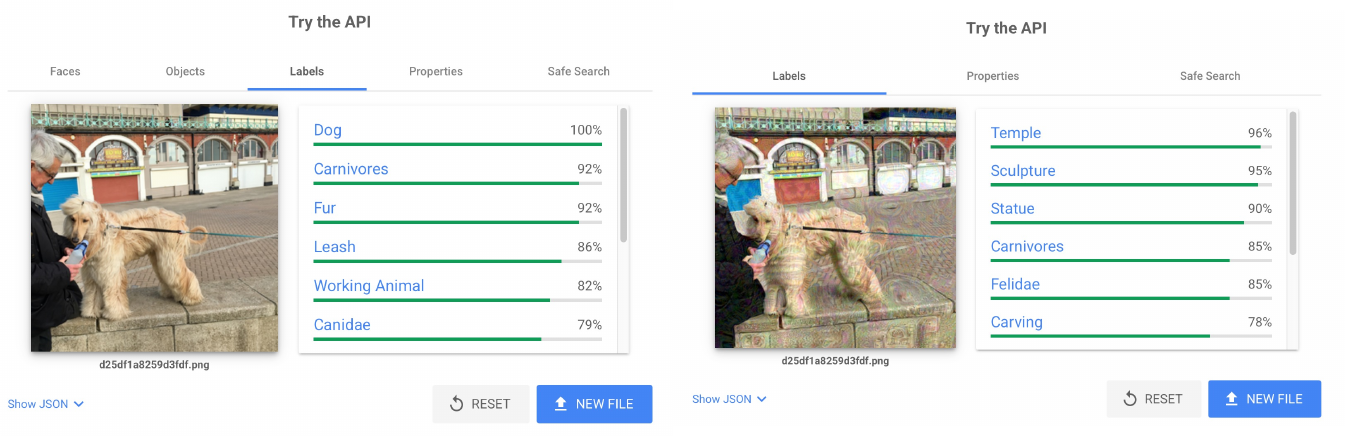}
    }
    
    \vspace{5pt}
    
    \subfloat[Case 3: \textbf{Dog} $\to$ \textbf{Sketch}]{
        \includegraphics[width=0.95\linewidth, height=7cm, keepaspectratio]{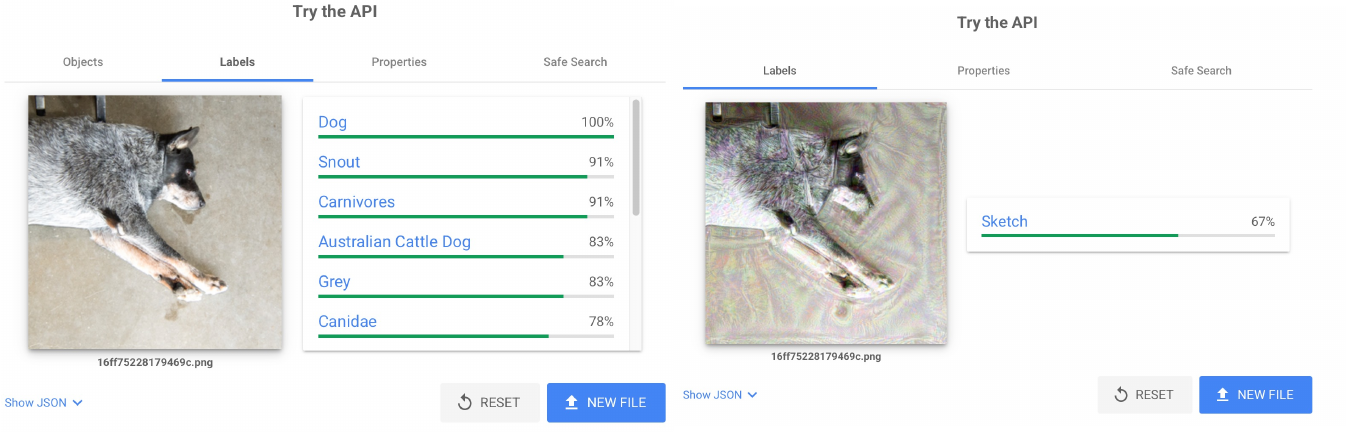}
    }

    \vspace{5pt}
    
    \caption{\textbf{Qualitative Visualization of Real-World Attacks on Google Cloud Vision API.} 
    We present six test cases generated by MEF on a ResNet-50 surrogate. 
    Each sub-figure displays a screenshot of the Google Vision API interface, showing the \textbf{Clean Image results (Left)} and the \textbf{Adversarial Image results (Right)}. 
    \textbf{(a) Cases 1-3:} Successful semantic subversion on natural images (Animals and Scenes). 
    (Figure continued on next page)}
    \label{fig:gcv_visualization}
\end{figure*}

\begin{figure*}[htbp!]
    \centering
    \ContinuedFloat
    \setlength{\tabcolsep}{0pt}
    
    \subfloat[Case 4: \textbf{Squirrel} $\to$ \textbf{Linens}]{
        \includegraphics[width=0.95\linewidth, height=7cm, keepaspectratio]{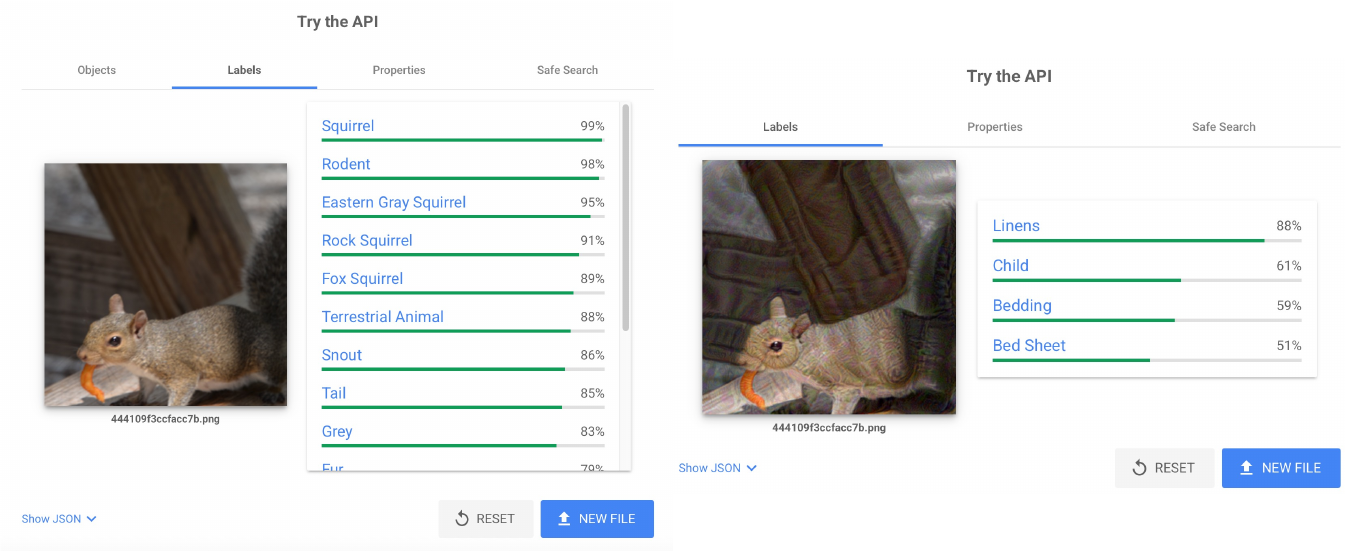}
    }
    
    \vspace{5pt}
    
    \subfloat[Case 5: \textbf{Bicycle Helmet} $\to$ \textbf{Net}]{
        \includegraphics[width=0.95\linewidth, height=7cm, keepaspectratio]{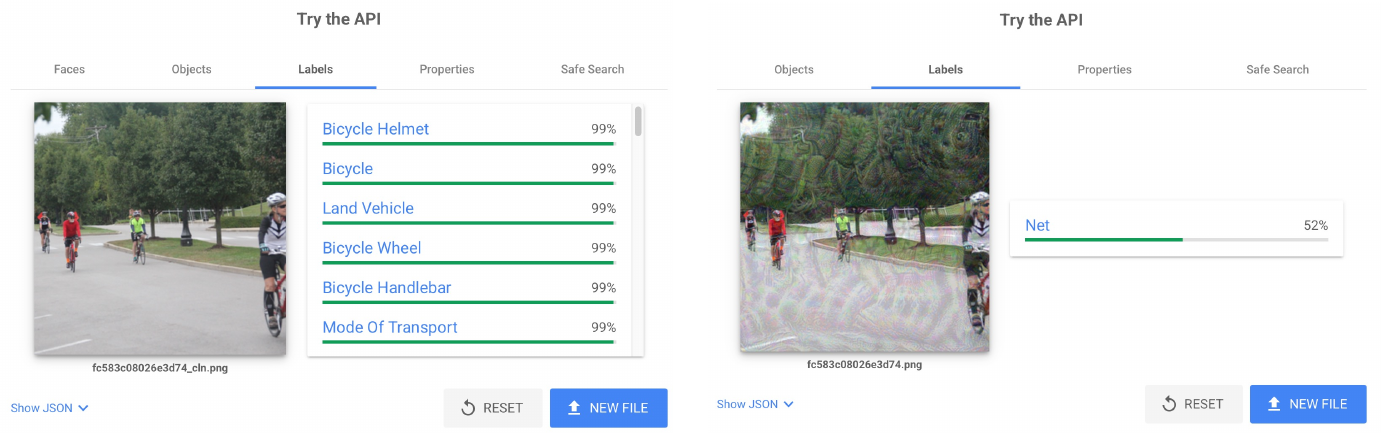}
    }
    
    \vspace{5pt}
    
    \subfloat[Case 6: \textbf{Zebra} $\to$ \textbf{Marine Biology}]{
        \includegraphics[width=0.95\linewidth, height=7cm, keepaspectratio]{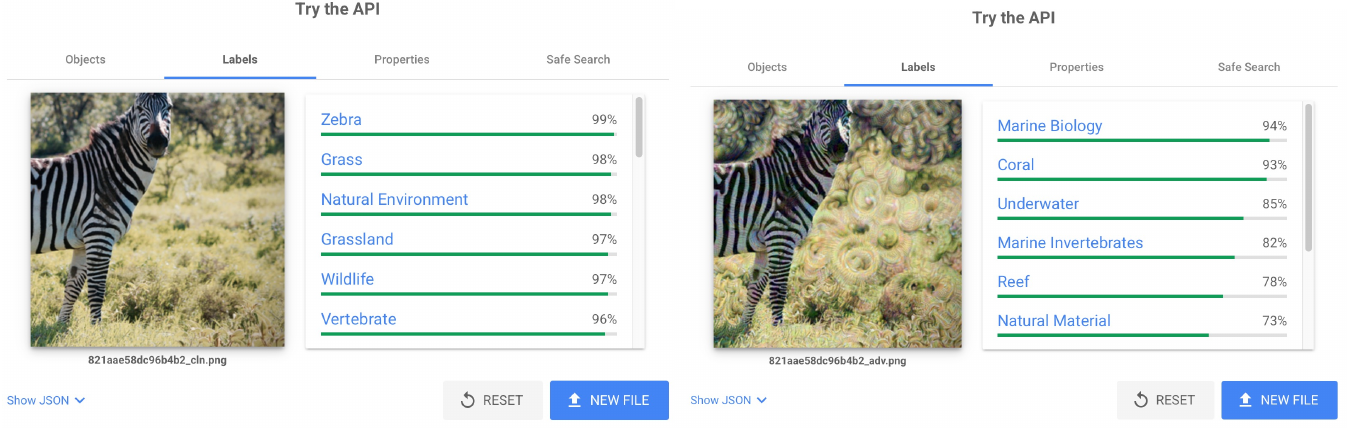}
    }

    \vspace{5pt}
    
    \caption[]{Qualitative Visualization of Real-World Attacks on Google Cloud Vision API (Continued). 
    \textbf{(b) Cases 4-6:} Additional comparisons demonstrating robustness against diverse object categories and textures. The screenshots validate that MEF successfully misleads the commercial black-box classifier into outputting high-confidence incorrect labels (e.g., misclassifying a Bagel as a Spoon in Case e), demonstrating practical threats in realistic API scenarios.}
\end{figure*}

\end{document}